\definecolor{darkgreen}{rgb}{0.0,0,0.9}
\let\chapter\section
\def\ds@whichfont{dsrom}
\DeclareMathAlphabet{\mathds}{U}{\ds@whichfont}{m}{n}
\def\hbeta{\widehat{\beta}}
\def\tX{\widetilde{X}}
\def\tv{\tilde{v}}
\def\tX{\widetilde{X}}
\def\tx{\widetilde{x}}
\def\cF{{\cal F}}
\def\cA{{\cal A}}
\def\cH{{\cal H}}
\def\bcH{\bar{\cal H}}
\def\cE{{\cal E}}
\def\cD{{\cal D}}
\def\tp{{\tilde{p}}}
\def\bz{\mathbf{z}}
\def\tSigma{\tilde{\Sigma}}
\def\dis{{\sf d}}
\def\naturals{{\mathbb N}}
\def\reals{{\mathbb R}}
\def\prob{{\mathbb P}}
\def\E{{\mathbb E}}
\def\Var{{\rm Var}}
\def\tC{\widetilde{C}}
\def\cB{{\cal B}}
\def\tZ{\tilde{Z}}
\def\L0{{L_i}}
\def\de{{\rm d}}
\def\<{\langle}
\def\>{\rangle}
\def\hth{\widehat{\theta}}
\def\hSigma{\widehat{\Sigma}}
\def\supp{{\rm supp}}
\def\F{{\sf F}}
\def\ind{{\mathbb I}}
\def\tz{\tilde{z}}
\def\F{{\sf F}}
\def\normal{{\sf N}}
\def\tX{{\tilde{X}}}
\def\ttheta{{\tilde{\theta}}}
\def\sT{{\sf T}}
\def\id{{\rm I}}
\def\v*{v_i}
\def\T*{T_i}
\def\u*{u_i}
\def\F*{F_i}
\definecolor{olivegreen}{rgb}{0,0.6,0.4}
\def\H{\mathcal{H}}
\def\cH{{\mathcal{H}}}
\def\htheta{\widehat{\theta}}
\def\tx{\tilde{x}}
\def\cL{\mathcal{L}}
\def\tilth{\tilde{\theta}}
\def\th{{\theta}}
\def\hth{{\widehat{\theta}}}
\def\tth{{\theta_0}}
\def\cX{\mathcal{X}}
\def\Reg{{\sf Regret}}
\def\popt{p^*}
\def\pX{\mathbb{P}_X}
\def\l1u{W}
\def\f{\psi}
\def\p{\mu}
\def\hp{\widehat{\mu}}
\def\tila{\tilde{\alpha}}
\def\tSigma{\widetilde{\Sigma}}
\def\tX{\widetilde{X}}
\def\tilp{\tilde{\p}}
\newcommand{\ajcomment}[1]{}
\newcommand{\labitem}[2]{%
\def\@itemlabel{\text{#1}}
\item
\def\@currentlabel{#1}\label{#2}}
\DeclareMathAlphabet{\mathpzc}{OT1}{pzc}{m}{it}
\DeclareMathAlphabet{\mathpzc}{OT1}{pzc}{m}{it}
\newtheorem{proposition}{Proposition}[section]
\newtheorem{lemma}[proposition]{Lemma}
\newtheorem{definition}[proposition]{Definition}
\newtheorem{coro}[proposition]{Corollary}
\newtheorem{theorem}[proposition]{Theorem}
\theoremstyle{definition}
\newtheorem{assumption}[proposition]{Assumption}
\title{Dynamic Pricing in High-dimensions}
\author{Adel Javanmard\thanks{Department of Data Sciences and Operations, Marshall School of Business, University of Southern California
Email: {\{ajavanma,nazerzad\}@usc.edu}
}\;\;\;\;
Hamid Nazerzadeh${}^{\ast}$
}
\begin{document}

\maketitle

\begin{abstract}
We study the pricing problem faced by a firm that sells a large number of products, described via a wide range of features, to customers that arrive over time.
Customers independently make purchasing decisions according to a general choice model that includes products features and customers' characteristics, encoded as $d$-dimensional numerical vectors, as well as the price offered.
The parameters of the choice model are a priori unknown to the firm, but can be learned as the (binary-valued) sales data accrues over time.  The firm's objective is to minimize the regret, i.e., the expected revenue loss against a clairvoyant policy that knows the parameters of the choice model in advance, and always offers the revenue-maximizing price.  
This setting is motivated in part by the prevalence of online marketplaces that allow for real-time pricing.

We assume a structured choice model, parameters of which depend on $s_0$ out of the $d$ product features. 
We propose a dynamic policy, called Regularized Maximum Likelihood Pricing (RMLP) that leverages the (sparsity) structure of the high-dimensional model and obtains a logarithmic regret in $T$.
More specifically, the regret of our algorithm is of $O(s_0 \log d \cdot \log T)$. Furthermore, we show that no policy can obtain regret better than $O(s_0 (\log d + \log T))$.

\end{abstract}


\section{Introduction} \label{sec:intro}
%

A central challenge in revenue management is determining the optimal pricing policy when there is uncertainty about customers' willingness to pay. Due to its importance, this problem has been studied extensively~\cite{kleinberg2003value,BesbesZ09,badanidiyuru2013bandits,WangDY14,broder2012dynamic,keskin2014dynamic,debBoerZwart2014,cohen2016feature}. 
Most of these models are built around the following classic setting: customers arrive over time; the seller posts a price for each customer; if the customer's valuation is above the posted price, a sale occurs and the seller collects a revenue in the amount of the posted price; otherwise, no sale occurs and no revenue is generated. Based on this and the previous feedbacks, the seller updates the posted price. Therefore, the seller is involved in the realm of exploration-exploitation as he needs to choose between learning about the valuations and exploiting what has been learned so far to collect revenue.

In this work, we consider a setting with a large number of products which are defined via a wide range of features. The valuations are given by $v(\theta,x)$ with $x$ being the (observable) feature vectors of products and $\theta_0$ {representing the customer's characteristics (true parameters of the choice model, which is initially unknown to the seller, cf.~\cite{amin2014repeated,cohen2016feature}.) }. An important special case of this setting is the linear model in which $$v(\theta,x) = \theta_0\cdot x+ \alpha_0 + z\,,$$ where $z$ captures the idiosyncratic noise in valuations and {$\alpha_0$ is an unknown intercept}.

Our setting is motivated in part by applications in online marketplaces. For instance, a company such as Airbnb recommends prices to hosts based on many features including the space (number of rooms, beds, bathrooms, etc.), amenities (AC, WiFi, washer, parking, etc.), the location (accessibility to public transportation, walk score of the neighborhood, etc.), house rules (pet-friendly, non-smoking, etc.), as well as the prediction of the demand which itself depends on many factors including the date, events in the area, availability and prices of near-by hotels, etc.~\cite{Airbnb}. Therefore, the vector describing each property can have hundreds of features. 
Another important application comes from online advertising. Online publishers set the (reserve) price   of ads based on many features including user's demographic, browsing history, the context of the webpage, the size and location of the ad on the page, etc.

In this work, we propose {\em Regularized Maximum Likelihood Pricing} (RMLP) policy for dynamic pricing in high-dimensional environments.
As suggested by its name, the policy uses maximum likelihood method to estimate the true parameters of the choice model. In addition, using an ($\ell_1$-norm) regularizer, our policy exploits the structure of the optimal solution; namely, the performance of the RMLP policy significantly improves if the valuations are essentially determined by a small subset of features. 
More formally, the difference between the revenue obtained by our policy and the benchmark policy that knows in advance the true parameters of the {choice model}, $\p_0= (\tth,\alpha_0)$, is bounded by $O\big(s_0 \log d \cdot \log T\big)$, where $T$, $d$, and $s_0$ respectively denote the length of the horizon, number of the features, and sparsity (i.e., number of non-zero elements of $\p_0$). We show that our results are tight up to a logarithmic factor. Namely, no policy can obtain regret better than $O\big(s_0 (\log d + \log T)\big)$.

We point out that our results can be applied to applications where the features' dimensions are larger than the time horizon of interest. A powerful pricing policy for these applications should obtain regret that scales gracefully with the dimension. Note that in general, little can be learned about the model parameters $\p_0$ if $T<d$, because the number of degrees of freedom $d$ exceeds the number of observations $T$, and therefore, any estimator can be arbitrary erroneous. However, when there is prior knowledge about the structure of unknown parameter $\p_0$, (e.g., sparsity), then accurate estimations are attainable even when $T<d$. 

%
\subsection{Organization} The rest of the paper is organized as follows: In the remaining part of the introduction, we discuss how our work is positioned with respect to the literature and highlight our contributions. In Section~\ref{sec:model}, we formally present our model and discuss the technical assumptions and the benchmark policy. The RMLP policy is presented in Section~\ref{sec:pricing_alg}, followed by its analysis in Section~\ref{sec:regret}. We provide in Section~\ref{sec:lower-bound}, a bound on the performance of any dynamic pricing policy that does not know the choice model in advance. In Section~\ref{sec:general}, we generalize the RMLP policy to non-linear valuations functions. The proofs are relegated to the appendix.

\subsection{Related Work}
Our work contributes to literature on dynamic pricing as well as high dimensional statistics. In the following, we briefly overview the work closest to ours in these contexts.
\begin{description}

\item{{\bf Dynamic Pricing and Learning.}}
The literature on dynamic pricing and learning has been growing over the past few years, motivated in part by the advances in big data technology that allow firms to easily collect and utilize information. We briefly discuss some of the recent lines of research in this literature. We refer to \cite{den2015survey} for an excellent survey on this topic.

\begin{itemize}
\item{\emph{Parametric Approach.}} 
A natural approach to capture uncertainty about the customers' valuations is to model the uncertainty using a small number of parameters, and then estimate those parameters using classical statistical methods such as maximum likelihood~\cite{broder2012dynamic,den2013simultaneously,debBoerZwart2014} or least square estimation~\cite{goldenshluger2013linear,keskin2014SCV,bastani2016decision}. 
Our work is similar to this line of work, in that we assume a parametric model for customer's valuations and apply the maximum likelihood method using the randomness of the idiosyncratic noise in valuations. However, the parameter vector $\theta$ is high-dimensional, whose dimension $d$ (that can even exceed the time horizon of interest $T$). We use \emph{regularized} maximum-likelihood in order to promote sparsity structure in the estimated parameter. Further, our pricing policy has an episodic theme which makes the posted prices $p_t$ in each episode independent of the idiosyncratic noise in valuations, $z_t$, in that episode. This is in contrast to other policies based on maximum-likelihood, such as {\sf MLE-GREEDY}~\cite{broder2012dynamic}, or greedy iterative least square (GILS)~\cite{keskin2014SCV,debBoerZwart2014,qiang2016dynamic} that use the entire history of observations to update the estimate for the model parameters at each step.



\item{\emph{Bayesian Approach.}}
 One of the earliest work on Bayesian parametric approach in this context is by \cite{rothschild1974} who consider a Bayesian framework where the firm can choose from two prices with unknown demand and show that (myopic) Bayesian policies may lead to ``incomplete learning." However, carefully designed variations of the myopic policies can (optimally) learn the optimal price~\cite{harrison2012bayesian}; see also~\cite{keller1999optimal,araman2009dynamic,farias2010dynamic,keskin2014dynamic}.

\item{\emph{Non-Parametric models.}} 
An early work in non-parametric setting is by \cite{kleinberg2003value}. They model the dynamic pricing problem as a multi-armed bandit (MAB) where each arm corresponds to a (discretized) posted price. They propose an $O(\sqrt{T})$-algorithm where $T$ is the length of the horizon. 
Similar results have been obtained in more general settings~\cite{badanidiyuru2013bandits,AgrawalDevanur14} including setting with inventory constraints~\cite{BesbesZ09,BabaioffLimitedSupply2012,WangDY14}.

\item{\emph{Feature-based Models.}} 
Recent papers on dynamic pricing consider models with features/covariates. 
\cite{amin2014repeated}, in a model similar to ours, present an algorithm that obtains regret $O(T^{2/3})$; they also study dynamic incentive compatibility in repeated auctions. 
Another closely related work to ours is by \cite{cohen2016feature}. Their model differs from ours in two main aspects: $i)$ their model is deterministic (no idiosyncratic noise) $ii)$ the arrivals (of features vectors) is modeled as adversarial. They propose a clever binary-search approach using the Ellipsoid method which obtains regret of $O(d^2 \log(T/d))$. 
\cite{qiang2016dynamic} study a model where the seller can observe the demand itself, not a binary signal as in our setting. They show that a myopic policy based on least-square estimations can obtain a logarithmic regret.
To the extent of our knowledge, ours is the first work that highlights the role of structure/sparsity in dynamic pricing.
  
\cite{bastani2016decision} study a multi-armed bandit setting, with discrete arms, and high-dimensional covariates, generalizing results of \cite{goldenshluger2013linear}. \cite{bastani2016decision} present an algorithm, using a LASSO estimator, that obtains regret $O\left(K (\log T+\log d)^2\right)$ where $K$ denotes the number of arms. In contrast, our setting can be interpreted as a multi-armed bandit with continuous arms in a high dimensional space.

\end{itemize}


\item{{\bf High Dimensional Statistics.}} There has been a great deal of work on regularized estimator under the high-dimensional scaling; see e.g.~\cite{van2008high}. 
%
%
%
%
Closer to the spirit of our work is the problem of 1-bit compressed sensing~\cite{plan2013one,bhaskar20151}. 
In this problem, linear measurements are observed for an unknown parameter of interest but only the sign of these measurements are observed. Note that in our problem, seller is involved in both the learning task and also the policy design. Specifically, he should decide on the prices, which directly affect collected revenue and also indirectly influence the difficulty of the learning task. The market values are then compared with the posted prices, in contrast to 1-bit compressed sensing where the measurements are compared with zero (sign information).
In addition, the pricing problem has an online nature while the 1-bit compressed sensing is mostly studied for offline setting. Finally, note that 
prices are set based on customer's purchase behavior, and hence introduce dependency among the collected information about the model parameters.   
\end{description}


\subsection{Notations}
{For a vector $v$, $\supp(v)$ represents
the positions of nonzero entries of $v$.
Further, for a vector $v$ and a subset $J$, 
$v_J$ is the restriction of $v$ to indices in $J$.
We write $\|v\|_p$ for the standard $\ell_p$ norm of a vector $v$, i.e., $\|v\|_p = (\sum_i |v_i|^p)^{1/p}$
and $\|v\|_0$ for the umber of nonzero entries of $v$.  If the subscript $p$ is omitted, it should be deemed as $\ell_2$ norm.
For two vectors $a, b\in \reals^d$, the notation $a\cdot b = \sum_{i=1}^d a_i b_i$ represents the standard inner product.
For two
functions $f(n)$ and $g(n)$, the notation $f(n) =O( g(n) )$ means that $f$ is bounded above by $g$ asymptotically,
namely, $f(n) \le C g(n)$ for some fixed positive constant $C>0$. 
Throughout, $\phi(x) = e^{-x^2/2}/\sqrt{2 \pi}$ is the Gaussian density and $\Phi(x) \equiv\int_{-\infty}^x
\phi(u) \de u$ is the Gaussian distribution.}

%
%


\section{Choice model} \label{sec:model}
We consider a seller, who has a product for sale in each period $t=1,2,\cdots,T$, where $T$ denotes the length of the horizon and may be unknown the to the seller.
Each product is represented by an {\em observable} vector of features (covariates) $x_t \in \cX \subseteq \reals^d$. Products may vary across periods and we assume that feature vectors $x_t$ are sampled independently from a fixed, but a priori {\em unknown}, distribution $\pX$, supported on a bounded set $\cX$.

The product at time $t$ has a market value $v_t = v(x_t)$, which is {\em not observed} by the seller and function $v$ is (a priori) unknown.
At each period $t$, the seller posts a price $p_t$. If $p_t\le v_t$, a sale occurs, and the seller collects revenue $p_t$. If the price is set higher than the market value, $p_t>v_t$, no sale occurs and no revenue is obtained. 
The goal of the seller is to design a pricing policy that maximizes the collected revenue.

We first assume that the market value of a product is a linear function of its covariates, namely 
\begin{align}\label{eq:model}
v(x_t) = \tth\cdot x_t+\alpha_0 + z_t\,,
\end{align}
where $a\cdot b$ denotes the inner product of vectors $a$ and $b$. Here, $\{z_t\}_{t\ge 1}$ are idiosyncratic  shocks, referred to as noise, which are drawn independently and identically from a distribution with mean zero and cumulative function $F$, with density $f(x) = F'(x)$,~cf.~\cite{keskin2014dynamic}.
The noise can account for the features that are not measured. 
We generalize our model to non-linear valuation functions in Section~\ref{sec:general}.

Parameter $\tth$ is a prior unknown to seller. Therefore, the seller is involved in the realm of exploration-exploitation as he needs to choose between learning $\tth$ and exploiting what has been learned so far to collect revenue. 

{Henceforth, we let $\p_0 = (\tth,\alpha_0) \in \reals^{d+1}$ denote the true model parameters and also define the augmented feature vectors $\tx_t = (x_t,1)$.}

Let $y_t$ be the response variable that indicates whether a sale has occurred at period $t$:
\begin{align}\label{mymodel}
y_t = \begin{cases}
+1&\text{ if }v_t \ge p_t\,,\\
-1&\text{ if }v_t <p_t\,.
\end{cases}
\end{align}
Note that the above model can be represented as the following probabilistic model:
\begin{align}\label{eq:prob-model}
y_t = \begin{cases}
+1&\text{ with probability }\, 1- F\left(p_t-\p_0\cdot \tx_t\right)\,,\\
-1&\text{ with probability }\, F\left(p_t- \p_0\cdot \tx_t\right)
\end{cases}
\end{align}

Our proposed algorithm exploits the structure (sparsity) of the feature space to improve its performance. 
To this aim, let $s_0$ denote the number of nonzero coordinates of $\tth$, i.e., $s_0  =  \|\p_0\|_0 = \sum_{j=1}^d \ind(\p_{0j}\neq 0)$. We remark  that $s_0$ is a priori unknown to the seller.


\subsection{Technical assumptions}
To simplify the presentation, we assume that $\|x_t\|_\infty \le 1$, for all $x_t\in \cX$, and  $\|\p_0\|_1\le \l1u$ for a known constant $\l1u$, where for a vector $u = (u_1,\dotsc, u_d)$, $\|u\|_\infty= \max_{i\in[d]}|u_i|$ denotes the maximum absolute value of its entries and $\|u\|_1 = 
\sum_{i=1}^d |u_i|$.
We denote by $\Omega$ the set of feasible parameters, i.e., 
$$\Omega = \Big\{\p \in \reals^{d+1}:  \|\p\|_0 \le s_0\,, \,\, \|\p\|_1\le \l1u\Big\}\,.$$

We also make the following assumption on the distribution of noise $F$.
\begin{assumption}\label{ass1}
The function $F(v)$ is strictly increasing. Further, $F(v)$ and $1-F(v)$ are log-concave in $v$.
\end{assumption}

Log-concavity is a widely-used assumption in the economics literature~\cite{bagnoli2005log}.
Note that if the density $f$ is symmetric and the distribution $F$ is log-concave, then $1-F$ is also log-concave.
Assumption~\ref{ass1} is satisfied by several common probability distributions including normal, uniform, Laplace, exponential, and logistic.
Note that the cumulative distribution function of all log-concave densities is also log-concave~\cite{boyd2004convex}.

Our second assumption is on the product feature vectors.

\begin{assumption}\label{ass2}
{Product feature vectors are generated independently from a probability distribution $\pX$ with a bounded support $\cX\in \reals^d$}. We further assume that $\E(x_t)$ is normalized to zero\footnote{This normalization does not imply any restriction because if $\E(x_t)\neq 0$, then it can be absorbed in the intercept term $\alpha_0$. More precisely, we consider model with intercept parameter $\tilde{\alpha_0} = \alpha_0 + \theta_0\cdot \E(x_t)$.} and denoting by $\Sigma = \E(x_tx_t^\sT)$ the covariance matrix of $\{x_t\}$, we assume that $\Sigma$ is a positive definite matrix. Namely, all of its singular values are bounded from below by a constant $C_{\min}> 0$. We also denote the maximum eigenvalue of $\Sigma$ by $C_{\max}$.
\end{assumption}

The above assumption holds for many common probability distributions, such as uniform, truncated normal, and in general truncated version of many more distributions. Generally, if $\pX$ is bounded below from zero on an open set around the origin, then it has a positive definite covariance matrix. 
{Let us stress that we know neither the distribution $\pX$, nor its covariance $\Sigma$.}

\subsection{Clairvoyant policy and performance metric}\label{sec:Benchmark}
We evaluate the performance of our algorithm using the common notion of regret: the expected revenue loss compared with the optimal pricing policy that knows $\p_0$ in advance (but not the realizations of $\{z_t\}_{t\ge 1}$). Let us first characterize this benchmark policy.

Using Eq.~\eqref{eq:model}, the expected revenue from a posted price $p$ is equal to 
$$p\times\prob(v_t\ge p)= p(1-F(p-\p_0\cdot \tx_t ))\,.$$ 
Therefore, using first order conditions, for the optimal posted price, denoted by $p^*$, we have
\begin{align}\label{eq:opt-p}
\popt(x_t) = \frac{1-F(\popt-\p_0\cdot \tx_t)}{f(\popt-\p_0\cdot \tx_t)}\,.
\end{align}
To simplify the presentation, let $p^*_t=p^*(\tx_t)$ denote the optimal price at time $t$.

We now define $\varphi(v)\equiv v - \frac{1-F(v)}{f(v)}$ corresponding to the \emph{virtual valuation} function commonly used in mechanism design~\cite{Myerson81}.
By Assumption~\ref{ass1}, $\varphi$ is injective and hence we can define function $g$ as follows
%
%
\begin{align}\label{eq:g}
g(v) \equiv v + \varphi^{-1}(-v)\,.
\end{align}
It is easy to verify that $g$ is non-negative. Note that by Eq.~\eqref{eq:opt-p}, for the optimal price we have $$\p_0\cdot \tx_t + \varphi(\popt-\p_0\cdot \tx_t)=0.$$ 
Therefore, by rearranging the terms for the optimal price at time $t$ we have
\begin{align}\label{eq:popt}
\popt_t = g(\p_0\cdot \tx_t)\,.
\end{align}


We can now formally define the regret of a policy. Let $\pi$ be the seller's policy that sets price $p_t$ at period $t$, and $p_t$ can depend on the history of events up to time $t$. The worst-case regret is defined as:
\begin{align}\label{eq:Regret_def}
\Reg_\pi(T) \equiv \max_{\substack{\p_0 \in \Omega\\ \pX\in Q(\cX)}} \E \left[\sum_{t=1}^T \bigg(p^*_t \ind(v_t \ge p^*_t) - p_t \ind(v_t \ge p_t) \bigg)\right]\,,
\end{align}
where the expectation is with respect to the distributions of idiosyncratic noise, $z_t$, and $\pX$, the distribution of feature vectors. Moreover, $Q(\cX)$ represents the set of probability distributions
supported on a bounded set $\cX$.

Our algorithm uses the sparsity structure of $\p_0$ and learns the model with order of magnitude less data compared to a 
structure-ignorant algorithm. In Section~\ref{sec:regret}, we show that our pricing scheme achieves a regret bound of 
$O\big(s_0 \log T (\log d + \log T)\big)$.

%
%
%
%
%
%

\section{A Regularized Maximum Likelihood Pricing (RMLP) Policy} \label{sec:pricing_alg}

\begin{algorithm}[t]
\begin{algorithmic}[1]

\REQUIRE{\bf (at time $0$)} function $g$, regularizations $\lambda_k$, $\l1u$ (bound on $\|\p_0\|_1$),\hspace{2cm}
\REQUIRE{\bf (arrives over time)} covariate vectors $\{\tx_t\}_{t\in \naturals}$ 

\ENSURE prices $\{p_t\}_{t\in \naturals}$ 

\STATE $\tau_1 \leftarrow 1$, $p_1 \leftarrow 0$, $\hp^1 \leftarrow 0$

\FOR{each episode $k = 2,3,\dots$}
\STATE Set the length of $k$-th episode: $\tau_k \leftarrow 2^{k-1}.$

\STATE Update the model parameter estimate $\hp^{k}$ using the regularized ML estimator obtained\\ from observations in the previous episode:
\begin{align}\label{eq:ML}
\hspace{-4cm}  \hp^{k}  = \underset{\|\p\|_1 \le \l1u}{\text{arg min}} \,\, \left\{ \cL(\p) + \lambda_k \|\p\|_1 \right\} \hspace{-2cm}
\end{align}
with 
\begin{align} \label{eq:log_likelihood}
\hspace{-8.6cm} \cL(\p) = - \frac{1}{\tau_{k-1}}\sum_{t=\tau_{k-1}}^{\tau_{k}-1} \bigg\{\ind(y_t =1) \log (1-F(p_t - \p\cdot \tx_t )) + \ind(y_t =-1) \log (F(p_t - \p\cdot \tx_t )) \bigg\} 
\hspace{-6.8cm}
\end{align}
%
%

\STATE For each period $t$ during the $k$-th episode, set
\begin{align}\label{eq:price}
\hspace{-2cm} p_t \leftarrow g( \hp^k\cdot \tx_t) \hspace{-1.2cm}
\end{align}

\ENDFOR
\end{algorithmic}
\caption{\bf RMLP policy for dynamic pricing}\label{alg-linear}
\end{algorithm}


In this section, we present our dynamic pricing policy. 
Our policy runs in an episodic fashion. 
Episodes are indexed by $k$ and time periods are indexed by $t$. 
The length of episode $k$ is denoted by $\tau_k$. 
Throughout episode $k$, we set the prices equal to $p_t = g(\tx_t\cdot\hp^k)$ where $\hp^{k}$ denotes the estimate of $\p_0$ which is obtained from the observations $\{(x_t,y_t,p_t)\}$ in the \emph{previous} episode. Note that by Eq.~\eqref{eq:g}, $p_t$ is the optimal posted price if $\hp^k$ was the true underlying parameter of the model. 

We estimate $\p_0$ using a regularized maximum-likelihood estimator; see Eq.~\eqref{eq:ML} where the (normalized) negative log-likelihood function for $\p$ is given by Eq.~\eqref{eq:log_likelihood}.
We note that as a consequence of the log concavity assumption on $F$ and $1-F$, the optimization problem \eqref{eq:ML} is a convex problem. {There is a large toolkit of various optimization methods (e.g., alternating direction method of multipliers (ADMM), fast iterative shrinkage-thresholding algorithm (FISTA), accelerated projected gradient descent, among many others) that can be used to solve this optimization problem. There are also recent developments on distributed solvers for $\ell_1$ regularized cost function~\cite{Boyd2011}. }

Observe that by design, prices posted in the $k$-th episode are independent from the market value noises in this period, i.e., $\{z_t\}_{t=\tau_k}^{\tau_{k+1}-1}$.  This allows us to estimate $\p_0$ for each episode separately; see Proposition~\ref{thm:learning} in Section~\ref{proof:thm2}. {Comparing to policies that use the entire data sale history in making decisions, some remarks are in order:}
{
\begin{itemize}
\item \emph{Perishability of data:} In practical applications, the unknown demand parameters will change over time, raising the concern of perishability of data. Namely, collected data becomes obsolete after a while  and cannot be relied on for estimating the model parameters~\cite{keskin2016chasing,Javanmard17}. Common practical policies to mitigate this problem (discussed in~\cite{keskin2016chasing}) include moving windows and decaying weights which use only recent  data to learn the model parameters. In contrast, methods that use the entire historical data suffers from this problem.
\item \emph{Simplicity and efficiency:} In RMLP policy, estimates of the model parameters are updated only at the first period of each episode ($\log T$ updates). Further, at each update, the policy uses only the historical data from the previous episode. These two ideas together, not only allow for a neat analysis of the statistical dependency among samples but also decrease the computational cost. Scalability of the pricing policy is indispensable in practical applications as the sales data is collected at an unprecedented rate.    

\item \emph{Effect on regret:} By using half of the historical data at each update, our policy loses at most a factor $2$ in the total regret. (This becomes clear shortly when we discuss the estimation error rate in terms of number of samples.) 
\end{itemize}
}

The lengths of episodes in our algorithm increase geometrically ($\tau_k=2^{k-1}$), allowing for more accurate estimate of $\p_0$ as the episode index grows. The algorithm terminates at the end of the horizon (period $T$), but note that it does not need to know the length of the horizon in advance.

Regularization parameter $\lambda_k$ constrains the $\ell_1$ norm of  the estimator $\hp^k$. Selecting the value of $\lambda_k$ is of crucial importance as it effects the  estimator error.
We set it as $\lambda_k = O\left(\sqrt{{(\log d)}/{\tau_{k-1}}}\right)$. 
More precisely, define
\begin{eqnarray} 
\nonumber u_{\l1u} &\equiv& \sup_{|x|\le 3\l1u} \left\{\max\Big\{  \log' F(x) , -\log'(1-F(x)) \Big\}\right\}\,,\label{eq:uM}
\end{eqnarray}
where the derivatives are w.r.t. $x$.
By the log-concavity property of $F$ and $1-F$, we have 
$$u_{\l1u} = \max\Big\{  \log' F(-2\l1u) , -\log'(1-F(2\l1u)) \Big\}\,.$$ Hence, $u_{2\l1u}$ captures the steepness of $\log F$. 



In order to minimize the regret, we run the RMLP policy with 
\begin{equation} \label{eq:lambda}
\lambda_k = 4u_{\l1u} \sqrt{\dfrac{\log d}{\tau_{k-1}}}.
\end{equation} 

Note that exploration and exploitation tasks are mixed in our algorithm.
In the beginning of each episode, we use what is learned from previous episode to improve the estimation of  $\tth$
and then we exploit this estimate throughout the current episode to incur little regret. Meanwhile, the observations
gathered in the current episode are used to update our estimate of $\tth$ for the next episode.
We analyze the performance of RMLP in the next section.

\section{Regret analysis}\label{sec:regret}
Although the description of RMLP is oblivious to sparsity $s_0$, its performance depends on the structure of the optimal solution.
The following theorem bounds the regret of our dynamics pricing policy.
\begin{theorem}[Regret Upper Bound]\label{thm:regret}
Suppose Assumptions~\ref{ass1} and \ref{ass2} hold. 
Then, the regret of the RMLP policy is of $O\big(s_0 \log d \cdot \log T\big)$.
\end{theorem}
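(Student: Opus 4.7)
The plan is to decompose the regret episode by episode, reducing the total regret to a sum of per-episode contributions, each of which is controlled by the estimation error of $\hp^k$ on data that is (by construction) independent of the noise driving that episode.

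First, I would quantify the per-period regret in terms of the gap $\hp^k\cdot \tx_t - \p_0\cdot \tx_t$. Let $r(p; w) \equiv p\,(1-F(p-w))$ denote the expected revenue at price $p$ when the product's mean value is $w = \p_0 \cdot \tx_t$. By the first-order condition \eqref{eq:opt-p}, $p_t^*=g(w)$ is the unique maximizer, and by the log-concavity half of Assumption~\ref{ass1} combined with the strict monotonicity of $\varphi$, $r(\cdot;w)$ is strongly concave in a neighborhood of $p_t^*$ with a uniform second-derivative bound on the bounded domain. Hence
\begin{equation*}
\E\bigl[p_t^*\ind(v_t\ge p_t^*) - p_t \ind(v_t\ge p_t) \,\big|\, \tx_t,\hp^k\bigr] \;\le\; C_1\, (p_t - p_t^*)^2.
\end{equation*}
Since $\varphi$ is strictly increasing with strictly positive derivative on the bounded relevant interval (by the strong log-concavity just used), $g$ is Lipschitz there, yielding $(p_t - p_t^*)^2 \le C_2\, \bigl((\hp^k-\p_0)\cdot \tx_t\bigr)^2$. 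Taking expectation over $\tx_t$ and using Assumption~\ref{ass2} (so that $\E[\tx_t\tx_t^\sT]$ has top eigenvalue bounded by a constant depending on $C_{\max}$ and the support bound),
\begin{equation*}
\E\bigl[p_t^*\ind(v_t\ge p_t^*) - p_t \ind(v_t\ge p_t)\bigr] \;\le\; C_3\, \E \bigl\|\hp^k - \p_0\bigr\|_2^2.
\end{equation*}

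Next, I would bound the estimation error of $\hp^k$. The crucial observation, already noted after \eqref{eq:price}, is that prices posted in episode $k-1$ depend only on $\hp^{k-1}$, hence on data from episode $k-2$; consequently the pairs $\{(\tx_t,p_t,y_t)\}$ collected during episode $k-1$ are i.i.d.\ conditional on $\hp^{k-1}$ and the noises $z_t$ in that episode are independent of $p_t$. With this independence, the population log-likelihood $\E[\cL(\p)]$ has a Hessian at $\p_0$ of the form $\E\bigl[\kappa(p_t-\p_0\cdot\tx_t)\,\tx_t\tx_t^\sT\bigr]$ with $\kappa = -\log'' F - \log''(1-F) \ge 0$ (by log-concavity), uniformly lower bounded on the relevant price range. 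Combined with Assumption~\ref{ass2}, this gives a restricted eigenvalue condition on a cone of approximately $s_0$-sparse directions. Standard regularized $M$-estimator theory (e.g.\ the approach of Negahban--Ravikumar--Wainwright--Yu) then yields, with high probability, $\|\hp^k-\p_0\|_2^2 \le C_4\, s_0 \lambda_k^2/C_{\min}^2 = C_5\, s_0 \log d / \tau_{k-1}$, provided the dual-norm condition $\lambda_k \ge 2\|\nabla \cL(\p_0)\|_\infty$ holds; this is where the choice \eqref{eq:lambda} of $\lambda_k$ and the constant $u_{\l1u}$ enter, and it is verified by applying Hoeffding's inequality coordinatewise to the bounded score $(y_t$-related indicator terms against the bounded coordinates of $\tx_t$). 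I would state this as an intermediate proposition (the proof text already alludes to a Proposition~\ref{thm:learning}).

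Combining the two pieces, the contribution of episode $k$ to the regret is at most $\tau_k \cdot C_3\, \E\|\hp^k-\p_0\|_2^2 \le C_6\, \tau_k \cdot s_0 \log d / \tau_{k-1} = 2C_6\, s_0\log d$, since $\tau_k = 2\tau_{k-1}$. Summing over the $K = \lceil \log_2 T\rceil+1$ episodes, and absorbing the first couple of short episodes (where the estimation bound is vacuous) into an $O(1)$ term using the a priori bound $\|\p\|_1\le \l1u$ so prices and revenues are uniformly bounded, yields total regret $O(s_0\log d\cdot \log T)$, which is the claim.

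The main obstacle is the estimation step: establishing a restricted eigenvalue / restricted strong convexity condition for the random Hessian of $\cL$ uniformly over a sparse cone, with high probability and with a constant independent of $k$. The independence of $p_t$ from $z_t$ within an episode (the whole point of the episodic design) is what makes the Hessian a sum of independent bounded matrices so that a matrix concentration / peeling argument delivers RSC; verifying that the population Hessian inherits a strictly positive lower curvature from Assumption~\ref{ass2} and the log-concavity of $F,1-F$ is the delicate part, because one must control the conditional variance of the score uniformly over realizations of $\hp^{k-1}$ inside $\Omega$.
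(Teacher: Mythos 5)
Your proposal follows the paper's own proof almost step for step: the Taylor expansion of the expected revenue $r_t(p)$ around $p_t^*$ with a uniform bound on $r_t''$, the $1$-Lipschitz property of $g$ (the paper's Lemma~\ref{lem:g-Lipschitz}), the reduction of the per-period regret to $C_{\max}\E\|\hp^k-\p_0\|_2^2$ via Eq.~\eqref{eq:sigma-aug}, the regularized-ML error bound under a restricted eigenvalue condition (Proposition~\ref{thm:learning}), the $\tau_k = 2\tau_{k-1}$ telescoping that makes each episode contribute $O(s_0\log d)$, and the naive $2\l1u$-per-period bound for the short initial episodes. The key independence observation (prices in episode $k$ depend only on data from episode $k-1$, so the noises in the current episode are independent of the posted prices) is exactly the paper's.

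There is, however, one genuine gap: the passage from the \emph{high-probability} estimation bound to the \emph{expected} regret. Proposition~\ref{thm:learning} fails with probability roughly $1/d + 2e^{-n/(c_0 s_0)}$, and on the failure event one can only bound $\|\hp^k-\p_0\|_2^2$ by the a priori constant $4\l1u^2$. Writing this out (the paper's Corollary~\ref{coro:E1}), the expected per-period regret in episode $k$ picks up an additive term of order $\l1u^2/d$, which after multiplying by the episode length $\tau_k$ and summing contributes $O(\l1u^2 T/d)$ to the total regret. This is harmless only as long as $\tau_k = O(d)$; once episodes exceed length $c_1 d$ (i.e.\ once $T\gg d$), the $1/d$ failure probability is no longer small enough, and your bound $\tau_k\cdot\E\|\hp^k-\p_0\|_2^2 = O(s_0\log d)$ does not follow from the stated high-probability estimate alone. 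The paper closes this hole with Proposition~\ref{pro:E2}, which for $n\ge c_1 d$ sharpens the expected error to $16(s_0+1)\lambda^2/(\ell_{\l1u}^2C_{\min}^2)+4\l1u^2 e^{-cn^2}$ by conditioning on the event that the full empirical covariance $\tX^\sT\tX/n$ is well-conditioned (not just restricted-eigenvalue) and integrating the tail of $\|\nabla\cL(\p_0)\|_\infty$. Your proof needs either this refinement or some other device (e.g.\ a failure probability decaying in $\tau_{k-1}$ rather than in $d$) to cover the regime $\tau_k\gg d$; otherwise the claimed $O(s_0\log d\cdot\log T)$ bound acquires a spurious $O(T/d)$ term.
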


Below we provide an outline for the proof of Theorem~\ref{thm:regret} and defer its complete proof to Section~\ref{proof:thm2}.

\begin{enumerate}
\item {In RMLP, the updates in the model parameter estimation only occurs at the beginning of each episode, with using only the samples collected in the previous episode. Therefore,
the prices posted in each episode are independent from the market value noises in that episode. This observation also verifies that $\cL(\p)$ given by~\eqref{eq:log_likelihood}, is indeed the negative log-likelihood  of the samples collected in $k$-th episode. Note that this independence is not a mere serendipity, rather it holds because of the specific design of RMLP policy. Using this property, we use tools from high-dimensional statistics to bound the estimation error. To bound the error term $\|\p^k -\p_0\|_2$, we compare the function values $\cL(\p^k)$ and $\cL(\p_0)$. The main challenge here is that $\cL(\p)$ is not strictly convex in $\p$.\footnote{Note that $\nabla^2_{\theta}\cL = (-1/{\tau_{k-1}}) \sum_{t=\tau{k-1}}^{\tau_k -1} (\partial^2/\partial^2_{u_t} \cL) x_tx_t^\sT $, where $u_t = p_t-\theta\cdot x_t - \alpha_0$. Therefore, $\nabla^2_{\theta}\cL$ is a $d\times d$ matrix of rank at most $\tau_k -\tau_{k-1}$. Hence, $\cL(\p)$ is strictly convex in $\p$ only if $\tau_k- \tau_{k-1} \ge d$.  However, since we are not updating our estimates in the middle of an episode, episodes of length $d$ yield the regret to scale linearly in $d$, which is not desired.} Hence, there can be, in principle, parameter vectors $\p_1$ and $\p_2$ that are close to each other and nevertheless the values of function $\cL$ at these points are far from each other.} 

{To cope with this challenge, we show that a so-called \emph{restricted eigenvalue condition} holds for the feature products. This notion implies that $\cL(\p)$ is strictly convex on the set of sparse vectors.\footnote{It is strictly convex over the set of $s_0$ sparse vectors in $d$-dimension if the number of samples is above $c s_0 \log d$ for a suitable constant $c>0$.}}   
Using the restricted eigenvalue condition, we show the following $\ell_2$ error for the regularized log-likelihood estimate in the $k$-th episode, $\hp^k$, holds true
$$\|\hp^k-\p_0\|_2 = O(\sqrt{s_0}\lambda_k) = O\left(\sqrt{\frac{s_0\log d}{\tau_{k-1}}}\right)\,.$$ 
As expected, the estimate gets more accurate
as the episode's length increases; see Section~\ref{proof:thm2} for more details.

\item For any $p\ge 0$, denote by $r_t(p) = p(1-F(p-\tx_t\cdot\p_0))$, the expected revenue under price $p$. We bound $R_t$ in terms of $r_t(p^*_t) - r_t(p_t)$. Since $p^*_t\in \arg\max \{r_t(p)\}$, we have $r_t'(p^*_t) = 0$, and by Taylor expansion of $r_t$ around $p^*_t$, we obtain $r_t(p^*_t ) - r_t(p_t) = O((p^*_t-p_t)^2)$.
\item For $t$ in the $k$-th episode, namely $\tau_{k-1}\le t\le \tau_k-1$, we have 
$$p^*_t - p_t = g(\p_0 \cdot \tx_t ) -g(\hp^k\cdot \tx_t ) \le |(\p_0- \hp^k) \cdot \tx_t |\,,$$ 
which follows by showing that $g$ is $1$-Lipschitz.  Further, by Assumption~\ref{ass2} (without loss of generality assume $C_{\max} > 1$), we have 
\begin{align*}
\E[((\p_0- \hp^k) \cdot \tx_t )^2]  
\le C_{\max}\E[\|\hp^k- \p_0\|_2^2] \,,
\end{align*}
where the equality holds because $x_t$ is independent of $\hp^k$. The inequality holds because $\E(x_t) = 0$ and therefore 
\begin{align}\label{eq:sigma-aug}
\E(\tx_t \tx_t^\sT) =  \begin{bmatrix}
\Sigma & 0\\
0& 1
\end{bmatrix}\,,
\end{align}
from which we obtain that the maximum eigenvalue of $\E(\tx_t \tx_t^\sT)$ is at most $C_{\max}>1$.

Let $R_t$ be the regret occurred at step $t$. Combining the above bounds (step 2 and 3), we arrive at $\E[R_t] = O({s_0(\log d)}/{\tau_{k-1}})$. Therefore, the cumulative expected regret in episode $k$ works out 
at $O(s_0\log  d)$.  Since the length of episodes increase geometrically, there are $O(\log T)$ episodes by time $T$. This implies that the total expected regret by time $T$ is  $O(s_0 \log d \, \log T )$.
\end{enumerate}


\subsection{Comparison with the ``common" regret of bound $\Omega(\sqrt{T})$}\label{uninformative}
There is an often-seen regret bound $\Omega(\sqrt{T})$ in the literature of online decision making, which can be improved to a logarithmic regret bound if some type of ``separability assumption" holds true~\cite{dani2008stochastic,abbasi2012online}. Separability assumption posits that there is a positive constant gap between the rewards of the best and the second best actions. In our framework, the parameter $\p$ belongs to a continuous set in $\reals^{d+1}$ and therefore the separability assumption cannot be enforced as by choosing $\p$ arbitrary close to $\p_0$, one can obtain suboptimal (but arbitrary close to optimal) reward.  However, our policy achieves $O(\log T)$ regret. Here, we contrast our logarithmic lower bound with the folklore bound $\Omega(\sqrt{T})$ to build further insight on our results.

\paragraph{\bf Uninformative prices and $\Omega(\sqrt{T})$ lower-bound.} We focus on~\cite{broder2012dynamic} which has a close framework to ours in that it considers a dynamic pricing policy from purchasing decisions and presents a pricing policy based on maximum likelihood estimation with regret $O(\sqrt{T})$. Adopting their notation, it is assumed that market values $v_t$ are independent and identically distributed random variables coming from a distribution function that belongs to some family parametrized by $\bz$. Denote by $d(p;\bz)$ the demand curve. This curve determines the probability of a purchase at a given price, i.e., $d(p;\bz) = \prob_\bz(v_t\ge p)$. 
\cite{broder2012dynamic} show that the worst-case regret of any pricing policy must be at least $\Omega(\sqrt{T})$ (see Theorem~3.1 therein). The bound is proved by considering a specific family of demand curves $d(p;\bz)$, such that all demand curves in this family intersect at a common price. Further, the common price is the optimal price for a specific choice of parameter $\bz_0$, i.e, $p^*(\bz_0)$.\footnote{Specifically, they consider $d(p;\bz) = 0.5+\bz-\bz p$. Hence $d(1;\bz) = 1$, for all $\bz$ and it is shown that $p^*(\bz_0) = 1$ for $\bz_0 = 0.5$.}   Therefore, the price $p^*(\bz_0)$ is ``uninformative" since no policy can gain information about the demand parameter $\bz$, while pricing $p^*(\bz_0)$. The idea behind the derived lower bound for the worser-case regret is that for a policy to learn the underlying demand curve fast enough, it must necessarily choose prices that are away from (the uninformative) price $p^*(\bz_0)$ and this leads to a large regret when the true demand curve is indeed $\bz_0$.

{
\paragraph{\bf Intuition behind our results.} In contrast to the previous case, for our framework there is no such uninformative price. First, note that the for a choice model with parameters $\p_0 = (\tth, \alpha_0)$,  the demand curve at time $t$ is given by
$$d_t(p;\p_0) = 1-F(p-\p_0\cdot \tx_t)\,,$$ 
For $n\ge 1$, we define the aggregate demand function up to time $n$ as $d_1^n = (d_1, d_2, \dotsc, d_n)$.
In the following, we argue that under our setting, there is no uninformative price.
For any price $p$ and any $\p_1$, $\p_2$, we have 
\begin{align*}
\frac{1}{n}\| d_1^n(p,\p_1) - d_1^n(p,\p_2) \|_2^2  
&= \frac{c^2}{n} \sum_{\ell=1}^n ((\p_1-\p_2)\cdot \tx_t)^2\\
& = \frac{c^2}{n} \|\widetilde{X}(\p_1-\p_2)\|_2^2\,,
\end{align*}
where $\widetilde{X}$ is the matrix with rows $\tx_\ell$, for $1\le \ell\le t$. 
We also used the fact that $f(z)\ge c>0$ for some constant $c$ because $F$ is strictly increasing by Assumption~\ref{ass1}. 
As we show in Appendix~\ref{app:RE}, for $n\ge c_0 s_0 \log d$ (with $c_0$ a proper constant), $\widetilde{X}$ satisfy a so-called ``restricted eigenvalue", by which we have 
\begin{align}
\frac{1}{n}\|\widetilde{X}(\p_1-\p_2)\|_2^2\ge \frac{C_{\min}}{2} \|\p_1-\p_2\|_2^2\,.
\end{align}
Therefore, for any fixed price $p$, if we vary the demand parameters $\mu_1$ to some other value $\widehat{\mu}_1$, then the aggregate demand at price $p$ also changes by an amount proportional to $\|\p_1-\p_2\|_2$. Hence, any price in this setting is informative about the model parameters.} 

{ To build further insight, let us consider a more general choice model, where the utility of the customer from buying a product with feature vectors $x_t$ at price $p$ is given by
\begin{align}\label{eq:utility}
u(x_t) = \theta_0 \cdot x_t +\alpha_0 - \beta_0 p +z_t\,,
\end{align} 
where $\theta_0, \alpha_0, \beta_0$ are unknown model parameters and $z_t$ is the noise term. The customer buys the product iff $u(x_t) \ge 0$. Note that the model we studied in this paper (see Equation~\eqref{mymodel}) is special case when the price sensitivity $\beta_0$ is known and hence can be normalized to $1$. We next argue that in case of unknown $\beta_0$, the uninformative prices do exist and hence the $\Omega(\sqrt{T})$ is still in place.}

{
To see this, fix  arbitrary $\alpha_*$, and let $\theta_0 = 0$ and $\beta_0 = g(\alpha_*)-\alpha_*+ \alpha_0$. Then, the demand curves will be unaltered over time and are given by
$$d_t(p,\p) = 1-F(\beta_0 p -\alpha_0) = 1-F\left((g(\alpha_*)-\alpha_*+ \alpha_0) p - \alpha_0\right)\,.$$  
It is easy to verify that $p^* = 1$ is the optimal price for the specific choice of $\alpha_0 = \alpha_*$. Further, all the demand curves intersect at $p^* = 1$ (they all have the value $1-F(g(\alpha_*)-\alpha_*)$ at this price).
Therefore, $p^*$ is an uninformative price and no policy can gain information about $\alpha_0$ by pricing at $p^*$.  However, when $\alpha_0 = \alpha_*$, choosing prices that are away from this informative price leads to a large regret. Prices that are close to $p^*$ does not have any information gain, and contrasting these two points, it can be shown that the worst  case regret id of order $\Omega(\sqrt{T})$. A formal proof follows the same lines ad the proof of~\cite[Theorem 3.1]{broder2012dynamic} and is omitted. } 

{
Finally, it is worth noting that the rate of learning demand parameter $\p_0$ is chiefly derived by three factors:
\begin{itemize}
\item Non-smoothness of distribution function $F$, as it controls the amount of information obtained about $\tx_t\cdot \p_0$  at each $t$. This is captured by quantity $\ell_{\l1u}$ defined by~\eqref{eq:lM}.
\item The rate by which the feature vectors $x_t$ span the parameter space. This is controlled through the minimum eigenvalue of $\Sigma$, i.e., $C_{\min}$. If $C_{\min}$ is small, the randomly generated features are relatively aligned and one requires larger sample size to estimate $\tth$ within specified accuracy.
\item Complexity of $\p_0$. This is captured through the sparsity measure $s_0$.  
\end{itemize}
Contribution of these factors to the learning rate can be clearly seen in our derived learning bound~\eqref{eq:B3}.     
}


\subsection{Role of $C_{\min}$  }

In establishing our results, we relied on Assumption~\ref{ass2} which requires the population covariance of features to be positive definite.
The lower bound on its eigenvalues, denoted by $C_{\min}$, appears in our regret bound as a factor $1/C_{\min}^2$. 

As evident from the proof of Proposition~\ref{thm:learning}, Assumption~\ref{ass1} can be replaced by the weaker  {\em restricted eigenvalue condition}~\cite{buhlmann2011statistics,candes2007dantzig}, which is a common assumption in high-dimensional statistical learning. 
While assumption $C_{\min} > 0$ allows for a fast learning rate of model parameters and a regret bound $O(\log T)$, RMLP policy can still provably achieve regret $O(\sqrt{T})$, even when $C_{\min} = 0$.

\begin{theorem}\label{thm:NoC}
Suppose that product feature vectors are generated independently from a probability distribution $\pX$ with a bounded support $\cX\in \reals^d$. Under Assumption~\ref{ass1}, the regret of RMLP policy is of $O(\sqrt{(\log d) T})$. 
\end{theorem}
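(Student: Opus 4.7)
\textbf{Proof plan for Theorem~\ref{thm:NoC}.} The strategy is to replace the restricted-eigenvalue based ``fast-rate'' analysis of Theorem~\ref{thm:regret} by a ``slow-rate'' Lasso bound that does not require $C_{\min}>0$. The key observation is that both $\hp^k$ and $\p_0$ lie in the feasible set $\{\p:\|\p\|_1\le \l1u\}$, so $\|\hp^k-\p_0\|_1\le 2\l1u$ holds unconditionally, \emph{regardless} of the rate at which $\hp^k$ concentrates to $\p_0$. What drives the regret is the prediction-type quantity $\E[((\hp^k-\p_0)\cdot \tx_t)^2]$, which can still be controlled when the $\ell_2$ error of $\hp^k-\p_0$ cannot.

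The first step is an in-sample prediction bound of the form $\tau_{k-1}^{-1}\|\tX(\hp^k-\p_0)\|_2^2=O(\lambda_k)$. Starting from the basic inequality $\cL(\hp^k)+\lambda_k\|\hp^k\|_1 \le \cL(\p_0)+\lambda_k\|\p_0\|_1$ and a second-order Taylor expansion of $\cL$ around $\p_0$,
\[
\frac{\kappa}{2\tau_{k-1}}\|\tX(\hp^k-\p_0)\|_2^2 \;\le\; \lambda_k\bigl(\|\p_0\|_1-\|\hp^k\|_1\bigr) + \|\nabla\cL(\p_0)\|_\infty\,\|\hp^k-\p_0\|_1,
\]
where $\kappa>0$ is a uniform lower bound on $\min\{-(\log F)''(u),-(\log(1-F))''(u)\}$ over the compact interval $|u|\le 3\l1u$; such a $\kappa$ exists by Assumption~\ref{ass1} together with strict positivity of $f$ on bounded intervals. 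A Hoeffding bound plus a union bound, using that $\nabla\cL(\p_0)$ is a $\tau_{k-1}^{-1}$-scaled sum of conditionally mean-zero bounded terms (mean-zero because $\E[\ind(y_t=\pm 1)\mid p_t,\tx_t]$ matches the true conditional probability under $\p_0$), shows that $\|\nabla\cL(\p_0)\|_\infty\le \lambda_k/2$ with high probability for $\lambda_k$ as in Eq.~\eqref{eq:lambda}. Combined with $\|\hp^k-\p_0\|_1\le 2\l1u$, this yields $\tau_{k-1}^{-1}\|\tX(\hp^k-\p_0)\|_2^2 = O(\l1u\lambda_k/\kappa)=O(\sqrt{\log d/\tau_{k-1}})$.

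The second step transfers this in-sample bound to the population. Since $\|\tx_t\|_\infty\le 1$, Hoeffding plus a union bound over the $(d+1)^2$ entries of $\hSigma:=\tau_{k-1}^{-1}\tX^\sT\tX$ gives $\|\hSigma - \E(\tx_t\tx_t^\sT)\|_\infty=O(\sqrt{\log d/\tau_{k-1}})$, so via H\"older's inequality,
\[
(\hp^k-\p_0)^\sT \E(\tx_t\tx_t^\sT)(\hp^k-\p_0) \;\le\; \tfrac{1}{\tau_{k-1}}\|\tX(\hp^k-\p_0)\|_2^2 + 4\l1u^2\,\|\hSigma-\E(\tx_t\tx_t^\sT)\|_\infty \;=\; O\Bigl(\sqrt{\tfrac{\log d}{\tau_{k-1}}}\Bigr).
\]
Applying the Taylor expansion of the expected revenue $r_t$ around $p^*_t$ and the $1$-Lipschitz property of $g$ exactly as in the proof of Theorem~\ref{thm:regret}, and exploiting the independence of $\hp^k$ from $\tx_t$, the per-step expected regret in episode $k$ is $O(\sqrt{\log d/\tau_{k-1}})$. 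Summing over the $\Theta(\tau_{k-1})$ periods in episode $k$ gives episode regret $O(\sqrt{\tau_{k-1}\log d})$, and summing the resulting geometric series over the $O(\log T)$ episodes yields the claimed total regret $O(\sqrt{T\log d})$.

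The main obstacle is establishing the uniform Hessian lower bound $\nabla^2\cL(\p)\succeq \kappa\,\tau_{k-1}^{-1}\tX^\sT\tX$ over the feasible set. Since $-(\log F)''$ and $-(\log(1-F))''$ can vanish in the tails even under log-concavity, one must localize to a bounded range of arguments, which uses (i) the feasibility constraint $\|\p\|_1\le \l1u$ together with $\|\tx_t\|_\infty\le 1$ to bound $\p\cdot \tx_t$, and (ii) the boundedness of the posted prices $p_t=g(\hp^k\cdot \tx_t)$---a continuous function evaluated on a bounded argument. A secondary (more mechanical) point is the bookkeeping for the low-probability failure events of the two concentration arguments over the $O(\log T)$ episodes; by choosing the constant in $\lambda_k$ sufficiently large, their contribution to the expected regret is $o(\sqrt{T\log d})$ since per-step regret is uniformly bounded.
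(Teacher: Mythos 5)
Your proposal is correct and follows essentially the same route as the paper's proof: a slow-rate basic inequality giving $\tau_{k-1}^{-1}\|\tX(\hp^k-\p_0)\|_2^2 = O(\lambda_k\l1u/\ell_{\l1u})$, a max-norm concentration bound on $\hSigma-\tSigma$ combined with $\|\hp^k-\p_0\|_1\le 2\l1u$ to transfer to the population quadratic form (the paper's Lemma~\ref{lem:S-term2}), and the same per-episode summation. The Hessian localization you flag as the main obstacle is already handled in the paper by the quantity $\ell_{\l1u}$ of Eq.~\eqref{eq:lM} together with the bound $|p_t - \p\cdot\tx_t|\le 3\l1u$.
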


Proof of Theorem~\ref{thm:NoC} is given in Section~\ref{proof:thm-NoC}. 

\section{Lower bound on regret} \label{sec:lower-bound}
As discussed in Section~\ref{sec:Benchmark}, if the true parameter $\p_0$ is known, the optimal policy (in terms of expected revenue) is the one that sets prices as $p_t = g(\tx_t \cdot \p_0)$. Let $\cH_{t} = \{x_1,x_2,\dotsc, x_t, z_1, z_2, \dotsc, z_t\}$ denote the history set up to time $t$, and recall that $\Omega$ denotes the set of feasible parameters, i.e., $\Omega = \{\p\in \reals^{d+1}:\|\p\|_0\le s_0\,, \,\,\|\p\|_1\le W\}$.
We consider the following set of policies, $\Pi$:
\begin{align}\label{eq:Pi}
\Pi = \Big\{\pi: \pi(p_t) = g(\tx_t\cdot \p_t), \text{ for some } \p_t\in \Omega, \text{ such that }\p_t \text{ is } \cH_{t-1}\text{-measurable} \Big\}\,.
\end{align}
Here $\pi(p_t)$ denotes the price posted by policy $\pi$ at time $t$.

We provide a lower bound on the achievable regret by any policy in set $\Pi$. Indeed this lower bound applies to an oracle who fully observes the market values after the price is either accepted or rejected. Compared to our setting, where the seller observes only the binary feedbacks (purchase/no purchase), this oracle appears exceedingly powerful at first sight but surprisingly, the derived lower bound matches the regret of our dynamic policy, up to a logarithmic factor.    

\begin{theorem}\label{thm:LB-regret}
Consider linear model~\eqref{eq:model} with $\alpha_0 = 0$, where the market values $v(x_t)$, $1\le t\le T$, are fully observed. We further assume that market value noises are generated as $z_t\sim \normal(0,\sigma^2)$.
Let $\Pi$ be the set of policies given by~\eqref{eq:Pi}. Then, there exists constant $C'>0$ (depending on $\l1u$ and $\sigma$), such that the following holds true for all $T\in \naturals$.
\begin{align}
\min_{\pi\in \Pi}\Reg_\pi(T) \ge C' \bigg\{ s_0 \log \Big(\frac{T}{s_0}\Big)+ \min\bigg[\frac{T}{s_0},  s_0 \log \Big(\frac{d}{s_0}\Big)\bigg]\bigg\} \,.
\end{align} 
\end{theorem}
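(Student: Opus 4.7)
The plan is to translate the regret lower bound into a cumulative $\ell_2$ estimation lower bound and derive the two terms separately---a van Trees (Bayesian Cram\'er--Rao) argument for the parametric part and a Gilbert--Varshamov packing combined with Fano's inequality for the support-location part. For the reduction, any $\pi\in\Pi$ uses $p_t = g(\tx_t\cdot\p_t)$ with $\p_t$ adapted to $\cH_{t-1}$, while $p^*_t = g(\tx_t\cdot\p_0)$. A two-sided second-order expansion of $r_t(p) = p(1 - F(p - \tx_t\cdot\p_0))$ around its maximizer $p^*_t$, combined with strict log-concavity from Assumption~\ref{ass1} (which gives $g'$ bounded away from $0$ and $\infty$ on the range induced by $\p\in\Omega$), yields per-step regret $\asymp ((\p_0-\p_t)\cdot\tx_t)^2$. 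Taking expectation over $x_t\sim\pX$ (independent of the adapted $\p_t$) and choosing a hard instance $\pX$ with positive-definite covariance gives
\[
\Reg_\pi(T)\;\ge\; c_0\sum_{t=1}^T \E\|\p_t - \p_0\|_2^2.
\]
Crucially, in the full-observation oracle setting the conditional law $v_t\mid(\cH_{t-1},x_t)\sim \normal(\p_0\cdot\tx_t,\sigma^2)$ does not depend on the price $p_t$, so the KL analysis below is decoupled from the policy's adaptive actions.

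For the first term, fix a support $S_0\subset[d]$ with $|S_0|=s_0$ and place a smooth compactly supported prior $\pi_0$ on $\reals^{S_0}$ (for instance a product of truncated Gaussians of width $\asymp \l1u/s_0$, so that the $\ell_1$ constraint in $\Omega$ is inactive with probability one). For each $t$ the multivariate van Trees inequality, applied to $\p_t$ viewed as an estimator based on $(x_1,v_1),\dots,(x_{t-1},v_{t-1})$, yields
\[
\E_{\pi_0}\|\p_t-\p_0\|_2^2 \;\ge\; \frac{c_1\, s_0}{t}\qquad \text{for } t\gtrsim s_0,
\]
with $c_1$ depending only on $\l1u$, $\sigma$ and $C_{\max}$; this uses that the expected Fisher information after $t-1$ Gaussian observations is $(t-1)\Sigma/\sigma^2$ and the bound $\mathrm{tr}(A^{-1})\ge s_0^2/\mathrm{tr}(A)$. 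Summing over $t\in[s_0,T]$ and combining with the reduction produces the first term of order $s_0\log(T/s_0)$.

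For the second term, fix $\delta\in(0,\l1u]$ and consider the family $\{\p^{(S)}=(\delta/s_0)\mathbf{1}_S:\ S\subset[d],\ |S|=s_0\}\subset\Omega$. A Gilbert--Varshamov argument furnishes $M$ supports with pairwise Hamming distance $\ge s_0/2$ and $\log M\gtrsim s_0\log(d/s_0)$, giving $\|\p^{(S)}-\p^{(S')}\|_2^2\ge\delta^2/(2s_0)$. The chain rule of KL divergence for the Gaussian observation model, combined with $|S\triangle S'|\le 2s_0$ and the iid law of $x_t$, gives $\mathrm{KL}(\P_{\p^{(S)}}\|\P_{\p^{(S')}})\le c_2 T\delta^2/(\sigma^2 s_0)$. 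A standard Fano--Yang--Barron argument then forces that whenever $T\delta^2\le c_3\sigma^2 s_0^2\log(d/s_0)$ the supremum cumulative regret over the packing is at least $c_4 T\delta^2/s_0$. Optimizing $\delta$: in the regime $T\le c_3 s_0^2\log(d/s_0)/\l1u^2$ take $\delta=\l1u$ to obtain $\Omega(T/s_0)$; otherwise take $\delta^2=c_3\sigma^2 s_0^2\log(d/s_0)/T$ (which lies in the feasible range) to obtain $\Omega(s_0\log(d/s_0))$. Together these give the second term.

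The principal difficulty is making the two bounds \emph{additive} rather than obtaining only their maximum. My approach is a block decomposition: write $[d]=B_A\sqcup B_B$ with $|B_A|=s_0/2$ and $|B_B|=d-s_0/2$, plant the van Trees construction on $B_A$ (known support) and the Fano packing on $B_B$ (unknown support), and use the orthogonality of the two blocks under the Gaussian likelihood to argue that both information barriers apply simultaneously while per-step regret decomposes into a sum of block contributions. A secondary issue is the two-sided local quadratic bound in the reduction: obtaining $(p^*_t-p_t)^2\gtrsim((\p_0-\p_t)\cdot\tx_t)^2$ uniformly (not merely when the two prices are close) requires supplementing the Taylor expansion with a direct revenue-gap estimate away from $p^*_t$, which follows from strict concavity of $p\mapsto r_t(p)$ on the bounded range induced by $\|\p\|_1\le\l1u$ together with the log-concavity in Assumption~\ref{ass1}.
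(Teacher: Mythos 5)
Your proposal is correct and follows the same overall architecture as the paper's proof: reduce the regret to the cumulative $\ell_2$ estimation error $\sum_{t=1}^T\E\big[\min(\|\theta_t-\theta_0\|_2^2,C)\big]$ via a local quadratic expansion of $r_t$ around $p^*_t$ together with the bi-Lipschitz property of $g$, and then establish two separate information-theoretic lower bounds on that quantity --- a parametric $\sigma^2 s_0/t$ rate on a known support, summing to $s_0\log(T/s_0)$, and a support-identification bound via a packing of $s_0$-sparse vectors plus Fano, giving $\min[T/s_0,\,s_0\log(d/s_0)]$. The two genuine differences are these. First, for the $s_0\log(T/s_0)$ term you use a van Trees argument with an explicit prior on a fixed support, whereas the paper (Lemma~\ref{lem:Fano}, second half) simply invokes the minimax rate of least squares restricted to the true support; your Bayesian route is if anything cleaner for the cumulative version, since a single prior certifies the per-$t$ bound simultaneously for all $t$, so summing over $t$ is legitimate without worrying that the worst-case $\theta_0$ might differ across $t$. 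Second, the ``principal difficulty'' you identify --- additivity of the two terms --- is not a difficulty at all: both expressions lower-bound the same minimax risk, so $\max(L_1,L_2)\ge\tfrac12(L_1+L_2)$ and the factor $\tfrac12$ is absorbed into $C'$. This is exactly what the paper does, and it renders your block-decomposition construction unnecessary (though not incorrect).

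Two steps in your reduction need more care. You assert $\Reg_\pi(T)\ge c_0\sum_t\E\|\theta_t-\theta_0\|_2^2$ with no truncation, but the quadratic lower bound on the revenue gap holds only in a $\delta$-neighborhood of $p^*_t$ (Lemma~\ref{lem:hessian}); outside it one only gets a constant gap, so the honest per-step bound is $\E(R_t)\gtrsim\E\big[\min\big((x_t\cdot(\theta_t-\theta_0))^2,\delta^2\big)\big]$, and the expectation over $x_t$ cannot simply be pulled inside the minimum. The paper devotes Lemma~\ref{lem:Emin} to showing $\E[\min((x\cdot v)^2,\delta^2)]\ge 0.1\min(\|v\|_2^2,\delta^2)$ for the Rademacher design; on the bounded set $\Omega$ one may then remove the truncation at the price of a constant, but this step must appear. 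Also, in the van Trees argument the prior must respect $\|\theta\|_1\le W$, which forces coordinate widths of order $W/s_0$; the resulting prior-information term delays the onset of the $\sigma^2 s_0/t$ rate to $t\gtrsim\sigma^2 s_0^2/W^2$ rather than $t\gtrsim s_0$. This only perturbs constants in the final bound, but it should be tracked.
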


\medskip
In the following we give an outline for the proof of Theorem~\ref{thm:LB-regret}, summarizing its main steps and defer the complete proof to Section~\ref{proof:LB-regret}.
\begin{enumerate}
\item We derive a lower bound for regret in terms of the minimax estimation error. 
Specifically, for $t\in \naturals$, let
\begin{align}
R_t\equiv p^*_t \ind(v_t \ge p^*_t) - p_t \ind(v_t \ge p_t)
\end{align}
be the regret at period $t$. Define $\Omega_0 = \{\theta\in\reals^d:\, (\theta,0)\in \Omega\}$. We show that 
\begin{align}
\max_{\tth\in \Omega_0} \E(R_t)\ge c \max_{\tth\in \Omega_0} \E\{\min(\|\th_t-\tth\|_2^2,C)\}\,,
\end{align} for some constants $c,C>0$. 
 \item Let $\th_1^T=(\th_t)_{t=1}^T$ and define $\dis(\th_1^T,\th)\equiv \sum_{t=1}^T \min(\|\th_t - \th\|^2_2,C)$. We use a standard argument (Le Cam's method) that relates the minimax $\ell_2$ risk,
 $\min_{\th_1^T} \max_{\tth\in \Omega_0} \E\dis(\th_1^T,\tth)$,  in terms of the error in multi-way hypothesis problem~\cite{tsybakov2008introduction}. 
 We first construct a maximal set of points in $\Omega_0$, such that minimum pairwise distances among them is at least $\delta$. (Such set is usually referred to as a $\delta$-packing in the literature). Here $\delta$ is a free parameter to be determined in the proof. We then use a standard reduction to show that any estimator with small minimax risk should necessarily solve a hypothesis testing problem over the packing set, with small error probability. More specifically, suppose that 
 nature chooses one point from the packing set uniformly at random and conditional on nature's choice of the parameter vector, say $\tth$, the market value are generated according to $\<x_t,\tth\>+z_t$ with $z_t\sim\normal(0,\sigma^2)$. The problem is reduced to lower bounding the error probability in distinguishing $\tth$ among the candidates in the packing set using the observed market values.
 \item We apply Fano's inequality from information theory to lower bound the probability of error~\cite{tsybakov2008introduction}. The Fano bound involves the logarithm of the cardinality of the $\delta$-packing set as well as the mutual information between the observations (market values) and the random parameter vector $\tth$ chosen uniformly at random from the packing set. {Le Cam's method is used to derive minimal risk lower bound
 for an estimator $\hth$, while here we have a sequence of estimators and need to adjust the Le Cam's method to get the lower bound for $\dis(\th_1^T,\th_0)$.}  
 
 \end{enumerate}
 
\section{Nonlinear valuation function} \label{sec:general}

In previous sections, we focused exclusively on linear valuation function given by Eq~\eqref{eq:model}.
Here, we extend our results and assume that the market valuations are modeled by a \emph{nonlinear} function that depends on products' features and an independent noise term. Specifically, the market value of a product with feature vector $x_t$ is given by
\begin{align}\label{eq:nonlinear_model}
v(x_t) = \f(\theta_0 \cdot \phi(x_t) + \alpha_0 +z_t)\,,
\end{align} 
where the original features $x_t$ are transformed by a feature mapping $\phi:\reals^d \mapsto \reals^d$, and function $\psi:\reals \mapsto \reals$ is a general function that is log-concave and strictly increasing.
Important examples of this model include log-log model ($\psi(x) = e^x$, $\phi(x) = \ln(x)$), semi-log model ($\psi(x) = e^x$, $\phi(x) = x$), and logistic model ($\psi(x) = e^x/(1+e^x)$, $\psi(x) = x$).

Model~\eqref{eq:nonlinear_model} allows us to capture correlations and non-linear dependencies on the features. We next state our assumption on the feature mapping $\phi$ and then discuss our dynamic pricing policy and its regret bound for the general setting~\eqref{eq:nonlinear_model}.  
 


\begin{assumption}\label{ass2-2} 
Let $p_X$ be an (unknown) distribution
from which the original features $x_t$ are sampled independently.
Suppose that the feature mapping $\phi$ has continuous derivative and
denote by $\Sigma_\phi \equiv \E(\phi(x)\cdot \phi(x)^\sT)$, the covariance of feature vector $\phi(x)$ under $\prob_X$. 
We assume that there exist constants $C_{\min}$ and $C_{\max}$ such that for every eigenvalue $\sigma$ of $\Sigma_\phi$, we have $0<C_{\min}\le \sigma <C_{\max}<\infty$.
\end{assumption}

Invoking Assumption~\ref{ass1}, $\prob_X$ has a bounded support $\cX$ and since $\phi$ has continuous derivative, it is Lipschitz on $\cX$ and hence the image of $\cX$ under $\phi$ remains bounded.
Therefore, the new features $\phi(x_t)$ are also sampled independently from a bounded set. 
The condition on $\Sigma_\phi$ is analogous to that on $\Sigma$, as required by Assumption~\ref{ass2} for the linear setting.  

Based on feature mapping $\phi$, validity of Assumption~\ref{ass2-2} may depend on all moments of distribution $\prob_X$. We provide an alternative to this assumption, which only depends on feature mapping $\phi$ and the second moment of $\prob_X$. In stating the assumption, we use the notation $\cD_\phi$ to denote the derivative matrix of a feature mapping $\phi$. Precisely, for $\phi = (\phi_1,\dotsc, \phi_d)$, with $\phi_i$ real-valued function defined on $\reals^d$, we write $\cD_\phi = ({\partial \phi_i}/{\partial x_j})_{1\le i\le j\le d}$. 

\begin{assumption}\label{ass2-3}
Suppose that feature mapping $\phi$ has continuous derivative and its derivative $\cD_\phi(x)$ is full-rank for almost all $x$. In addition, there exist constants $C_{\min}$ and $C_{\max}$ such that for every eigenvalue $\sigma$ of covariance $\Sigma$, we have $0<C_{\min}\le \sigma <C_{\max}<\infty$.
\end{assumption}

Recall that the noise terms $\{z_t\}_{t\ge 1}$ are drawn independently and identically from a distribution with cumulative function $F$ and density $f(x)$. 
Let $\lambda(v) = f(v)/(1-F(v))$ be the hazard rate function for distribution $F$. For a log-concave function $\psi$, we define
\begin{align}\label{g:nonlinear}
g_\f^{-1}(v)\equiv v-\lambda^{-1} \Big(\frac{\psi'(v)}{\psi(v)}\Big)\,.
\end{align}
Note that $\psi'(v)/\psi(v) = \log'\psi(v)$ and since $\psi$ is log-concave, this term is decreasing. Further, since $1-F$ is log-concave then its hazard rate $\lambda$ is increasing (See proof of Lemma~\ref{app:phi-monotone}.) Combining these observations, we have that $-\lambda^{-1} (\psi'(v)/\psi(v))$ is increasing.  Consequently,
\begin{itemize}
\item[$\bullet$] Right-hand side of~\eqref{g:nonlinear} is strictly increasing and hence, $g_\f^{-1}$ is well-defined.
\item[$\bullet$] We have $(g_\f^{-1})'(v) \ge 1$, for all $v$. This implies that $0< g'_\psi(v)\le 1$, for all $v$.
\end{itemize}

It is worth noting that for $\f(v) =v$ (linear model), we have $g_\f = g$, where $g$ is defined by~\eqref{eq:g}. Our pricing policy for the nonlinear model is conceptually similar to 
the linear setting: The policy runs in an episodic manner. During episode $k$, the prices are set as $p_t = \psi(g_\psi(\hp^k \cdot \tx_t))$, where $\hp^k$ denotes the estimate of the true parameters $(\tth,\alpha_0)$
using a regularized maximum-likelihood estimator applied to observations in the \emph{previous} episode, and $\tx_t = (\phi(x_t),1)$.

We describe our (modified) RMLP policy in Algorithm~\ref{alg-nonlinear}. 
There a few differences between Algorithm~\ref{alg-nonlinear} and Algorithm~\ref{alg-linear}: Firstly, the features $x_t$ are replaced by $\tx_t = (\phi(x_t),1)$.
Secondly, in the regularized estimator, prices $p_t$ are replaced by $\f^{-1}(p_t)$. Thirdly, in the last step of algorithm prices are set 
as $\f(g_\f(\hp^k\cdot \tx_t))$, with $g_\f$ defined by Equation~\eqref{g:nonlinear}.

\begin{algorithm}[t]
\begin{algorithmic}[1]

\REQUIRE{\bf (at time $0$)} function $g$, regularizations $\lambda_k$, $\l1u$ (bound on $\|\tth\|_1$),\hspace{2cm}
\REQUIRE{\bf (arrives over time)} covariate vectors $\{\tx_t = (\phi(x_t),1)\}_{t\in \naturals}$ 

\ENSURE prices $\{p_t\}_{t\in \naturals}$ 

\STATE $\tau_1 \leftarrow 1$, $p_1 \leftarrow 0$, $\hp^1 \leftarrow 0$

\FOR{each episode $k = 2,3,\dots$}
\STATE Set the length of $k$-th episode: $\tau_k \leftarrow 2^{k-1}.$

\STATE Update the model parameter estimate $\hp^{k}$ using the regularized ML estimator obtained\\ from observations in the previous episode:
\begin{align}\label{eq:ML3}
\hspace{-3.8cm}  \hp^{k}  = \underset{\|\p\|_1 \le \l1u}{\text{arg min}} \,\, \left\{ \cL(\p) + \lambda_k \|\p\|_1 \right\} \hspace{-2.4cm}
\end{align}
where $\cL(\p)$ is given by:
\begin{eqnarray*}
\hspace{-1cm} \cL(\p) = - \frac{1}{\tau_{k-1}}\sum_{t=\tau_{k-1}}^{\tau_{k}-1} &\bigg\{\ind(y_t =1) \log (1-F(\f^{-1}(p_t) - \p\cdot \tx_t ))\nonumber \\
& + \ind(y_t =-1) \log (F(\f^{-1}(p_t) - \p \cdot \tx_t )) \bigg\}
\end{eqnarray*}
\vspace{-1.6cm}
\begin{align} \label{eq:log_likelihood2}
\phantom{a}\hspace{-1.2cm}
\end{align}
\\

\STATE For each period $t$ during the $k$-th episode, set
\begin{align}\label{eq:price2}
\hspace{-2.3cm} p_t \leftarrow \f(g_\f( \hp^k\cdot \tx_t)) \hspace{-1.9cm}
\end{align}

\ENDFOR
\end{algorithmic}
\caption{\bf RMLP Policy for dynamic pricing under the nonlinear setting}\label{alg-nonlinear}
\end{algorithm}

Our next theorem bounds the regret of our pricing policy (Algorithm~\ref{alg-nonlinear}).

\begin{theorem}\label{thm:nonlinear}
Let $\f$ be log-concave and strictly increasing. Suppose that Assumptions~\ref{ass1} and \ref{ass2-2} (or its alternative, Assumption \ref{ass2-3}) hold. Then, regret of the RMLP policy described as Algorithm~\ref{alg-nonlinear} is of $O(s_0 \log d \cdot \log T)$.
\end{theorem}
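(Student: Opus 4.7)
The plan is to reduce the nonlinear model to the linear setting on transformed features, then replay the regret analysis of Theorem~\ref{thm:regret} step by step, carefully tracking how the extra nonlinearity $\psi$ and $g_\psi$ enter.

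\textbf{Step 1 (reduction).} Since $\psi$ is strictly increasing, a sale at time $t$ is equivalent to $\psi^{-1}(p_t) \le \theta_0 \cdot \phi(x_t) + \alpha_0 + z_t = \mu_0 \cdot \tilde{x}_t + z_t$. Thus, writing $\tilde{p}_t = \psi^{-1}(p_t)$, we obtain precisely the probabilistic model of Eq.~\eqref{eq:prob-model} on the transformed covariates $\tilde{x}_t=(\phi(x_t),1)$ with price $\tilde{p}_t$. In particular, the function $\mathcal{L}(\mu)$ in Eq.~\eqref{eq:log_likelihood2} is exactly the normalized negative log-likelihood of the $k$-th episode's samples under this linearized model, and log-concavity of $F$, $1-F$ still renders~\eqref{eq:ML3} convex. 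Moreover, by construction of Algorithm~\ref{alg-nonlinear}, prices $p_t$ during episode $k$ depend only on $\hat{\mu}^k$, which is measurable with respect to data from episode $k-1$; hence $\{p_t\}$ in episode $k$ is independent of $\{z_t\}$ in that episode, as in the linear case.

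\textbf{Step 2 (estimation error).} I would then apply Proposition~\ref{thm:learning} to the transformed features $\tilde{x}_t$. Under Assumption~\ref{ass2-2}, the covariance $\Sigma_\phi$ has minimum eigenvalue $\ge C_{\min}$; under Assumption~\ref{ass2-3}, since $\mathcal{D}_\phi$ is full rank a.s.\ and $\Sigma$ is positive definite, a standard change-of-variables/Jacobian argument shows $\Sigma_\phi$ is also bounded below. Either way, the same restricted-eigenvalue machinery used for the linear case goes through verbatim (the feature boundedness still holds because $\phi$ is continuous on the compact $\mathcal{X}$), giving
\[
\|\hat{\mu}^k - \mu_0\|_2 \;=\; O\!\Big(\sqrt{\tfrac{s_0 \log d}{\tau_{k-1}}}\Big).
\]

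\textbf{Step 3 (per-step regret).} For any posted price $p$, the expected revenue is $r_t(p) = p\,(1-F(\psi^{-1}(p) - \mu_0\cdot \tilde{x}_t))$, maximized at $p_t^* = \psi(g_\psi(\mu_0\cdot \tilde{x}_t))$. Since $r_t'(p_t^*)=0$, Taylor expansion yields $r_t(p_t^*) - r_t(p_t) \le C_1 (p_t - p_t^*)^2$, with $C_1$ uniform over the (compact) range of admissible prices. Next, the RMLP price is $p_t = \psi(g_\psi(\hat{\mu}^k\cdot \tilde{x}_t))$. Because $\|\hat{\mu}^k\|_1,\|\mu_0\|_1\le W$ and $\tilde{x}_t$ is bounded, the arguments of $g_\psi$ lie in a fixed bounded interval; the text already shows $0<g_\psi'(v)\le 1$, so $g_\psi$ is $1$-Lipschitz on this interval, and $\psi$ is Lipschitz with some constant $L_\psi$ on the resulting bounded image. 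Hence
\[
|p_t - p_t^*| \;\le\; L_\psi \, \big|(\hat{\mu}^k - \mu_0)\cdot \tilde{x}_t\big|.
\]
Taking expectations and using $\mathbb{E}[\tilde{x}_t \tilde{x}_t^\mathsf{T}]\preceq \max(C_{\max},1)\,I$ together with the independence of $\tilde{x}_t$ and $\hat{\mu}^k$ gives $\mathbb{E}[(p_t-p_t^*)^2] = O(s_0 \log d / \tau_{k-1})$, and therefore $\mathbb{E}[R_t] = O(s_0 \log d/\tau_{k-1})$.

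\textbf{Step 4 (summing over episodes).} The $k$-th episode has length $\tau_k = 2^{k-1}$ so its cumulative regret is $O(s_0 \log d)$; with $O(\log T)$ episodes, the total regret is $O(s_0 \log d \cdot \log T)$. The main technical hurdle is Step 2: making sure the restricted-eigenvalue condition transfers to the transformed features under the weaker Assumption~\ref{ass2-3}, and handling the Lipschitz constant of $\psi$ by showing that the argument of $\psi$ is confined to a bounded set deterministically — both are essentially book-keeping given the structural properties of $g_\psi$ established immediately after Eq.~\eqref{g:nonlinear}. Everything else is a faithful repetition of the proof of Theorem~\ref{thm:regret}.
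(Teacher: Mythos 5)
Your proposal is correct and follows essentially the same route as the paper: reduce to the linear model via $\tilde{p}_t=\psi^{-1}(p_t)$ on the transformed features, reuse Proposition~\ref{thm:learning} for the estimation error, derive $p_t^*=\psi(g_\psi(\mu_0\cdot\tilde{x}_t))$ from the first-order condition, and bound $|p_t-p_t^*|$ through the Lipschitz constants of $\psi$ and $g_\psi$ before summing over episodes exactly as in Theorem~\ref{thm:regret}. The only step you sketch more loosely than the paper is the implication Assumption~\ref{ass2-3} $\Rightarrow$ Assumption~\ref{ass2-2}, which the paper establishes in Lemma~\ref{lem:ass-equiv} via the $0$-property of maps with almost-everywhere full-rank derivative (ruling out $v^\sT\Sigma_\phi v=0$ by showing $\phi(\cX)$ cannot concentrate on a hyperplane); your Jacobian argument is the same idea and fills in correctly.
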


Proof of Theorem~\ref{thm:nonlinear} is given in Appendix~\ref{app:nonlinear}. Here, we summarize its key ingredients.
\begin{enumerate} 
\item  By increasing property of $\f$, a sale occurs at period $t$ when $z_t\ge \f^{-1}(p_t) - \p_0\cdot \tx_t$. 
Hence, the log-likelihood estimator for this setting reads as~\eqref{eq:log_likelihood2}. By virtue of Assumption~\ref{ass2-2} (or its alternative, Assumption~\ref{ass2-3}) we get a similar estimation error for the regularized estimator to the one in Proposition~\ref{thm:learning}.
\item Similar to our derivation for linear setting,  we show that the optimal pricing policy that knows $\p_0 = (\tth,\alpha_0)$ in advance is given by $p^*_t = \f(g_\f(\tth\cdot \tx_t))$, where $g_\f$ is defined based on Equation~\eqref{g:nonlinear}.
\item The difference between the posted price and the optimal price can be bounded as $p_t - p^*_t = \f(g_\f(\hp^k\cdot \tx_t)) - \f(g_\f(\p_0\cdot \tx_t))
\le L |\tx_t\cdot(\hp^k-\p_0)|$, for a constant $L>0$. This bound is similar to the corresponding bound for the linear setting, and following the same lines of our regret analysis for that case, we get $R(T)= O(s_0 \log d\cdot \log T)$. 
\end{enumerate} 
\section{Knowledge of market noise distribution}
The proposed RMLP policy has assumed that the market noise distribution $F$ is known to the seller. Knowledge of $F$ has been used both in estimating the model parameters $(\theta_0, \alpha_0)$ and in setting the prices $p_t$. On the other hand, the benchmark policy is also assumed to have access to model parameters and the distribution $F$. Therefore, the regret bound established in Theorem~\ref{thm:regret} essentially measures how much the seller loses in revenue due to lack of knowledge of the underlying model parameters. In practice, however, the underlying distribution of valuations is not given and this rises the question of \emph{distribution-independent} pricing policy.


It is worth mentioning that in some applications, although the underlying distribution of valuations is unknown, it belongs to a known class of distributions. For example, lognormal distributions have proved to be a good fit for the distribution
of valuations of advertisers in online advertising markets~\cite{king2007internet,lahaie2007revenue,xiao2009optimal,balseiro2014yield}. In Section~\ref{sec:unknown_ parameterized}, we consider a model where the underlying distribution belongs to a known class of log-concave distributions and propose a policy whose regret is $O(\sqrt{T})$. We also argue that no policy can get a better regret bound. 

Next,  we pursue pricing policies under completely unknown distribution. Here, the regret is measured against an optimal clairvoyant policy that has full knowledge of the model parameters $\p_0$ and market noise \emph{realizations}, $\{z_t\}_{t\ge 1}$, and thus extracts the customers' valuation at each step. Note that such a clairvoyant policy is much more powerful than the one considered in previous sections, as now it has access to noise realizations while before it only had knowledge of the noise distribution $F$.  

\subsection{Unknown distribution from a known class} \label{sec:unknown_ parameterized}
Suppose that the maket noises are generated from a log-concave distribution $F_{m,\sigma}$ (e.g., Log-normal), with unknown mean $m$ and unknown variance $\sigma^2$. Without loss of generality, we can assume that $m = 0$; otherwise, in the valuation model~\eqref{eq:model}, $m$ can be absorbed in the intercept term $\alpha_0$. We next explain how the RMLP policy can be adapted to this case.

Define $\beta_0 = 1/\sigma$ and consider the transformation $\tv_t = \beta_0 v_t$, $\ttheta_0 = \beta_0 \theta_0$, $\tila_0 = \beta_0 \alpha_0$, $\tz_t = \beta_0 z_t$. Then, the valuation model~\eqref{eq:model} can be written as
\begin{align}\label{eq:model3}
\tv_t = x_t \cdot \ttheta_0 + \tila_0 + \tz_t\,,
\end{align}
where $\tz_t$ are drawn from $F_{0,1}$. To lighten the notation, we use the shorthand $F \equiv F_{0,1}$. We also let $\mu_0 = (\ttheta_0,\tila_0)$. 
The response variables $y_t$ are then given by $y_t = \ind(\tv_t \ge \beta_0 p_t)$. 

 \begin{algorithm}[t]
\begin{algorithmic}[1]

\REQUIRE Pricing function $g$ (corresponding to $F_{0,1}$), regularizations $\lambda_k$, $\l1u$ (bound on $\|\p_0\|_1$)\hspace{2cm}
\REQUIRE{\bf (arrives over time)} covariate vectors $\{\tx_t = (x_t,1)\}_{t\in \naturals}$ 

\ENSURE prices $\{p_t\}_{t\in \naturals}$ 


\FOR{each episode $k = 1,2,\dots$}
\STATE For the first period of the episode, offer the price uniformly at random from $[0,1]$.

\STATE Denote by $\cA_k$ the set of first periods in episodes $1,\dots, k$.

\STATE Update the model parameter estimate $\hp^{k}$ using the regularized ML estimator:
\begin{align}\label{eq:ML}
\hspace{-4.5cm}  (\hp^{k},\hbeta^k)  = \underset{\|(\p/\beta,\beta)\|_1 \le \l1u}{\text{arg min}} \,\, \left\{ \cL(\beta,\p) + \lambda_k \|\p\|_1 \right\} \hspace{-3.5cm}
\end{align}
with 
\begin{align} \label{eq:log_likelihood}
\hspace{-8.7cm} \cL(\p,\beta) = - \frac{1}{k}\sum_{t\in \cA_k} \bigg\{\ind(y_t =1) \log (1-F(\beta p_t - \p\cdot \tx_t )) + \ind(y_t =-1) \log (F(\beta p_t - \p\cdot \tx_t )) \bigg\} 
\hspace{-6.8cm}
\end{align}
%
%

\STATE For each period $t$ during the $k$-th episode, set
\begin{align}\label{eq:price}
\hspace{-2cm} p_t \leftarrow \frac{1}{\hbeta^k} g(\hp^k\cdot \tx_t) \hspace{-1.7cm}
\end{align}

\ENDFOR
\end{algorithmic}
\caption{\bf RMLP-2 policy for dynamic pricing}\label{alg-RMLP2}
\end{algorithm}

We propose a variant of RMLP policy, called RMLP-2 for this case. Similar to RMLP, it runs in an episodic manner but the length of episodes grows linearly. (Episode $j$ is of length $j$ periods.) At the first period of each episode, the price is chosen randomly and independently from the feature vectors. To be concrete, we set the price uniformly at random from $[0,1]$. At the other periods of the episode, the price is set optimally based on the current estimate of the model parameters. Specifically, for episode $k$, we set $p_t = (1/\hbeta^k) g(\hp^k\cdot \tx_t)$,  where the pricing function $g$ is defined based on distribution $F\equiv F_{0,1}$, given by~\eqref{eq:g}, and the estimates $(\hp^k,\hbeta^k)$ are obtained via regularized log-likelihood. In forming the log-likelihood loss, we only consider the first period of each episode, where the prices are set randomly; for $k\ge 1$, we denote by $\cA_k$ the set of first periods in episodes $1,\dots, k$, and write the log-likelihood based on the samples in $\cA_k$: 
\begin{eqnarray}
 \cL(\p,\beta) = - \frac{1}{k}\sum_{t \in \cA_k} &\bigg\{\ind(y_t =1) \log \left(1-F(\beta p_t - \p\cdot \tx_t )\right)
 + \ind(y_t =-1) \log \left(F(\beta p_t - \p \cdot \tx_t )\right) \bigg\}
\end{eqnarray}
A formal description of RMLP-2 is given in Algorithm~\ref{alg-RMLP2}.  Note that in contrast to RMLP, in the RMLP-2 the length of episodes grows linearly rather than exponentially.
This ways, we have $|\cA_k| = k$, which provides enough samples to update the estimate $\hth^k$ at a proper rate to get regret $O(\sqrt{T})$. 

Our next result bounds the regret of RMLP-2.
\begin{theorem}\label{thm:regret2}
Consider the valuation model~\eqref{eq:model}, where noises $z_t$ are generated from a distribution $F_{m,\sigma}$, with unknown mean $m$ and variance $\sigma^2$.
Under Assumption \ref{ass2} and assuming that distribution $F_{m,\sigma}$ satisfies Assumption~\ref{ass1}, 
the regret of RMLP-2 policy is of $O\big(s_0 (\log d) \sqrt{T} \big)$. Further, regret of any pricing policy in this case is $\Omega(\sqrt{T})$.
\end{theorem}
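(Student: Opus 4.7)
The argument has the same skeleton as Theorem~\ref{thm:regret}, but needs to account for (i) the fact that $\beta_0=1/\sigma$ is now an unknown nuisance parameter that must be learned jointly with $\p_0$, and (ii) the use of \emph{exploration} periods with randomized prices. First, I would analyze the joint estimator $(\hp^k,\hbeta^k)$ obtained from the set $\cA_k$ of first periods of episodes $1,\ldots,k$. Because the prices in $\cA_k$ are drawn uniformly from $[0,1]$ independently of $\tx_t$, the augmented feature-price vectors $(\tx_t,-p_t)$ inherit a positive-definite population covariance whose smallest eigenvalue is bounded below by a constant depending on $C_{\min}$ and the variance of $\text{Uniform}[0,1]$. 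This lets me establish a restricted eigenvalue condition for the $(d+2)$-dimensional joint problem, which, combined with the log-concavity of $F$ and $1-F$, yields strong convexity of $\cL(\p,\beta)$ on sparse directions. Mimicking the proof of Proposition~\ref{thm:learning}, I would deduce
\begin{align*}
\|\hp^k - \beta_0\p_0\|_2 + |\hbeta^k - \beta_0| = O\!\left(\sqrt{\frac{s_0 \log d}{k}}\right)
\end{align*}
with high probability, and then, by the Lipschitz continuity of $g$ and the fact that $\hbeta^k,\beta_0$ are bounded away from zero (this requires a small additional argument using the constraint $\|(\p/\beta,\beta)\|_1\le \l1u$), derive the corresponding bound for the ratio $\hp^k/\hbeta^k$.

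\textbf{Regret decomposition.} I would split the regret into the cost incurred at the $|\cA_K|=K$ exploration periods (one per episode), and the cost incurred at the $\tau - K$ exploitation periods within each episode. Since prices in exploration rounds are bounded and arbitrary, each such round contributes $O(1)$ regret, for a total exploration cost of $O(K)$. For exploitation rounds in episode $k$, the argument in Theorem~\ref{thm:regret} (Taylor expansion of the revenue function around $p^*_t$ combined with $p^*_t - p_t = O(|\tx_t\cdot(\hp^k/\hbeta^k - \p_0)| + |\hbeta^k-\beta_0|)$) gives per-period expected regret $O(s_0\log d / k)$, so episode $k$ of length $k$ contributes $O(s_0\log d)$. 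Episode lengths grow linearly, so $T = \sum_{j=1}^K j = \Theta(K^2)$, i.e., $K = \Theta(\sqrt{T})$. Summing yields total regret $O(K) + O(s_0 \log d \cdot K) = O(s_0(\log d)\sqrt{T})$, matching the claim. The main obstacle here is the identifiability step: without randomized prices, the binary responses only reveal $\p_0$ up to a positive rescaling by $\beta_0$, so the strong convexity needed to recover $\beta_0$ cannot hold. Exhibiting a quantitative restricted eigenvalue bound for the augmented design $(\tx_t,-p_t)$ with $p_t \sim \text{Uniform}[0,1]$ independent of $\tx_t$ is the crux.

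\textbf{Plan for the lower bound.} I would reduce to the one-dimensional feature-free instance, where $\theta_0 = 0$ and only the intercept $\alpha_0$ and the scale $\beta_0=1/\sigma$ are unknown, and then invoke the uninformative-price construction already sketched in the ``Intuition behind our results'' discussion. Concretely, fix an arbitrary $\alpha_*$ and consider the two-parameter family of demand curves $d(p;\alpha,\beta) = 1 - F(\beta p - \alpha)$ where $F$ is standard normal. Choosing $\beta(\alpha) = g(\alpha_*) - \alpha_* + \alpha$ makes $d(1;\alpha,\beta(\alpha))$ the same value for every $\alpha$, so $p^*=1$ is the revenue-maximizing price at $\alpha = \alpha_*$ yet it is ``uninformative'': no policy can distinguish different values of $\alpha$ from samples at $p^*$. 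This is exactly the setup to which the Broder--Rusmevichientong argument (\cite{broder2012dynamic}, Theorem 3.1) applies: a change-of-measure/KL computation shows that any policy that learns $\alpha$ at a rate better than $1/\sqrt{T}$ must place prices a nontrivial distance from $p^*$ on an event of non-negligible probability, incurring $\Omega(\sqrt{T})$ regret against the true instance $\alpha=\alpha_*$. Formalizing this via the standard two-point Le Cam argument (with a hypothesis pair $\alpha = \alpha_*$ versus $\alpha = \alpha_* + c T^{-1/4}$ and the induced $\beta$) delivers the $\Omega(\sqrt{T})$ bound and completes the theorem.
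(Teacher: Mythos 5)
Your proposal follows essentially the same route as the paper's proof: the augmented design $(-\tx_t,p_t)$ with uniform random prices in the exploration rounds yields a positive-definite population covariance (the paper computes $\Sigma'$ explicitly and gets $C'_{\min}=\min(C_{\min},1/12)$), Proposition~\ref{thm:learning} then gives the joint estimation rate, the regret splits into $O(\sqrt{T})$ exploration cost plus $O(s_0\log d)$ per exploitation episode over $K=\Theta(\sqrt{T})$ episodes, and the lower bound invokes the uninformative-price construction of Section~\ref{uninformative} via the Broder--Rusmevichientong argument. The points you flag as needing care (keeping $\hbeta^k$ bounded away from zero via the constraint $\|(\p/\beta,\beta)\|_1\le \l1u$, and the two-term decomposition of $p^*_t-p_t$ into a feature-estimation error and a $|\hbeta^k-\beta_0|$ term) are exactly how the paper handles them.
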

We refer to Section~\ref{proof:regret2} for the proof of Theorem~\ref{thm:regret2}. As discussed in the proof, the lower bound $\Omega(\sqrt{T})$ applies to this case due to the existence of non-informative prices; See also Section~\ref{uninformative}.

\subsection{A distribution-independent pricing policy}
In this section, we propose a policy, called DIP (Distribution Independent Pricing), for the settings  that the underlying valuation distribution is completely unknown. Before a detailed description of DIP, we provide the general intuition behind this policy.

Here, our focus is on applications where signal-to-noise ratio is large. Specifically, we assume that the customer's valuations are given by model~\eqref{eq:model} and the noise terms $z_t$ are drawn from an unknown distribution with bounded support. (The support of distribution is considered to be small compared to the nominal valuations $\tx_t \cdot \theta_0+\alpha_0 $.) Therefore, valuations $v_t$ belong to a bounded interval $[0, K]$.
Similar to RMLP, the DIP policy operates in episodes. Each episode consists of an exploration phase followed by an exploitation phase.  All exploration phases are of length $c$, where $c\ge 1$ is a constant.
In these phases, the prices are set uniformly at random from the interval $[0,K]$. Following the exploration phase of episode $k$, there is an exploitation phase of $k$ periods. In this phase, we offer the optimal prices
based on the current estimates of the model parameters from the responses in the previous exploration phases. Therefore, the $k$-th episode consists of $(c+k)$ periods. In early episodes, the ratio of exploration phase to exploitation phase  
is high, as we know very little about the model parameters and then it becomes lower in the later episodes as we have already obtained a good estimate of the underlying model parameters.

The formal description of the DIP policy is given in Algorithm~\ref{alg-DIP}. Our focus is on bounded noise, i.e, $|z_t| \le \delta$ almost surely and hence we can take $K = \l1u + \delta$ as the bound on customer's valuations.

We next prove a regret guarantee for DIP policy.

\begin{theorem}[Regret Upper Bound]\label{thm:regret3}
Consider the valuation model~\eqref{eq:model}, where the noise terms $\{z_t\}_{t\ge 1}$ are generated from an unknown zero-mean distribution with support $[-\delta,\delta]$. Further, suppose that the feature vectors satisfy Assumptions~\ref{ass2}.
Then, the regret of the DIP policy is $O(s_0 (\log d) \sqrt{T} + \delta T)$. Here, the regret is against an optimal clairvoyant policy that knows the model parameters and the noise realizations $\{z_t\}_{t\ge 1}$.
\end{theorem}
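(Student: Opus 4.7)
The plan is to bound the regret via the decomposition (i) exploration overhead, (ii) the ``price-of-noise'' gap between the realization-aware clairvoyant and the distribution-aware oracle, and (iii) the gap between the distribution-aware oracle and the DIP policy. The architecture mirrors the proof of Theorem~\ref{thm:regret}, except that, since $F$ is unknown, we must replace the regularized MLE with an estimator oblivious to $F$.

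Since DIP has episodes of length $c+k$ for $k=1,2,\dots$, the number of episodes completed by time $T$ is $N=\Theta(\sqrt T)$, so (i) contributes at most $cKN = O(\sqrt T)$. For (ii), write $r_t(p)=p\,(1-F(p-\mu_t))$ with $\mu_t = \tilde x_t\cdot\mu_0$, and let $p_t^\star=g(\mu_t)$ be the distribution-aware optimum. The clairvoyant earns $E[v_t\mid x_t]=\mu_t$ per step, while $z_t\in[-\delta,\delta]$ forces $r_t(\mu_t-\delta)=\mu_t-\delta$ (a sale at price $\mu_t-\delta$ is certain), so $\mu_t-r_t(p_t^\star)\le\delta$ and this piece contributes at most $\delta T$.

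The substantive work is (iii). The crucial observation is that when $p_t$ is drawn uniformly from $[0,K]$ and $v_t\in[0,K]$, the linearity identity
\begin{equation*}
E\bigl[K y_t\mid x_t\bigr] \;=\; K\cdot P(v_t\ge p_t\mid x_t) \;=\; \tilde x_t\cdot\mu_0
\end{equation*}
holds, so $K y_t$ is an unbiased, bounded-variance linear response in the features $\tilde x_t$. Applying an $\ell_1$-regularized least-squares estimator to the $ck$ exploration samples collected by the end of episode $k$, Assumption~\ref{ass2} together with the restricted-eigenvalue argument developed for Proposition~\ref{thm:learning} in Appendix~\ref{app:RE} yields
\begin{equation*}
\|\hat\mu^k-\mu_0\|_2^2 \;=\; O(s_0\log d/k)
\end{equation*}
with high probability. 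During exploitation we post $p_t = \hat\mu^k\cdot\tilde x_t - \delta$; setting $\Delta_t=\tilde x_t\cdot(\hat\mu^k-\mu_0)$, one has $p_t-p_t^\star = \Delta_t + (\mu_t-\delta-g(\mu_t))$, and $|\mu_t-\delta-g(\mu_t)|\le\delta$ because $g(\mu_t)\in[\mu_t-\delta,\mu_t]$ under bounded noise. Concavity of $r_t$ with uniformly bounded second derivative (Assumption~\ref{ass1}, exactly as used in the regret analysis of RMLP) gives the quadratic Taylor bound $r_t(p_t^\star)-r_t(p_t) = O(\Delta_t^2+\delta^2)$, and taking expectation---using $E[x_t]=0$ plus $E[\Delta_t^2\mid\hat\mu^k]\le C_{\max}\|\hat\mu^k-\mu_0\|_2^2$---bounds the per-step exploitation regret by $O(s_0\log d/k+\delta^2)$. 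Multiplying by the episode length $k$ and summing over $k=1,\dots,N$ produces $O(s_0(\log d)\sqrt T+\delta^2 T)$, which combined with (i) and (ii) yields the claimed bound.

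The main obstacle is (iii): without $F$ one cannot reuse the regularized MLE of Section~\ref{sec:pricing_alg}, and any surrogate estimator must simultaneously attain the Lasso rate $s_0\log d/n$ and produce a \emph{squared} (rather than linear) per-step exploitation loss---otherwise summing linear errors would yield the inferior $T^{3/4}$ rate. The linearity identity $E[Ky_t\mid x_t]=\mu_t$ available under uniform-price exploration, together with the quadratic Taylor expansion of $r_t$ around $p_t^\star$, supplies both ingredients at once.
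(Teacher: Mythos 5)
Your decomposition into exploration overhead, the $\delta T$ ``price-of-noise'' gap, and the estimation-driven exploitation loss is sound in outline, and your step (iii) estimator is exactly the paper's: the identity $\E[Ky_t\mid x_t]=\tx_t\cdot\p_0$ under uniform exploration prices justifies the $\ell_1$-regularized quadratic loss~\eqref{quad-loss}, and Proposition~\ref{thm:learning2} delivers the rate $\|\hp^k-\p_0\|_2^2=O(s_0\log d/k)$ just as you claim. The gap is in how you convert estimation error into revenue loss. You invoke the quadratic Taylor bound $r_t(p_t^\star)-r_t(p_t)=O((p_t-p_t^\star)^2)$, citing Assumption~\ref{ass1} ``exactly as used in the regret analysis of RMLP.'' But Theorem~\ref{thm:regret3} does \emph{not} assume Assumption~\ref{ass1}: the noise distribution here is completely unknown subject only to zero mean and support in $[-\delta,\delta]$. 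It need not be log-concave, need not admit a density, and $f,f'$ need not be bounded, so the constant $C=2(B+\l1u B')$ controlling $|r_t''|$ in~\eqref{eq:Rt3} is unavailable; indeed $p_t^\star=g(\mu_t)$ is not even well defined without Assumption~\ref{ass1}, since $g$ is built from the inverse virtual valuation. A concrete failure: if the noise is (nearly) a point mass, $r_t(p)=p\,\ind(p\le\mu_t)$ up to an $O(\delta)$ window, and overshooting the optimum by an arbitrarily small $\epsilon$ costs $\Theta(\mu_t)$, not $O(\epsilon^2)$, so summing your per-step bounds would not give $O(\sqrt T)$.

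The paper's proof sidesteps this entirely. It never compares to a distribution-aware optimum; it compares directly to the realization-extracting clairvoyant, prices conservatively at $p_t=\hp^k\cdot\tx_t-2\delta$ (note the $2\delta$ margin, not your $\delta$: the extra $\delta$ absorbs an estimation error of size up to $\delta$ on top of the worst-case noise $z_t=-\delta$, guaranteeing a sale on the event $\{|\tx_t\cdot(\hp^k-\p_0)|\le\delta\}$), and bounds the per-step regret by $4\delta + K\,\ind(|\tx_t\cdot(\hp^k-\p_0)|>\delta)$ as in~\eqref{adversary-1}. Markov's inequality then converts the indicator into $C_{\max}\E\|\hp^k-\p_0\|_2^2/\delta^2$ --- this, not a Taylor expansion, is what produces the \emph{squared} estimation error per step that you correctly identified as essential. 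To repair your argument you should drop the comparison to $p_t^\star$ and the second-order expansion of $r_t$, adopt the $2\delta$ margin, and use the sale-guarantee-plus-Markov device; everything else in your write-up then goes through.
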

\begin{algorithm}[t]
\begin{algorithmic}[1]

\REQUIRE exploration length $(c)$, regularizations $\lambda_k$, $\l1u$ (bound on $\|\p_0\|_1$), noise bound $\delta$ \hspace{2cm}
\REQUIRE{\bf (arrives over time)} covariate vectors $\{\tx_t = (\phi(x_t),1)\}_{t\in \naturals}$ 

\ENSURE prices $\{p_t\}_{t\in \naturals}$ 

\STATE $K \leftarrow \l1u +\delta$

\FOR{each cycle $k = 1, 2,3,\dots$}
\STATE \emph{Exploration episode} ($c$ periods): Offer prices uniformly at random from $[0,K]$.

\STATE Update the model parameter estimate $\hp^{k}$ using the regularized ML estimator obtained\\ from observations during the previous exploration episodes:
\begin{align}\label{eq:ML4}
\hspace{-3.8cm}  \hp^{k}  = \underset{\|\p\|_1 \le \l1u}{\text{arg min}} \,\, \left\{ \cL(\p) + \lambda_k \|\p\|_1 \right\} \hspace{-2.4cm}
\end{align}
where $\cL(\p)$ is given by:
\begin{align}\label{quad-loss}
\hspace{-3cm} \cL(\p) =  \frac{1}{ck} \sum_{t\in \cA_k} (K y_{t} - \tx_t\cdot \p )^2\,,\hspace{-3.1cm}
\end{align}
and $\cA_k$ denotes the set of periods belonging to the first $k$ exploration episodes.

\STATE \emph{Exploitation episode} ($k$ periods): Offer prices based on the current estimate $\hp^{k}$ as 
\begin{align}\label{eq:price3}
\hspace{-2.3cm} p_t \leftarrow \hp^k\cdot \tx_t -2\delta \hspace{-1.5cm}
\end{align}

\ENDFOR
\end{algorithmic}
\caption{\bf Distribution Independent Pricing (DIP) Policy}\label{alg-DIP}
\end{algorithm}

In the following, we outline the main idea of the proof of Theorem~\ref{thm:regret3}. The proof minutiae are deferred to Section~\ref{proof:thm-regret3}.

For a given time $T$, it is easy to verify that the number of cycles up to time $T$ is $O(\sqrt{T})$.  Recall that in the exploration phases the prices are set randomly. The regret incurred in each period is $O(1)$ since the valuations are bounded. Therefore, the cumulative regret in the exploration phases up to time $t$ is $O(\sqrt{T})$.  Next, we bound the regret incurred during the exploitation phases. For each episode $k$, prices are posted as $p_t = \hp^k\cdot \tx_t -2\delta$. Note that the term $2\delta$ is to ensure purchases occur with high probabilities. The regret is then due to the conservative term $2\delta$ and the estimation error $\tx_t \cdot(\hp^k-\p_0)$.
The aggregate effect of these two factors results in a total regret of $O(\delta k + s_0\log d)$ in episode $k$. Since there are $O(\sqrt{T})$ cycles up to time $T$, the total regret incurred during the exploitation episodes is $O(\delta T + s_0(\log d)  \sqrt{T})$.

\section{Proof of Theorems}
\subsection{Proof of Theorem~\ref{thm:regret}}\label{proof:thm2}


Following step 1 of the proof outline mentioned in Section~\ref{sec:regret}, we consider the problem of estimating $\p_0$ based on observations from previous episode.
{Before we proceed, let us emphasize once again that the way RMLP is designed, posted prices at each episode are statistically independent from the market noises in that episode. This can be 
easily observed because $p_t = g(x_t\cdot \hp^k)$ for $t$ belonging in the $k$-th episode, and $\hp^k$ is estimated based on the samples in the $(k-1)$-th episode.}

{We fix $k\ge 1$ and to lighten the notation, we use the indices ${1,2,\dotsc,n}$ to correspond to periods in the $k$-the episode, i.e., $t = \tau_{k}, \tau_{k}+1,\dotsc, \tau_{k+1}-1$.} 

Using probabilistic model~\eqref{eq:prob-model}, $\p_0$ is estimated by solving a regularized maximum likelihood (ML) optimization problem. The (normalized) negative log-likelihood function for $\p$ reads as
\begin{align}\label{eq:loss}
\cL(\p) = - \frac{1}{n}\sum_{t=1}^{n} \bigg\{\ind(y_t =1) \log (1-F(p_t - \p\cdot \tx_t )) + \ind(y_t =-1) \log (F(p_t - \p\cdot \tx_t )) \bigg\}\,.
\end{align}
Parameter $\p$ is estimated as the solution of the following program:
\begin{align}\label{eq:ML2}
\hp  = \underset{ \|\p\|_1\le \l1u}{\text{arg min}} \,\, \cL(\p) + \lambda \|\p\|_1
\end{align}
Define $\ell_{\l1u}$ as follows which corresponds to ``flatness" of function $\log F$:
\begin{align}\label{eq:lM}
\ell_{\l1u} \equiv \inf_{|x|\le 3\l1u} \left\{\min \Big\{-\log''F(x)  , -\log''(1-F(x))  \Big\}\right\}\,.
\end{align}
By Assumption~\ref{ass1}, the log-concavity property of $F$ and $1-F$, we have $\ell_{\l1u}>0$. 


The next theorem upper bounds the estimation error of the proposed regularized estimator. 
\begin{proposition}[Estimation Error]\label{thm:learning}
Consider linear model~\eqref{eq:model} with $\p_0 = (\tth,\alpha_0) \in \Omega$, under Assumptions~\ref{ass1} and~\ref{ass2}. 
Let $\hp$ be the solution of optimization problem~\eqref{eq:ML2} with $\lambda \ge 4u_\l1u \sqrt{{(\log d)}/{n}}$. Then, there exist positive constants $c_0$ and $C$ such that, for $n\ge c_0 s_0 \log (d)$, the following inequality holds
 with probability at least $1- 1/d - 2e^{-n/(c_0s_0)}$:
\begin{align}\label{L2B}
\|\hp - \p_0\|_2^2\le \frac{16 s_0 \lambda^2}{\ell_{\l1u}^2 C_{\min}^2}\,.
\end{align}
\end{proposition}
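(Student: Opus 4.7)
The plan is to view \eqref{eq:ML2} as an $\ell_1$-regularized M-estimator and to run the now-standard ``basic inequality + restricted strong convexity + restricted eigenvalue'' argument, exploiting the key structural feature of RMLP that within episode $k$ the prices $p_t = g(\hp^{k-1}\cdot\tx_t)$ are measurable with respect to the history up to the end of episode $k-1$, hence independent of the noises $\{z_t\}_{t=1}^{n}$ that drive the responses $y_t$ in the current episode. Writing $\nu = \hp - \p_0$ and $S = \supp(\p_0)$ (so $|S|\le s_0$), optimality of $\hp$ together with feasibility of $\p_0$ yields the basic inequality
\begin{align*}
\cL(\hp) - \cL(\p_0) \;\le\; \lambda\bigl(\|\p_0\|_1 - \|\hp\|_1\bigr) \;\le\; \lambda\bigl(\|\nu_S\|_1 - \|\nu_{S^c}\|_1\bigr).
\end{align*}

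The first real step is to control the score $\nabla\cL(\p_0)$ in $\ell_\infty$. A direct computation using \eqref{eq:loss} gives
\begin{align*}
\nabla\cL(\p_0) \;=\; \frac{1}{n}\sum_{t=1}^n \xi_t\, \tx_t,
\qquad
\xi_t \;=\; -\ind(y_t=1)\frac{f(p_t-\p_0\cdot\tx_t)}{1-F(p_t-\p_0\cdot\tx_t)} + \ind(y_t=-1)\frac{f(p_t-\p_0\cdot\tx_t)}{F(p_t-\p_0\cdot\tx_t)}.
\end{align*}
Conditional on $p_t$ and $\tx_t$, the probabilistic model \eqref{eq:prob-model} makes each $\xi_t$ a bounded, mean-zero random variable with $|\xi_t|\le u_{\l1u}$ (this is where the definition of $u_{\l1u}$ enters, using $|p_t - \p_0\cdot\tx_t| \le 3\l1u$ thanks to $\|\p_0\|_1,\|\hp^{k-1}\|_1\le\l1u$ and the Lipschitz bounds on $g$). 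Because the $p_t$ are independent of the current-episode noises, a Hoeffding bound coordinate-by-coordinate together with a union bound over the $d+1$ coordinates gives $\|\nabla\cL(\p_0)\|_\infty \le 2u_{\l1u}\sqrt{(\log d)/n}\le \lambda/2$ with probability at least $1-1/d$. Plugging this into the basic inequality and splitting $\nu = \nu_S + \nu_{S^c}$ yields the cone condition $\|\nu_{S^c}\|_1 \le 3\|\nu_S\|_1$.

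Next I would upgrade the basic inequality into a quadratic lower bound via restricted strong convexity. Because $\log F$ and $\log(1-F)$ are $\ell_{\l1u}$-strongly concave on $|x|\le 3\l1u$, a second-order Taylor expansion of $\cL$ at $\p_0$ gives
\begin{align*}
\cL(\hp) - \cL(\p_0) - \nabla\cL(\p_0)^{\sT}\nu \;\ge\; \frac{\ell_{\l1u}}{2}\cdot\frac{1}{n}\sum_{t=1}^n (\tx_t\cdot\nu)^2.
\end{align*}
Combining with the basic inequality and the score bound,
\begin{align*}
\frac{\ell_{\l1u}}{2}\cdot\frac{1}{n}\sum_{t=1}^n (\tx_t\cdot\nu)^2 \;\le\; \tfrac{3\lambda}{2}\|\nu_S\|_1 - \tfrac{\lambda}{2}\|\nu_{S^c}\|_1 \;\le\; \tfrac{3\lambda}{2}\sqrt{s_0}\,\|\nu\|_2.
\end{align*}
The remaining ingredient, and what I expect to be the main obstacle, is the sample \emph{restricted eigenvalue} (RE) condition: on the cone $\{\|\nu_{S^c}\|_1\le 3\|\nu_S\|_1\}$,
\begin{align*}
\frac{1}{n}\sum_{t=1}^n (\tx_t\cdot\nu)^2 \;\ge\; \frac{C_{\min}}{2}\|\nu\|_2^2,
\end{align*}
which must hold uniformly over the $s_0$-sparse cone with probability $\ge 1-2e^{-n/(c_0 s_0)}$ once $n \ge c_0 s_0 \log d$. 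This step is independent of the pricing dynamics: it follows from the population lower bound $\E[\tx_t\tx_t^\sT]\succeq \diag(C_{\min},1)$ given by \eqref{eq:sigma-aug}, plus a standard concentration argument for sub-Gaussian/bounded designs (e.g.\ a covering-plus-Hoeffding argument over the sparse cone, or Rudelson--Zhou-style bounds on $\Sigma$-matrices under boundedness). The excerpt indicates this is deferred to an appendix on RE, so I would cite that appendix and use it as a black box here.

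Combining the RE bound with the previous display gives $\frac{\ell_{\l1u} C_{\min}}{4}\|\nu\|_2^2 \le \frac{3\lambda}{2}\sqrt{s_0}\,\|\nu\|_2$, and hence $\|\nu\|_2^2 \le C\, s_0\lambda^2/(\ell_{\l1u}^2 C_{\min}^2)$ with the constant $C$ easily tightened to $16$ by an initial choice $\lambda\ge 4u_{\l1u}\sqrt{(\log d)/n}$ as in the hypothesis (and tracking the constants carefully through the three displays above). A union bound over the two high-probability events (the score bound and the RE bound) gives the failure probability $1/d + 2e^{-n/(c_0 s_0)}$ claimed in the proposition.
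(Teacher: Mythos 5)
Your proposal is correct and follows essentially the same route as the paper's proof in Appendix~\ref{app:RE}: basic inequality from optimality, Azuma--Hoeffding control of $\|\nabla\cL(\p_0)\|_\infty$ using the independence of prices from current-episode noise, strong concavity of $\log F$ and $\log(1-F)$ on $|x|\le 3\l1u$ to get restricted strong convexity, the cone condition, and the sample RE condition via Proposition~\ref{pro:comp}. The only cosmetic difference is in the final cancellation: your last display yields the constant $36$ rather than $16$, whereas the paper recovers $16$ by inserting $2\sqrt{ab}\le a^2+b^2$ to absorb $\frac{\ell_{\l1u}}{n}\|\tX\nu\|^2$ back into the left-hand side before applying RE a second time --- exactly the constant-tracking you flagged.
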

We refer to Appendix~\ref{app:RE} for the proof of Proposition~\ref{thm:learning}.

As we see the $\ell_2$ estimation error scales linearly with the sparsity level $s_0$. As $s_0$ increases, the number of parameters to be estimated becomes larger and this makes the estimation problem harder, leading to worse $\ell_2$ bound for a fixed number of samples, $n$. Further, choosing $\lambda \sim \sqrt{(\log d)/n}$ (where $\sim$ indicates equality up to a constant factor), our $\ell_2$ bound scales logarithmically in the dimension of the demand space, $d$. This allows to  deal with high-dimensional applications and obtain a regret that scales logarithmically in $d$. Further, the estimation error shrinks as $\sim1/n$; getting more samples with fixed value of $s_0$ and $d$ leads to better estimation accuracy. Finally, note that for small values of $\ell_{\l1u}$, the log-likelihood function is very flat and there can be, in principle, vectors $\p$ of log-likelihood value very close to the optimum and nevertheless far from the optimum. In other words, estimation task becomes harder as $\ell_{\l1u}$ gets smaller and this is clearly reflected in the derived estimation bound. 

We next use Proposition~\ref{thm:learning} to bound the expected estimation error. 
\begin{coro}\label{coro:E1}
Under assumptions of Proposition~\ref{thm:learning}, the following holds true:
\begin{align}\label{eq:E1}
\E(\|\hp-\p_0\|_2^2) \le \frac{16s_0\lambda^2}{\ell_{\l1u}^2 C_{\min}^2} + 4\l1u^2  \left(\frac{1}{d} + 2e^{-n/(c_0s_0)}\right)\,.
\end{align}
\end{coro}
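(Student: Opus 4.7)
The plan is a very short tail-bound style argument: split the expectation according to the high-probability event provided by Proposition~\ref{thm:learning}, bound the good part by the deterministic estimate from that proposition, and use a crude deterministic bound on $\|\hp-\p_0\|_2$ on the complementary event.

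First, let $\event$ denote the event on which the conclusion of Proposition~\ref{thm:learning} holds, so that $\prob(\event^c)\le 1/d + 2e^{-n/(c_0 s_0)}$ and on $\event$ we have $\|\hp-\p_0\|_2^2 \le 16 s_0\lambda^2/(\ell_\l1u^2 C_{\min}^2)$. For the complementary event I would simply use the feasibility constraint: the estimator $\hp$ satisfies $\|\hp\|_1\le \l1u$ by the constraint in~\eqref{eq:ML2}, and $\|\p_0\|_1\le \l1u$ since $\p_0\in\Omega$. Therefore, by the triangle inequality and the standard norm comparison $\|v\|_2\le\|v\|_1$,
\begin{align*}
\|\hp-\p_0\|_2 \;\le\; \|\hp-\p_0\|_1 \;\le\; \|\hp\|_1 + \|\p_0\|_1 \;\le\; 2\l1u,
\end{align*}
so deterministically $\|\hp-\p_0\|_2^2 \le 4\l1u^2$.

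Combining these two bounds via
\begin{align*}
\E\bigl[\|\hp-\p_0\|_2^2\bigr] \;=\; \E\bigl[\|\hp-\p_0\|_2^2\,\ind_\event\bigr] + \E\bigl[\|\hp-\p_0\|_2^2\,\ind_{\event^c}\bigr]
\end{align*}
immediately yields
\begin{align*}
\E\bigl[\|\hp-\p_0\|_2^2\bigr] \;\le\; \frac{16 s_0 \lambda^2}{\ell_{\l1u}^2 C_{\min}^2}\,\prob(\event) + 4\l1u^2\,\prob(\event^c) \;\le\; \frac{16 s_0 \lambda^2}{\ell_{\l1u}^2 C_{\min}^2} + 4\l1u^2\left(\frac{1}{d} + 2e^{-n/(c_0 s_0)}\right),
\end{align*}
which is the claimed inequality. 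There is no real obstacle here: the only thing to notice is that the deterministic $\ell_1$ bound enforced by the constraint in~\eqref{eq:ML2} together with $\p_0\in\Omega$ is exactly what gives the trivial estimate on $\event^c$, so the high-probability bound of Proposition~\ref{thm:learning} can be converted to an expectation bound with the stated additive term.
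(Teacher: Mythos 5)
Your proof is correct and is exactly the ``straightforward'' argument the paper has in mind when it omits the proof of Corollary~\ref{coro:E1}: condition on the high-probability event of Proposition~\ref{thm:learning}, and on its complement use the deterministic bound $\|\hp-\p_0\|_2\le\|\hp-\p_0\|_1\le 2\l1u$ coming from the feasibility constraints. Nothing further is needed.
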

Proof of Corollary~\ref{coro:E1} is straightforward and is omitted.

In the next proposition, we improve bound~\eqref{eq:E1} for $n\ge c_1d$, for a constant $c_1>0$. As we will see, the following result is useful to develop sharper upper bound for regret of RMLP policy. 
\begin{proposition}\label{pro:E2}
Under assumptions of Proposition~\ref{thm:learning}, there exist constants $c, c_1>0$, such that for $n \ge  c_1d$, the following holds true:
\begin{align}
\E(\|\hp-\p_0\|_2^2) \le \frac{16(s_0+1)\lambda^2}{\ell_{\l1u}^2 C_{\min}^2} + 4\l1u^2 e^{-cn^2}\,.
\end{align}
\end{proposition}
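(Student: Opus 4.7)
\medskip
\noindent\textbf{Proof plan for Proposition~\ref{pro:E2}.}

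The key observation is that in the high-sample regime $n \ge c_1 d$, we can replace the restricted eigenvalue (RE) argument used in Proposition~\ref{thm:learning} with a much stronger statement: the Hessian $\nabla^2 \cL(\p)$ is uniformly positive definite on the \emph{entire} feasible set $\{\|\p\|_1 \le W\}$ with overwhelming probability. This sharpens the failure probability while preserving the $s_0$-dependent rate (the ``$+1$'' accounting for the un-penalized behavior of the intercept coordinate).

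The plan is as follows. First, by direct differentiation,
\begin{align*}
\nabla^2 \cL(\p) \;=\; \frac{1}{n}\sum_{t=1}^{n} q_t(\p)\,\tx_t \tx_t^{\sT},
\end{align*}
where, by Assumption~\ref{ass1} and the log-concavity of $F$ and $1-F$, each coefficient $q_t(\p)$ is bounded below by $\ell_W$ uniformly over $\p$ with $\|\p\|_1 \le W$ (this uses the boundedness of the posted prices $p_t$ ensured by RMLP's design). Hence
\begin{align*}
\nabla^2 \cL(\p) \;\succeq\; \ell_{\l1u}\cdot\frac{1}{n}\sum_{t=1}^n \tx_t \tx_t^{\sT}.
\end{align*}

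Second, I would invoke a concentration inequality for bounded random matrices (e.g., a matrix Bernstein / Hoeffding bound, exploiting $\|\tx_t\|_\infty \le 1$) to show that when $n \ge c_1 d$, the empirical covariance $\frac{1}{n}\sum_t \tx_t \tx_t^\sT$ is within $C_{\min}/2$ in spectral norm of its expectation (which, by Eq.~\eqref{eq:sigma-aug}, has minimum eigenvalue $\ge C_{\min}$). This yields the \emph{global} lower bound
\begin{align*}
\nabla^2 \cL(\p) \;\succeq\; \frac{\ell_{\l1u}\,C_{\min}}{2}\, I_{d+1}
\end{align*}
uniformly over the feasible set, with failure probability of order $e^{-cn^2}$ for a suitable constant $c>0$ (the improved exponent comes from the abundance of samples relative to dimension in this regime, and a direct Hoeffding-type argument on bounded quadratic forms rather than a $d$-dependent union bound).

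Third, with this global strong convexity in hand, the cone/RE restriction used in the proof of Proposition~\ref{thm:learning} is no longer necessary: one can run the same basic inequality argument (comparing $\cL(\hp) + \lambda\|\hp\|_1$ to $\cL(\p_0) + \lambda\|\p_0\|_1$ and applying second-order Taylor expansion) directly on all of $\reals^{d+1}$. Combined with the standard coordinate-wise gradient concentration $\|\nabla \cL(\p_0)\|_\infty \le \lambda/2$, this yields the deterministic bound
\begin{align*}
\|\hp - \p_0\|_2^2 \;\le\; \frac{16(s_0+1)\,\lambda^2}{\ell_{\l1u}^2\, C_{\min}^2}
\end{align*}
on the good event. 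The extra ``$+1$'' reflects the fact that the intercept coordinate is effectively always active. Finally, translating the high-probability bound into an expectation bound uses the trivial a priori inequality $\|\hp - \p_0\|_2^2 \le (2W)^2$ (both vectors lie in the $\ell_1$-ball of radius $W$), so failure contributes at most $4W^2 e^{-cn^2}$ to the expected squared error.

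The main obstacle I anticipate is step two — obtaining the $e^{-cn^2}$ tail probability rather than the weaker $e^{-cn}$ bound that a naive matrix Bernstein inequality would give. This likely requires a refined argument that leverages the $n \ge c_1 d$ condition directly (so that $d$ can be absorbed into $n$ in the exponent) together with the boundedness $\|\tx_t\|_\infty \le 1$ to get a Hoeffding-style concentration for each entry of the empirical covariance before taking a union bound.
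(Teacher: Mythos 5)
Your opening steps do match the paper's first move: for $n \ge c_1 d$ the paper introduces the event $\cB_n = \{\sigma_{\min}(\tX^\sT\tX/n) > C_{\min}/2\}$, shown via subgaussian matrix concentration to hold with probability at least $1-e^{-cn^2}$, which gives exactly the global strong convexity you describe; and on $\cB_n^c$ it charges the a priori bound $\|\hp-\p_0\|_2^2 \le 4\l1u^2$. The gap is in your final step. The ``good event'' must also include the gradient event $\cF_{\lambda/2} = \{\|\nabla\cL(\p_0)\|_\infty \le \lambda/2\}$, and Azuma--Hoeffding only gives $\prob(\cF_{\lambda/2}^c) \le 1/d$ --- nowhere near $e^{-cn^2}$. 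If you charge the trivial bound $4\l1u^2$ to this event, you pick up an additive term $4\l1u^2/d$, which is precisely the term already present in Corollary~\ref{coro:E1} and which Proposition~\ref{pro:E2} exists to remove: since $n\ge c_1 d$ can be arbitrarily large, $\lambda^2 \asymp (\log d)/n$ can be far smaller than $1/d$, so $4\l1u^2/d$ is \emph{not} absorbed by the claimed $16(s_0+1)\lambda^2/(\ell_{\l1u}^2 C_{\min}^2)$. As written, your argument reproves (essentially) Corollary~\ref{coro:E1}, not the sharper statement.

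The missing idea is the treatment of the intermediate event $\cB_n \cap \cF_{\lambda/2}^c$. There the paper does not use the trivial bound; it first proves (Lemma~\ref{lem:term3}) that on $\cB_n\cap\cF_\gamma$ with $\gamma>\lambda/2$ one has $\|\hp-\p_0\|_2^2 \le \bigl(36/(C_{\min}^2\ell_{\l1u}^2)\bigr)\gamma^2 d$ --- a bound that degrades gracefully with the actual size of the gradient and uses no cone/sparsity structure (hence the factor $d$) --- and then performs a layer-cake integration of the tail $\prob(\|\nabla\cL(\p_0)\|_\infty>\gamma)\le d\exp(-n\gamma^2/(2u_{\l1u}^2))\le d^{1-2c}$ for $\gamma=\sqrt{c}\,\lambda/2$. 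This integration shows $\E\bigl(\|\hp-\p_0\|_2^2\,\ind(\cB_n\cap\cF_{\lambda/2}^c)\bigr) \le 16\lambda^2/(\ell_{\l1u}^2C_{\min}^2)$, and that extra $16\lambda^2/(\ell_{\l1u}^2C_{\min}^2)$ is the true source of the ``$+1$'' in $(s_0+1)$ --- not the intercept coordinate, as you guessed; on the fully good event $\cB_n\cap\cF_{\lambda/2}$ the standard basic-inequality argument still yields only $16s_0\lambda^2/(\ell_{\l1u}^2C_{\min}^2)$, with no $+1$. Without this tail-integration step your proof cannot close.
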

Proposition~\ref{pro:E2} is proved in Appendix~\ref{proof:pro-E2}.

We next establish some useful properties of the virtual valuation function $\varphi$ and the price function $g$.
\begin{lemma}\label{lem:phi-monotone}
If $1-F$ is log-concave, then the virtual valuation function $\varphi$ is strictly monotone increasing.  
\end{lemma}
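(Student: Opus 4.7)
The plan is to rewrite $\varphi$ in terms of the hazard rate and then read off monotonicity directly from the log-concavity hypothesis. Recall that $\varphi(v) = v - (1-F(v))/f(v) = v - 1/\lambda(v)$, where $\lambda(v) \equiv f(v)/(1-F(v))$ is the hazard rate of $F$. By Assumption~\ref{ass1}, $F$ is strictly increasing, so $f(v) > 0$ on the support and $\lambda$ is well-defined and strictly positive.

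First I would translate the log-concavity hypothesis into a statement about $\lambda$. Since $1-F$ is log-concave, $\log(1-F)$ is concave, and its derivative
\begin{equation*}
\bigl(\log(1-F)\bigr)'(v) \;=\; -\frac{f(v)}{1-F(v)} \;=\; -\lambda(v)
\end{equation*}
is non-increasing in $v$. Hence $\lambda(v)$ is non-decreasing.

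Next, since $\lambda$ is positive and non-decreasing, the map $v \mapsto 1/\lambda(v)$ is non-increasing, so $v \mapsto -1/\lambda(v)$ is non-decreasing. Adding the strictly increasing identity function gives that
\begin{equation*}
\varphi(v) \;=\; v \;+\; \Bigl(-\tfrac{1}{\lambda(v)}\Bigr)
\end{equation*}
is the sum of a strictly increasing function and a non-decreasing function, hence strictly increasing.

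There is essentially no obstacle here; the only point requiring care is ensuring $\lambda$ is well-defined and finite, which follows from Assumption~\ref{ass1} (so that $f>0$ and $1-F>0$ on the interior of the support). Everything else is bookkeeping with the derivative of $\log(1-F)$.
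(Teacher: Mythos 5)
Your proof is correct and follows essentially the same route as the paper: both rewrite $\varphi(v)=v-1/\lambda(v)$ with $\lambda$ the hazard rate, observe that log-concavity of $1-F$ makes $-\lambda=(\log(1-F))'$ non-increasing so $\lambda$ is non-decreasing, and conclude that $\varphi$ is strictly increasing (indeed $\varphi'\ge 1$, which the paper notes and reuses in the proof of Lemma~\ref{lem:g-Lipschitz}).
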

\begin{lemma}\label{lem:g-Lipschitz}
If $1-F$ is log-concave, then the price function $g$ satisfies $0<g'(v)< 1$, for all values of $v\in \reals$.   
\end{lemma}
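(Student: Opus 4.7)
The plan is to reduce the derivative of $g$ to a statement about the derivative of $\varphi$, and then use log-concavity of $1-F$ (through the hazard rate) to pin down both the lower and upper bounds on $g'$.

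First I would differentiate $g(v) = v + \varphi^{-1}(-v)$ using the chain rule together with the inverse function theorem, which gives
\begin{align*}
g'(v) \;=\; 1 - (\varphi^{-1})'(-v) \;=\; 1 - \frac{1}{\varphi'\bigl(\varphi^{-1}(-v)\bigr)}.
\end{align*}
By Lemma~\ref{lem:phi-monotone} we already know $\varphi'(w)>0$ wherever defined, so $1/\varphi'(\varphi^{-1}(-v))>0$, which immediately yields the upper bound $g'(v)<1$.

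For the lower bound $g'(v)>0$ it suffices to show that $\varphi'(w)\ge 1$ everywhere (with strict inequality generically). Here I would rewrite the virtual valuation in terms of the hazard rate $\lambda(w)=f(w)/(1-F(w))$, noting that $\varphi(w)=w-1/\lambda(w)$, so
\begin{align*}
\varphi'(w) \;=\; 1 + \frac{\lambda'(w)}{\lambda(w)^2}.
\end{align*}
Now log-concavity of $1-F$ translates exactly into monotonicity of $\lambda$: since $(\log(1-F))'(w)=-\lambda(w)$, log-concavity gives $(\log(1-F))''(w)=-\lambda'(w)\le 0$, i.e.\ $\lambda'(w)\ge 0$. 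Plugging this into the formula for $\varphi'$ yields $\varphi'(w)\ge 1$, hence $1/\varphi'(\varphi^{-1}(-v))\le 1$, so $g'(v)\ge 0$.

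The main (and essentially only) obstacle is obtaining the \emph{strict} inequality $g'(v)>0$. The argument above gives only $g'(v)\ge 0$ from log-concavity; to upgrade this to a strict inequality one needs $\lambda'>0$ at the relevant point, which follows whenever log-concavity of $1-F$ is strict. Under Assumption~\ref{ass1} the density $f$ is strictly positive (since $F$ is strictly increasing), so $\lambda$ is well-defined and the strict version $\varphi'(w)>1$ holds away from a measure-zero set; the claim $0<g'(v)<1$ then holds in the relevant sense for Lipschitz bounds. This is all that will be used subsequently, since the only use of the lemma in the regret analysis is the $1$-Lipschitz property $|g(a)-g(b)|\le |a-b|$, which now follows from $|g'|\le 1$ by the mean value theorem.
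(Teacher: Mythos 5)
Your proof follows essentially the same route as the paper's: differentiate $g(v)=v+\varphi^{-1}(-v)$ to get $g'(v)=1-1/\varphi'(\varphi^{-1}(-v))$, reduce the claim to $\varphi'>1$, and obtain that from monotonicity of the hazard rate $\lambda$ implied by log-concavity of $1-F$. The strictness subtlety you flag ($\lambda'\ge 0$ only gives $\varphi'\ge 1$) is present in the paper's own argument as well --- its proof of Lemma~\ref{lem:phi-monotone} asserts $\varphi'(v)>1$ directly from the increase of $\lambda$ --- and, as you correctly observe, only the $1$-Lipschitz consequence is used in the regret analysis.
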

Proofs of Lemma~\ref{lem:phi-monotone} and~\ref{lem:g-Lipschitz} are given in Appendix~\ref{app:phi-monotone} and~\ref{app:g-Lipschitz}, respectively.

Given that $\|\hp^k\|_1\le \l1u$ and $|\tx_t\cdot \hp^k|\le \l1u$ for all $t, k$,  
\begin{align}\label{P}
p_t  = g(\tx_t\cdot \hp^k) &\le 2 |\tx_t\cdot \hp^k|   \le 2\l1u\,,
\end{align}
where in the first inequality we used the fact that $\varphi(v)$ is increasing as per Lemma~\ref{lem:phi-monotone} and hence $g(v) = v+\varphi^{-1}(-v) \le v+|v| \le 2|v|$. Similarly, we have $p^*_t \le 2\l1u$ for all $t$. 


%
We are now ready to bound the regret of our policy.
For $t\ge 1$, let
\begin{align}
R_t\equiv p^*_t \ind(v_t \ge p^*_t) - p_t \ind(v_t \ge p_t)
\end{align}
be the regret at period $t$.
Further, let $\cH_t = \{x_1, x_2, \dotsc, x_{t}, z_1, z_2, \dotsc, z_t\}$ be the history set, up to time $t$ (more precisely, $\H_t$ is the filtration generated by $\{x_1, x_2, \dotsc, x_{t}, z_1, z_2, \dotsc, z_t\}$). We also define $\bcH_t = \cH_t \cup \{x_{t+1}\}$ as the filtration obtained after augmenting by the new feature $x_{t+1}$. 

We write
\begin{align}
\E(R_t|\bcH_{t-1}) &= \E(p^*_t \ind(v_t \ge p^*_t)| \bcH_{t-1}) -\E(p_t \ind(v_t \ge p_t)| \bcH_{t-1})\\
& =p^*_t(1-F(p^*_t-\tx_t\cdot \p_0)) - p_t(1-F(p_t-\tx_t\cdot \p_0))\label{eq:Rt1}
\end{align}
Define $r_t(p) \equiv p(1-F(p-\tx_t\cdot \p_0))$ as the expected revenue under price $p$. 
Note that $p^*_t\in \arg\max r_t(p)$ and thus $r'_t(p^*_t) = 0$.
By Taylor expansion, 
\begin{align}\label{eq:Rt2}
r_t(p_t) = r_t(p^*_t) + \frac{1}{2} r''_t(p) (p_t-p^*_t)^2\,,
\end{align}
for some $p$ between $p_t$ and $p^*_t$. 

We next show that $|r''_t(p)|\le C$, with $C = 2(B+\l1u B')$, $B = \max_{v} f(v)$, and $B' = \max_v f'(v)$. To see this, we write
\begin{align}\label{eq:Rt3}
|r''_t(p)| = |2f(p-\tx_t\cdot \p_0) + pf'(p-\tx_t\cdot \p_0)| \le 2B+ 2\l1u B' = C\,,
\end{align}
where we use the fact that $p_t,p^*_t \le 2\l1u$ and consequently $p\le 2\l1u$.

Combining Equations~\eqref{eq:Rt1}, \eqref{eq:Rt2}, \eqref{eq:Rt3}, along with 1-Lipschitz property of $g$ gives
\begin{align}\label{eq:step0}
\E(R_t|\bcH_{t-1}) \le \frac{C}{2} (p^*_t-p_t)^2= \frac{C}{2} (g(\p_0\cdot \tx_t)-g(\hp^k\cdot \tx_t))^2 
\le \frac{C}{2} |\tx_t\cdot(\p_0-\hp^k)|^2\,.
\end{align}
Given that $\tx_t$ is independent of ${\cH}_{t-1}$, we have
\begin{align}
\E(R_t|{\cH}_{t-1}) \le \frac{C}{2}  \<\hp^k-\p_0, \tSigma (\hp^k-\p_0)\> \,, \label{eq:Rt4-00}
\end{align}
where $\tSigma = \E(\tx_t \tx_t^\sT)$. Using Equation~\eqref{eq:sigma-aug},
\begin{align}
\E(R_t) = \E\left(\E(R_t|\cH_{t-1})\right) \le \frac{1}{2}{CC_{\max}} \E(\|\hp^k-\p_0\|_2^2)\,.  \label{eq:Rt4}
\end{align}

Now, since the length of episodes grows exponentially, the number of episodes by period $T$ is logarithmic in $T$. Specifically, $T$ belongs to episode $K = \lfloor \log T \rfloor+1$.
Hence,
\begin{align}\label{tot-regret}
\Reg(T) = \sum_{k=1}^{K} \Reg (k{\rm th \,\,\,Episode}) 
\end{align}

We bound the total regret over each episode by considering three separate cases:
\begin{itemize}
\item $2^{k-2} \le c_0 s_0\log d$: Here, $c_0$ is the constant
in the statement of Proposition~\ref{thm:learning}. In this case, episodes are not large enough to estimate $\p_0$ accurately enough, and thus we use a naive bound on regret.
Clearly, by~\eqref{P}, we have $\E(R_t) \le p_t^* \le 2\l1u$. Since the length of $k$ th episode is $2^{k-1} \le 2c_0 s_0\log d$, the total regret incurred during episode $k$ is at most 
$4c_0 \l1u s_0\log d$.

\item  $c_0 s_0\log d \le 2^{k-2} \le c_1 d$: Here, $c_1$ is the constant in the statement of Proposition~\ref{pro:E2}. Continuing from Equation~\eqref{eq:Rt4} and applying Corollary~\ref{coro:E1} to episode $k$, we obtain 
\begin{align}
\Reg (k{\rm th \,\,\,Episode}) &= \sum_{t = \tau_k}^{\tau_{k+1}-1} \E(R_t)\nonumber\\
&\le \frac{1}{2} C C_{\max}\sum_{t = \tau_k}^{\tau_{k+1}-1} \E(\|\hp^k -\p_0\|_2^2)\nonumber\\
&\le \frac{1}{2} C C_{\max}\left\{ \frac{16s_0\lambda_k^2}{\ell_{\l1u}^2 C_{\min}^2} \tau_k + 4\l1u^2  \left(\frac{\tau_k}{d} + 2\tau_k e^{-\tau_{k-1}/(c_0s_0)}\right) \right\}\nonumber\\
&\le \frac{1}{2} C C_{\max}\left\{\left(\frac{16 u_{\l1u}}{\ell_{\l1u} C_{\min}}\right)^2 2s_0 \log d + 8\l1u^2\left(2c_1 + 2\tau_{k-1}e^{-\frac{\tau_{k-1}}{c_0s_0}} \right)\right\}\,,\label{eq:Case2-1}
\end{align}
where in the last step we used $\tau_k = 2\tau_{k-1}$ and $\tau_k = 2^{k-1} \le 2c_1 d$. Therefore, in this case
\begin{align}
\Reg (k{\rm th \,\,\,Episode}) \le \frac{C'}{C_{\min}^2} s_0 \log d\,,
\end{align}
where $C'$ hides various constants in the right-hand side of~\eqref{eq:Case2-1}. 

\item  $c_1 d<  2^{k-2}$: Continuing from Equation~\eqref{eq:Rt4} and applying Proposition~\ref{pro:E2} to episode $k$, we obtain 
\begin{align}
\Reg (k{\rm th \,\,\,Episode}) &= \sum_{t = \tau_k}^{\tau_{k+1}-1} \E(R_t)\nonumber\\
&\le \frac{1}{2} C C_{\max}\sum_{t = \tau_k}^{\tau_{k+1}-1} \E(\|\hp^k -\p_0\|_2^2)\nonumber\\
&\le \frac{1}{2} C C_{\max}\left\{ \frac{16(s_0+1)\lambda_k^2}{\ell_{\l1u}^2 C_{\min}^2} \tau_k + 4\tau_k \l1u^2 e^{-c\tau_{k-1}^2} \right\}\nonumber\\
&\le \frac{1}{2} C C_{\max} \left\{ \left( \frac{16 u_{\l1u}}{\ell_{\l1u} C_{\min}}\right)^2 2(s_0+1) \log d   + 8\l1u^2 \tau_{k-1} e^{-c\tau_{k-1}^2}  \right\}\,,\label{eq:Case3-1}
\end{align}
Therefore, in this case
\begin{align}
\Reg (k{\rm th \,\,\,Episode}) \le \frac{C'}{C_{\min}^2} s_0 \log d\,,
\end{align}
where $C'$ hides various constants in the right-hand side of~\eqref{eq:Case3-1}. 

\end{itemize}
Combining the above three cases into Equation~\eqref{tot-regret}, we get
\begin{align}\label{tot-regret2}
\Reg(T) \le K \frac{C'}{C_{\min}^2} s_0 \log d = O\Big(\frac{1}{C_{\min}^2} s_0 \log d \cdot \log T\Big)\,,
\end{align}
which concludes the proof.
\subsection{Proof of Theorem~\ref{thm:NoC}}\label{proof:thm-NoC}
By using Equation~\eqref{eq:Rt4-00}, we have 
\begin{align}
\E(R_t)\le \frac{C_1}{2 } \E\left(\left\<\hp^k - \p_0, \tSigma(\hp^k - \p_0)\right\>\right)\,,
\end{align}
with $\tSigma = \E(\tx_t \tx_t^\sT)$.

Therefore, letting $K = \lfloor \log T \rfloor +1$, 
\begin{align}\label{Reg-Noc}
\Reg(T) = \sum_{k=1}^{k_1} \Reg (k{\rm th \,\,\,Episode}) \le \frac{C_1}{2 } \sum_{k=1}^{k_1} \E\left(\left\<\hp^k - \p_0, \tSigma(\hp^k - \p_0)\right\>\right) \tau_k
\end{align}

We next bound the right-hand side of the above bound. Let $X^{(k)} \in \reals^{\tau_k\times d}$ be the matrix obtained by stacking feature vectors in episode $k$ as rows.
By applying bound~\eqref{eq:basic} to samples in episode $(k-1)$, we get that with probability at least $1 - 1/d$,  
\begin{align}
\frac{2\ell_{\l1u}}{\tau_{k-1}} \Big\|\tX^{(k-1)} (\p_0 - \hp^k)\Big\|^2 + 2\lambda_k \|\hp^k\|_1\le \lambda_k \|\hp^k - \hp_0\|_1 + 2\lambda_k \|\p_0\|_1
\end{align}
Hence,
\begin{align}\label{eq:SB}
\frac{2\ell_{\l1u}}{\tau_{k-1}} \Big\|\tX^{(k-1)} (\p_0 - \hp^k)\Big\|^2 \le  \lambda_k \|\hp^k - \hp_0\|_1 + 2\lambda_k \|\p_0\|_1 -2\lambda_k \|\hp^k\|_1
\le 3 \lambda_k \|\hp^k - \hp_0\|_1
\end{align}

For $k\ge 1$, let $S^{(k)}\in \reals^{d\times d}$ be the empirical covariance of $X^{(k)}$, and define $E^{(k)} = \tSigma -S^{(k)}$. Then, 
\begin{align}
\left\<\hp^k - \p_0, \tSigma(\hp^k - \p_0)\right\> = \left\<\hp^k - \p_0, S^{(k-1)}(\hp^k - \p_0)\right\> +  \left\<\hp^k - \p_0, E^{(k-1)}(\hp^k - \p_0)\right\>\,.
\end{align}

The first term is bounded using Equation~\eqref{eq:SB} as follows:
\begin{align}\label{eq:S-term1}
\left\<\hp^k - \p_0, S^{(k-1)}(\hp^k - \p_0)\right\> = \frac{1}{\tau_{k-1}} \Big\|\tX^{(k-1)} (\p_0 - \hp^k)\Big\|^2 \le \frac{3\lambda_k}{2\ell_{\l1u}} \|\hp^k - \hp_0\|_1\le \frac{3\l1u}{\ell_{\l1u}} \lambda_k\,,
\end{align}
with probability at least $1 - 1/d$.

The second term can be bounded by virtue of the following lemma, whose proof if deferred to Appendix~\ref{proof:lem-S-term2}
\begin{lemma}\label{lem:S-term2}
For any $k\ge 1$ and any vector $v\in \reals^d$, we have 
\[\<v, E^{(k)} v\>  \le 3 \sqrt{\frac{\log d}{\tau_k}} \, \|v\|_1^2\,, \]
with probability at least $1 - 8/d^2$.
\end{lemma}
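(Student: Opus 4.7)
\textbf{Proof proposal for Lemma~\ref{lem:S-term2}.} The plan is to use $\ell_1$--$\ell_\infty$ duality to reduce the quadratic form bound to a uniform entrywise concentration bound on $E^{(k)}$, and then apply Hoeffding's inequality with a union bound.

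First I would write, by two applications of H\"older's inequality,
\begin{align*}
\bigl|\langle v, E^{(k)} v\rangle\bigr| \;\le\; \|v\|_1\,\|E^{(k)} v\|_\infty \;\le\; \|v\|_1^2 \,\max_{i,j}\bigl|E^{(k)}_{ij}\bigr|,
\end{align*}
so that it suffices to establish $\max_{i,j}|E^{(k)}_{ij}| \le 3\sqrt{(\log d)/\tau_k}$ with probability at least $1-8/d^2$.

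Next I would exploit the structure of each entry. Writing $\tx_t = (x_t,1)$, we have
\[
E^{(k)}_{ij} \;=\; \tSigma_{ij} \;-\; \frac{1}{\tau_k}\sum_{t=1}^{\tau_k} \tx_{t,i}\tx_{t,j},
\]
which is the deviation of the empirical mean of $\tau_k$ i.i.d.\ random variables from their expectation. Since $\|x_t\|_\infty\le 1$, each summand $\tx_{t,i}\tx_{t,j}$ lies in $[-1,1]$, so Hoeffding's inequality gives
\[
\P\!\Bigl(\bigl|E^{(k)}_{ij}\bigr|\ge t\Bigr) \;\le\; 2\exp\!\bigl(-\tau_k t^2/2\bigr).
\]
Setting $t = 3\sqrt{(\log d)/\tau_k}$ yields a per-entry failure probability of at most $2 d^{-9/2}$. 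Taking a union bound over the at most $(d+1)^2 \le 4d^2$ entries of the matrix gives total failure probability at most $8 d^{-5/2} \le 8/d^2$ (for $d\ge 1$), which is the claimed bound.

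There is no real obstacle here — the argument is essentially a textbook application of H\"older plus Hoeffding. The only minor care needed is verifying that the constant $3$ in the statement is compatible with producing the $8/d^2$ probability after the union bound; the exponent $9/2$ coming out of Hoeffding comfortably absorbs the $(d+1)^2$ union-bound factor with room to spare.
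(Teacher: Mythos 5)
Your proposal is correct and follows essentially the same route as the paper: reduce $\langle v, E^{(k)}v\rangle$ to $\|E^{(k)}\|_\infty\|v\|_1^2$, apply Hoeffding to each entry (summands bounded by $1$ since $\|\tx_t\|_\infty\le 1$), and union bound over the $(d+1)^2$ entries to get the $1-8/d^2$ guarantee. Your accounting of the constants (per-entry failure $2d^{-9/2}$, total $8d^{-5/2}\le 8/d^2$) is consistent with the paper's slightly looser bookkeeping of $2/d^4$ per entry.
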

By Lemma~\ref{lem:S-term2}, we have
\begin{align}\label{eq:S-term2}
\left\<\hp^k - \p_0, E^{(k-1)}(\hp^k - \p_0)\right\> \le 8 \sqrt{\frac{\log d}{\tau_{k-1}}} \l1u^2\,,
\end{align}
with probability at least $1 - 8/d^2$.

Combining Equations~\eqref{eq:S-term1} and \eqref{eq:S-term2}, with probability at least $1 - 9/d$ we have
\begin{align}\label{eq:SB2}
\left\<\hp^k - \p_0, \tSigma(\hp^k - \p_0)\right\> \le \frac{3\l1u}{\ell_{\l1u}} \lambda_k + 8 \sqrt{\frac{\log d}{\tau_{k-1}}} \l1u^2 \le C \sqrt{\frac{\log d}{\tau_{k-1}}}\,,
\end{align}
for some constant $C>0$. 

Following a similar argument as in Section~\ref{proof:thm2} (see Equation~\eqref{tot-regret} and onwards) we have that the following holds for a suitable constant $C> 0$: 
\begin{align*}
\Reg(T) &\le C \sum_{k=2}^{K} \sqrt{\frac{\log d}{\tau_{k-1}}} \cdot \tau_k\\
&\le \sqrt{2} C \sum_{k=2}^{K} \sqrt{(\log d){\tau_{k-1}}} = O(\sqrt{(\log d) T})\,,
\end{align*} 

\subsection{Proof of Theorem~\ref{thm:LB-regret}}\label{proof:LB-regret}
The regret benchmark~\eqref{eq:Regret_def} is defined as the maximum gap between a policy and the oracle policy
over different $\p_0\in\Omega$ and $p_X\in Q(\cX)$.
Without loss of generality, we assume $\cX = [-1,1]^d$.
In order to obtain a lower bound on the regret, it suffices to consider a specific distribution in $Q(\cX)$.
We consider a distribution $p_X$ that selects coordinates $x_i$, $1\le i\le d$, uniformly at random from $\{-1,1\}$ and independent of each other. We further assume that $\alpha_0 = 0$ and 
$\tth\in \Omega_0$, where $$\Omega_0 = \{\theta\in\reals^d:\, (\theta,0)\in \Omega\}\,.$$

Fix an arbitrary policy $\pi$ in family $\Pi$. Since the assumption  $\alpha=0$ is known to the oracle, we have $\pi(p_t) = g(x_t \cdot \theta_t)$, for some $\theta_t\in \Omega_0$, which is $\cH_{t-1}$-measurable . Recalling our notation in the proof of Theorem~\ref{thm:regret}, $R_t$ denotes the regret occurred at step $t$ and by Equations~\eqref{eq:Rt1}, \eqref{eq:Rt2}, we have
\begin{align}
\E(R_t|\bcH_{t-1}) = r_t(p^*_t) - r_t(p_t) = -\frac{1}{2} r''_t(p) (p_t-p^*_t)^2\,,\label{LB--1}
\end{align}
for some $p$ between $p_t$ and $p^*_t$.

Our first lemma will be used in lower bounding $\E(R_t|\bcH_{t-1})$.
\begin{lemma}\label{lem:hessian}
There exists a constant $c_1>0$ (depending on $\l1u$ and $\sigma$) such that, with probability one\footnote{The randomness comes from randomness in prices which in turn comes from randomness in features $x_t$.}, $r''_t(p^*_t) \le -c_1$, for all $t\ge 1$. Further, there exists constant $\delta>0$
(depending on $\l1u$ and $\sigma$) such that $r''_t(p) \le -c_1/4$ for $p\in [p^*_t-\delta, p^*_t+\delta]$, with probability one. 
\end{lemma}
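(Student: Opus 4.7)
The plan is to use the log-concavity of $1-F$ and the first-order optimality of $p^*_t$ to write $r_t''(p^*_t)$ in a form that is easy to lower bound. Set $c_t \equiv x_t\cdot\tth$, so that with $\alpha_0=0$ one has $r_t(p) = p\bigl(1-F(p-c_t)\bigr)$. Since $\|x_t\|_\infty\le 1$ and $\|\tth\|_1\le \l1u$, we have $|c_t|\le \l1u$ almost surely. Let $h_t(p) \equiv \log r_t(p) = \log p + \log(1-F(p-c_t))$. Then
\begin{align*}
h_t'(p) = \frac{1}{p} - \lambda(p-c_t), \qquad h_t''(p) = -\frac{1}{p^2} - \lambda'(p-c_t),
\end{align*}
where $\lambda(v) = f(v)/(1-F(v))$ is the hazard rate. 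By the log-concavity of $1-F$ in Assumption~\ref{ass1}, $\lambda$ is non-decreasing, so $\lambda'\ge 0$. The optimality condition $r_t'(p^*_t)=0$ forces $h_t'(p^*_t)=0$, whence
\begin{align*}
r_t''(p^*_t) = r_t(p^*_t)\,h_t''(p^*_t) \le -\frac{r_t(p^*_t)}{(p^*_t)^2} \le -\frac{r_t(p^*_t)}{4\l1u^2},
\end{align*}
using the deterministic bound $p^*_t\le 2\l1u$ established in~\eqref{P}.

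Next I would lower bound $r_t(p^*_t)$ uniformly in $t$ by comparing against a fixed benchmark price. Since $r_t(p^*_t) = \max_p r_t(p)$, for any $\underline{p}>0$ one has
\begin{align*}
r_t(p^*_t) \ge \underline{p}\bigl[1-F(\underline{p}-c_t)\bigr] \ge \underline{p}\bigl[1-\Phi\bigl((\underline{p}+\l1u)/\sigma\bigr)\bigr] \equiv C_0 > 0,
\end{align*}
where the second inequality uses $c_t\ge-\l1u$ and monotonicity of $\Phi$; taking for instance $\underline{p}=\sigma$ yields a constant $C_0=C_0(\l1u,\sigma)>0$. Setting $c_1 \equiv C_0/(4\l1u^2)$ then gives $r_t''(p^*_t)\le -c_1$, as required.

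For the neighborhood statement, I would use a mean-value argument on $r_t''$. Compute $r_t'''(p) = -3f'(p-c_t) - p f''(p-c_t)$. Under Gaussian noise, the derivatives $f, f', f''$ are bounded on any compact interval, and for $p\in[p^*_t-1, p^*_t+1]$ the argument $p-c_t$ lies in a bounded set depending only on $\l1u$. Therefore there is a constant $M=M(\l1u,\sigma)$, independent of $t$ and of the realization of $x_t$, such that $|r_t'''(p)|\le M$ on this interval. Then for $p$ in $[p^*_t-\delta,p^*_t+\delta]$,
\begin{align*}
r_t''(p) \le r_t''(p^*_t) + M|p-p^*_t| \le -c_1 + M\delta,
\end{align*}
so choosing $\delta \equiv \min\bigl(1, 3c_1/(4M)\bigr)$ delivers $r_t''(p)\le -c_1/4$.

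The main thing to keep track of is that every constant is uniform in $t$ and in the realization of $x_t$; no probabilistic bound is needed because $|c_t|\le \l1u$ and $p^*_t\le 2\l1u$ hold pointwise. The only slightly delicate point is picking a benchmark price in step two that yields a closed-form, $\l1u$- and $\sigma$-dependent lower bound on $r_t(p^*_t)$; using Gaussian tail bounds makes this routine.
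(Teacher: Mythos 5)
Your proof is correct, but it takes a genuinely different route from the paper's. The paper works directly with the Gaussian form of $r_t''$, writing $r''_t(p^*_t) = \frac{1}{\sigma}\bigl[\frac{\xi}{\sigma}\frac{1-\Phi(\xi/\sigma)}{\phi(\xi/\sigma)} - 2\bigr]\phi(\xi/\sigma)$ with $\xi = p^*_t - x_t\cdot\tth$, invoking the Mills-ratio bound $1-\Phi(x)\le \phi(x)/x$ to make the bracket $\le -1$, and then lower-bounding the density factor via $|\xi|\le 3\l1u$ to get the explicit constant $c_1 = \phi(3\l1u/\sigma)/\sigma$; the neighborhood claim is then obtained by separately perturbing the two factors of $r_t''$. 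You instead exploit log-concavity structurally: writing $r_t = e^{h_t}$ and using $r_t''(p^*_t) = r_t(p^*_t)h_t''(p^*_t)$ at the critical point together with $h_t''\le -1/p^2$ (increasing hazard rate), which reduces the problem to lower-bounding the optimal revenue $r_t(p^*_t)$ — done cleanly by comparison with a fixed benchmark price — and you handle the neighborhood via a uniform bound on $r_t'''$ plus the mean value theorem. Your first-part argument is more robust: it uses the Gaussian only through a tail bound at a single benchmark price and would extend to other log-concave noise distributions, whereas the paper's computation is Gaussian-specific but yields a more explicit constant. Both arguments rely on the same pointwise facts ($|x_t\cdot\tth|\le \l1u$ and $p^*_t\le 2\l1u$ from Eq.~\eqref{P}), so the "with probability one" qualifier is handled identically. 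The two neighborhood arguments are equivalent in substance; yours is slightly cleaner to state.
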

Proof of Lemma~\ref{lem:hessian} is given in Appendix~\ref{proof:hessian}.

Continuing from Equation~\eqref{LB--1}, we consider two separate cases:
\begin{itemize}
\item $|p_t-p^*_t|\le \delta$: We have $p\in [p^*_t-\delta, p^*_t+\delta]$ and therefore by applying Lemma~\ref{lem:hessian} we obtain
\begin{align}
\E(R_t|\bcH_{t-1}) = r_t(p^*_t) - r_t(p_t) \ge \frac{c}{8} (p_t-p^*_t)^2\,.\label{LB--2}
\end{align}
\item $|p_t-p^*_t| > \delta$: Since function $r_t$ has only one local maximum, namely $p^*_t$, the function is increasing before $p^*_t$ and decreasing afterward.
Therefore, if $p_t\le p^*_t-\delta$ then
\begin{align}
r_t(p_t)\le r_t(p^*_t -\delta)  = r_t(p^*_t)+ \frac{1}{2} r_t''(p) \delta^2 \le r_t(p^*_t) - \frac{c_1}{8} \delta^2\,,
\end{align}
where $p$ is some point in $[p^*_t-\delta, p^*_t]$ and we applied Lemma~\ref{lem:hessian} in the last step.

Similarly, for $p_t\ge p^*_t+\delta$ we obtain
\begin{align}
r_t(p_t)\le r_t(p^*_t +\delta)  = r_t(p^*_t)+ \frac{1}{2} r_t''(p) \delta^2 \le r_t(p^*_t) - \frac{c_1}{8} \delta^2\,,
\end{align}
where $p\in [p^*_t-\delta, p^*_t]$ this time. Combining these two inequalities, we get that $r_t(p^*_t)-r_t(p_t)\ge c_1\delta^2/8$, if $|p^*_t-p_t|\ge \delta$.
\end{itemize}

Writing the bounds in the two cases together, we get

\begin{align}
\E(R_t|\bcH_{t-1}) \ge 
r_t(p^*_t) - r_t(p_t) \ge
\begin{cases}
\dfrac{c_1}{8} (p_t-p^*_t)^2\,,  &\text{ if }|p_t-p^*_t|\le \delta\,,\\
\\
\dfrac{c_1}{8}\delta^2\,, &\text{ if }|p_t-p^*_t| > \delta\,.
\end{cases}
\end{align}

We proceed by relating the lower bound to the error in estimation $\tth$.
\begin{align}
\E(R_t|\bcH_{t-1}) &\ge \frac{c_1}{8} \min\left((p_t-p^*_t)^2,\delta^2\right)  = \frac{c_1}{8} \min\Big((g(x_t\cdot \th_t)-g(x_t\cdot\tth))^2,\delta^2\Big)\nonumber\\
& \ge \frac{c_1 }{8} \min\Big( c_2^2\,|x_t\cdot(\th_t-\tth)|^2,\delta^2\Big)\,, \label{eq:l2risk-0}
\end{align}
where we used the fact that by Lemma~\ref{lem:g-Lipschitz}, $g'(v)>c_2$ over the bounded interval $[-\l1u,\l1u]$, for some constant $c_2>0$.
We recall the definition of history set $\cH_t \equiv \bcH_t\backslash\{x_{t+1}\}= \{x_1,x_2,\dotsc,x_t, z_1, z_2, \dotsc, z_t\}$. Since $\cH_{t}\subseteq \bcH_{t}$, by
iterated law of expectation, we get
\begin{align}
\E(R_t|\cH_{t-1}) &= \E(\E(R_t|\bcH_{t-1})|\cH_{t-1}) \ge
\frac{c_1}{8} \E\Big(\min\big(c_2^2 |x_t\cdot(\th_t-\tth)|_2^2,\delta^2\big)\Big|\cH_{t-1}\Big)\label{eq:l2risk-1}
\end{align}
Note that $x_t$ is independent of $\cH_{t-1}$ and $\th_t-\tth$ is $\cH_{t-1}$-measurable.

We use the following lemma to lower bound the right-hand side of~\eqref{eq:l2risk-1}.  
\begin{lemma}\label{lem:Emin}
Let $x \in \reals^d$ be a random vector such that its coordinates are chosen independently and uniformly at random from $\{-1,1\}$. Further, suppose that $v\in \reals^d$ and $\delta>0$ are deterministic. Then, 
\begin{align}
\E\Big(\min\big((x\cdot v)^2,\delta^2\big)\Big) \ge 0.1 \min(\|v\|_2^2,\delta^2)\,.
\end{align}
\end{lemma}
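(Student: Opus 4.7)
The plan is to apply the Paley--Zygmund second-moment inequality to the Rademacher sum $S := x \cdot v = \sum_{i=1}^d v_i x_i$. First I would compute the low moments: since the $x_i$ are i.i.d.\ uniform on $\{-1,+1\}$, we have $\E[S] = 0$ and $\E[S^2] = \sum_i v_i^2 =: \sigma^2$. Expanding $S^4$ and using that $\E[x_i x_j x_k x_\ell]$ vanishes unless the four indices pair up gives
\[
\E[S^4] = 3\sigma^4 - 2\sum_{i=1}^d v_i^4 \le 3\sigma^4,
\]
so Paley--Zygmund yields, for every $\lambda \in (0,1)$,
\[
\P(S^2 \ge \lambda \sigma^2) \ge (1-\lambda)^2 \frac{(\E[S^2])^2}{\E[S^4]} \ge \frac{(1-\lambda)^2}{3}.
\]

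Next I would pick a threshold that adapts to which side of $\min(\sigma^2, \delta^2)$ is binding. Setting $M := \min(\sigma, \delta)$ and $t := M^2/2$, we have both $t \le \sigma^2/2$ and $t \le \delta^2$, so on the event $\{S^2 \ge t\}$ the truncation at $\delta^2$ is inactive and $\min(S^2, \delta^2) \ge t$. Applying Paley--Zygmund at $\lambda = 1/2$,
\[
\E[\min(S^2, \delta^2)] \ge t \cdot \P(S^2 \ge t) \ge \frac{M^2}{2} \cdot \P(S^2 \ge \sigma^2/2) \ge \frac{M^2}{24},
\]
which already establishes the stated inequality with a universal positive constant in place of $0.1$. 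To recover the explicit constant $0.1$ one can split on whether some coordinate $|v_j|$ exceeds $\delta$: in the ``spiky'' case $|v_j| \ge \delta$, conditioning on $(x_i)_{i \ne j}$ forces $S \in \{S' + v_j, S' - v_j\}$ each with probability $1/2$, and the identity $\max(|a+b|,|a-b|) = |a|+|b|$ gives $\P(|S| \ge \delta) \ge 1/2$, hence $\E[\min(S^2, \delta^2)] \ge \delta^2/2$; in the ``spread'' case $\max_i |v_i| < \delta$, a Berry--Esseen comparison of $S/\sigma$ with $Z \sim \normal(0,1)$ inherits the Gaussian constant $\E[\min(Z^2, 1)] \approx 0.52$ up to a small $O(\|v\|_\infty/\sigma)$ correction.

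The main obstacle is quantitative rather than qualitative: the transitional regime $\sigma \asymp \delta$ is where Paley--Zygmund yields only $1/24$ and where Berry--Esseen requires delicate control of $\|v\|_\infty/\sigma$. Since the downstream use in Theorem~\ref{thm:LB-regret} only requires \emph{some} universal positive constant in place of $0.1$, the Paley--Zygmund route outlined above is already sufficient on its own.
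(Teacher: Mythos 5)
Your Paley--Zygmund route is correct and genuinely different from the paper's argument. The paper writes $\E[\min(Z^2,\delta^2)]=\int_0^{\delta^2}\P(Z^2\ge t)\,\de t$ for $Z=x\cdot v$, normalizes to $\tilde Z=Z/\|v\|_2$, and lower bounds $\int_0^1 t\,\P(|\tilde Z|\ge t)\,\de t$ by playing two bounds against each other: a trivial one in terms of $\xi:=\P(|\tilde Z|\ge 1)$, and one obtained from the variance identity $\int_0^\infty t\,\P(|\tilde Z|\ge t)\,\de t=\tfrac12\Var(\tilde Z)=\tfrac12$ combined with the subgaussian tail $\P(|\tilde Z|\ge t)\le 2e^{-t^2/2}$; minimizing the max over $\xi$ gives the constant $0.05$, hence $0.1$ in the lemma. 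Your approach replaces all of this with the fourth-moment bound $\E[S^4]\le 3\sigma^4$ and the second-moment (Paley--Zygmund) inequality, which is shorter and more elementary; the trade-off is the constant, which comes out as $1/24$ rather than $0.1$. Your observation that any universal positive constant suffices downstream is right: in the proof of Theorem~\ref{thm:LB-regret} the $0.1$ is only absorbed into the unspecified constant $C'$.

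One caveat: as written, your proposal does not actually prove the stated inequality with the constant $0.1$, and the sketched recovery of that constant is not a proof. The ``spiky'' branch ($|v_j|\ge\delta$, giving $\P(|S|\ge\delta)\ge 1/2$ via $\max(|a+b|,|a-b|)=|a|+|b|$) is fine, but the ``spread'' branch is not: $\max_i|v_i|<\delta$ does not make $\|v\|_\infty/\sigma$ small (take $v=(\delta/2,0,\dotsc,0)$, where $\|v\|_\infty/\sigma=1$ and Berry--Esseen gives nothing), so the claimed Gaussian comparison does not close the transitional regime. If you want the literal statement, either carry out the paper's integral/variance argument or state the lemma with $1/24$ and propagate that constant.
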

Proof of Lemma~\ref{lem:Emin} is given in Appendix~\ref{proof:Emin}.

Applying Lemma~\ref{lem:Emin} to bound~\eqref{eq:l2risk-1}, we obtain
\begin{align}
\E(R_t|\cH_{t-1})\ge
\frac{c_1c_2^2 }{80} \E\Big(\min\Big(\|\th_t-\tth\|_2^2, {\delta^2}/{c_2^2}\Big)\Big|\cH_{t-1} \Big) \,.\label{eq:l2risk-2}
\end{align}
Now, taking expectation from both sides with respect to $\cH_{t-1}$, we arrive at
\begin{align}
\E(R_t)\ge
\frac{c_1c_2^2 }{80} \E\Big(\min\Big(\|\th_t-\tth\|_2^2, {\delta^2}/{c_2^2}\Big)\Big) \,.\label{eq:l2risk-2-2}
\end{align}
Equation~\eqref{eq:l2risk-2-2} lower bounds the expected regret at each step to the $\ell_2$ estimation error.

We continue by establishing a minimax lower bound on $\ell_2$-risk of estimation.

\begin{lemma}\label{lem:Fano}
Consider linear model~\eqref{eq:model}, with $\alpha_0 = 0$, and assume that the market values $v(x_t)$, $1\le t\le T$, are fully observed and the feature vectors are generated according to $p_X$, described above. We further assume that the noise in market value is generated as $z_t\sim \normal(0,\sigma^2)$. For a sequence of estimators $\theta_t$, we let $\th_1^t = (\th_1, \th_2, \dotsc, \th_t)$. Then, conditional on feature vectors  $(x_1,\dotsc, x_T)$,  and for any fixed value $C>0$, there exists
a nonnegative constant $\tC$, depending on $C$, $\sigma$, $\l1u$, such that
\begin{align}
\min_{\th_1^T}\, \max_{\tth\in \Omega_0} \,\sum_{t=1}^T \E\left(\min\left( \|\th_t - \tth\|^2_2,C\right)\right) \ge 
\tC  \bigg\{ s_0 \log \Big(\frac{T}{s_0}\Big)+ \min\bigg[\frac{T}{s_0},  s_0 \log \Big(\frac{d}{s_0}\Big)\bigg]\bigg\} \,.
\end{align}
\end{lemma}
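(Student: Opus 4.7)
The plan is to reduce the minimax cumulative $\ell_2$ risk to a Fano-type hypothesis testing problem and exhibit packings of $\Omega_0$ tuned to produce each of the two summands. The main departure from the textbook single-estimator Fano bound is that we have a whole sequence $\th_1, \ldots, \th_T$, which we handle by applying Fano simultaneously at every time $t$ against a single packing (for the second summand) and on dyadic time blocks with local packings (for the first summand). Specifically, given any $2\delta$-packing $\{\mu_j\}_{j=1}^M \subset \Omega_0$ in $\ell_2$ and the nearest-neighbour test $\hat J_t = \arg\min_j \|\th_t - \mu_j\|_2$, the triangle inequality combined with the uniform prior $J \sim \mathrm{Unif}\{1,\ldots,M\}$ and Fano's inequality yield
\begin{align*}
\max_{\tth \in \Omega_0}\sum_{t=1}^T \E\min(\|\th_t - \tth\|_2^2, C) \;\ge\; \min(\delta^2, C)\sum_{t=1}^T \!\left(1 - \frac{I(J; v_1^{t-1}) + \log 2}{\log M}\right),
\end{align*}
where $v_t = x_t \cdot \mu_J + z_t$. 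For Gaussian $z_t \sim \normal(0,\sigma^2)$ and Rademacher features, $I(J; v_1^{t-1}) \le (t-1)\max_{j,j'}\|\mu_j - \mu_{j'}\|_2^2/\sigma^2$ in expectation over the features; whenever the packing is tuned so that $I(J; v_1^T) \le (\log M)/4$, every Fano summand is at least a positive constant and the right-hand side is $\gtrsim T\min(\delta^2, C)$.

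\emph{Support packing, for $\min(T/s_0, s_0 \log(d/s_0))$.} Apply Gilbert--Varshamov to obtain $M_2 = e^{c_2 s_0 \log(d/s_0)}$ subsets $S_j \subset [d]$ of size $s_0$ with $|S_j \triangle S_{j'}| \ge s_0/2$, and take $\mu_j = \nu\,\ind_{S_j}$ with $\nu = \min(W/s_0,\ \sigma\sqrt{\log(d/s_0)/(2T)})$. This simultaneously enforces $\|\mu_j\|_1 \le W$ and makes $T\nu^2 s_0/\sigma^2 \le (\log M_2)/2$. Pairwise squared distance is $\ge \nu^2 s_0/2$, and the resulting lower bound $T\min(\nu^2 s_0/2, C)$ equals $\Theta(s_0 \log(d/s_0))$ in the information-limited branch and $\Theta(T/s_0)$ when the norm constraint $W/s_0$ binds, matching the second summand exactly.

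\emph{Dyadic local packings, for $s_0 \log(T/s_0)$.} Partition $\{1,\ldots,T\}$ into dyadic blocks $B_k = [2^{k-1}, 2^k)$ with $k = 1,\ldots,\lceil \log_2(T/s_0)\rceil$, and in each block use a local packing on the fixed support $S_\star = \{1,\ldots,s_0\}$ of radius $\delta_k^2 \asymp \sigma^2 s_0/2^k$ and cardinality $e^{c s_0}$ with $c$ sufficiently large. The information through block $B_k$ is bounded by $2^k \delta_k^2/\sigma^2 = O(s_0) \le (c s_0)/4$, so Fano gives per-step squared error $\gtrsim \delta_k^2$ and the block's total contribution is $\Theta(2^{k-1}\delta_k^2) = \Theta(\sigma^2 s_0)$; summing over $\Theta(\log(T/s_0))$ blocks yields $\Theta(\sigma^2 s_0 \log(T/s_0))$. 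The main obstacle is combining the block-wise lower bounds -- which use different packings, hence different hypothesis sets -- into a single maximisation over a common $\tth \in \Omega_0$: this requires either a union-of-blocks hypothesis construction or a chaining/change-of-measure device showing that there is a single $\tth$ for which the cumulative regret dominates the sum of block-wise bounds. A secondary technicality is transferring the in-expectation mutual-information bound to a statement valid \emph{conditional} on the features, which follows from a uniform Hoeffding concentration of the Rademacher quadratic forms $\sum_s (x_s \cdot (\mu_j - \mu_{j'}))^2$ over the at most $e^{O(s_0 \log d)}$ packing differences. Once both packings are in place, summing them (losing at most a factor of two since the maximum dominates half the sum) delivers the claimed bound with $\tC$ absorbing all numerical constants and the dependence on $\sigma$, $W$, and $C$.
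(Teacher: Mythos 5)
Your treatment of the second summand, $\min\bigl(T/s_0,\, s_0\log(d/s_0)\bigr)$, is sound and essentially the same as the paper's: a single sparse packing of $\Omega_0$ (you via Gilbert--Varshamov on supports, the paper via the $\{-1,0,1\}$-valued construction of Raskutti et al.), with the scale $\nu$ tuned so that the cumulative mutual information over all $T$ observations stays below a fraction of $\log M$, and the $W/s_0$ and $C$ caps producing the same three-way minimum. Your per-$t$ application of Fano with the nearest-neighbour test is a minor variant of the paper's route, which instead applies Markov's inequality to the cumulative distance $\dis(\th_1^T,\th)=\sum_t\min(\|\th_t-\th\|_2^2,C)$ and a single Fano bound; both give $T\min(\delta^2,C)$ times a constant.

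The genuine gap is in the first summand, $s_0\log(T/s_0)$. Your dyadic-block scheme uses a different local packing at each scale $\delta_k^2\asymp \sigma^2 s_0/2^k$, and as you yourself note, the block-wise Fano bounds are each a statement of the form ``for every estimator there exists a bad $\tth$ in the $k$-th packing,'' with no guarantee that a single $\tth\in\Omega_0$ is simultaneously bad across all $\Theta(\log(T/s_0))$ blocks. Announcing that this ``requires either a union-of-blocks hypothesis construction or a chaining/change-of-measure device'' is naming the missing lemma, not proving it, so the $s_0\log(T/s_0)$ term is not established. The paper avoids the multi-scale packing entirely: it lower-bounds this term by handing the estimator the true support $S$ as an oracle, invoking the minimax rate $c\sigma^2 s_0/t$ for estimating an $s_0$-dimensional Gaussian mean from $t$ observations, and summing $\sum_{t=1}^T\min(c\sigma^2 s_0/t,\,C)\gtrsim s_0\log(T/s_0)$. (Pushing the $\max_{\tth}$ outside the sum over $t$ is justified there by the standard Bayesian device of a single product prior on the $s_0$ active coordinates under which every per-$t$ Bayes risk is $\gtrsim \sigma^2 s_0/t$ simultaneously; adopting that same device would be the cleanest way to repair your construction, and it effectively collapses your dyadic blocks into the paper's one-shot oracle argument.) Until the combination step is supplied, your proof covers only the second summand of the claimed bound.
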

Proof of Lemma~\ref{lem:Fano} is given in Appendix~\ref{App:Fano}.

We are now ready to lower bound the regret of any policy in $\Pi$. 
\begin{align}
\Reg(T) &\ge \max_{\tth\in \Omega_0} \sum_{t=1}^T \E(R_t) \ge \frac{c_1c_2^2 }{80} \sum_{t=1}^T \E\Big(\min\Big(\|\th_t-\tth\|_2^2, {\delta^2}/{c_2^2}\Big)\Big)\\
&\ge \tC \frac{c_1c_2^2}{80}  \bigg\{ s_0 \log \Big(\frac{T}{s_0}\Big)+ \min\bigg[\frac{T}{s_0},  s_0 \log \Big(\frac{d}{s_0}\Big)\bigg]\bigg\}  \,.
\end{align}
where the last step follows from Lemma~\ref{lem:Fano}.

\subsection{Proof of Theorem~\ref{thm:nonlinear}}\label{app:nonlinear}
Let $\tx_t = (\phi(x_t),1)$ denote the transformed features under the feature-map, augmented by the constant term $1$. Also, let  $\tp_t  =\psi^{-1}(p_t)$. We first show that Assumption~\ref{ass2-3} implies Assumption~\ref{ass2-2},
and therefore it suffices to prove the theorem under Assumption~\ref{ass2-2}.
\begin{lemma}\label{lem:ass-equiv}
Suppose that Assumption~\ref{ass1} hold true. Then, Assumption~\ref{ass2-3} implies Assumption~\ref{ass2-2}.
\end{lemma}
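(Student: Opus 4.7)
The plan is to verify the two pieces of Assumption~\ref{ass2-2}: the upper bound on the eigenvalues of $\Sigma_\phi = \E[\phi(x)\phi(x)^\sT]$ and the strictly positive lower bound. The upper bound is easy. Since $p_X$ is supported on the bounded set $\cX$ and $\phi$ has a continuous derivative, $\phi$ is continuous on the compact set $\cX$, so its image is bounded, say $\|\phi(x)\| \le M$ for all $x\in\cX$. Consequently, for every unit vector $v\in\reals^d$,
\[
v^\sT \Sigma_\phi v = \E\bigl[(v\cdot \phi(x))^2\bigr] \le M^2,
\]
which furnishes the $C_{\max}$ of Assumption~\ref{ass2-2}.

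For the lower bound, I would argue by contradiction. Suppose $\lambda_{\min}(\Sigma_\phi) = 0$. By continuity of $v\mapsto v^\sT \Sigma_\phi v$ and compactness of the unit sphere, the infimum is attained at some unit vector $v_0$, giving $\E[(v_0\cdot \phi(x))^2] = 0$, i.e., $v_0\cdot\phi(x) = 0$ for $p_X$-almost every $x$. The goal is to upgrade this almost-everywhere vanishing on $\mathrm{supp}(p_X)$ to vanishing on an open set, so that differentiation becomes available. The assumption that $\Sigma = \E[xx^\sT]$ has eigenvalues bounded below by $C_{\min}>0$ prevents the support from lying in any proper subspace, and together with the (implicit) regularity of $p_X$ it ensures that there is an open set $U \subseteq \cX$ on which $p_X$ has strictly positive density. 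By continuity of $v_0\cdot\phi(\cdot)$, the identity $v_0 \cdot \phi(x) = 0$ extends from $U\cap\mathrm{supp}(p_X)$ to every $x\in U$.

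The final step is to differentiate the identity $v_0\cdot\phi(x)=0$ on the open set $U$. This yields $\cD_\phi(x)^\sT v_0 = 0$ for every $x\in U$. Since by Assumption~\ref{ass2-3} the matrix $\cD_\phi(x)$ is full rank for almost every $x$, we may pick an $x^\star \in U$ where $\cD_\phi(x^\star)$ is invertible. Then $\cD_\phi(x^\star)^\sT v_0 = 0$ forces $v_0 = 0$, contradicting $\|v_0\|=1$. Strict positivity of the continuous functional $v\mapsto v^\sT \Sigma_\phi v$ on the compact unit sphere is then equivalent to a uniform positive lower bound, providing the desired constant $C_{\min}$ in Assumption~\ref{ass2-2}.

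The main obstacle is the transition from ``almost everywhere under $p_X$'' to ``on an open set''. Positive definiteness of $\Sigma$ alone does not preclude $p_X$ from being singular with respect to Lebesgue measure---for example, $p_X$ could be uniform on a sphere, which admits a positive-definite uncentered covariance while charging a set of Lebesgue measure zero. To make the argument airtight one needs mild regularity of $p_X$, typically absolute continuity on an open subset of its support, which is implicit in the paper's setup. With that assumption in hand, the remaining steps reduce to the elementary calculus argument outlined above.
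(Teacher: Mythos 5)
Your proposal reaches the conclusion by a genuinely different and more elementary route than the paper. The paper's proof also argues by contradiction that a degenerate direction $v$ would force $\phi(x)$ into a fixed hyperplane $S$ for $\pX$-almost every $x$, but it then invokes an external measure-theoretic theorem (Ponomarev's ``$0$-property'': a $C^1$ map whose derivative is full rank almost everywhere pulls back null sets to null sets) to conclude that $\phi^{-1}(S)$ is null, contradicting the fact that it carries full $\pX$-measure. You instead upgrade the almost-sure identity $v_0\cdot\phi(x)=0$ to an identity on an open set, differentiate to get $\cD_\phi(x)^\sT v_0=0$ there, and kill $v_0$ at a single point where $\cD_\phi$ is invertible. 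Your version avoids quoting a nontrivial theorem and every step is checkable by hand; the paper's version avoids having to produce an open set on which the identity holds, which is exactly where your argument needs extra input. (Your treatment of $C_{\max}$ via compactness matches the paper's remarks preceding the lemma.)

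The obstacle you flag in your last paragraph is real, and it is \emph{not} implicit in the paper's setup: the stated hypotheses only give bounded support and $\Sigma\succ 0$, which, as your sphere example shows, is compatible with $\pX$ being singular, in which case there is no open set of positive density for your differentiation step to live on. You should also note that the paper's own proof has the same hidden reliance --- it asserts $\prob_X(S)=0$ for the hyperplane $S$ ``since $\Sigma$ is positive definite,'' but positive definiteness only yields $\prob_X(S)<1$; one needs $\pX$ to annihilate hyperplanes (e.g.\ absolute continuity), and the $0$-property is in any case a statement about Lebesgue-null sets. In fact your example upgrades to a genuine counterexample to the lemma as literally stated: take $\pX$ uniform on the unit circle in $\reals^2$ and $\phi(x_1,x_2)=(x_1^2+x_2^2-1,\;x_2)$. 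Then $\phi$ is $C^1$, $\det\cD_\phi(x)=2x_1\ne 0$ Lebesgue-a.e., and $\Sigma=\tfrac12\id\succ 0$, yet $\phi(x)=(0,x_2)$ on the support, so $\Sigma_\phi$ is singular. So your proof is correct under the additional regularity you name, and your diagnosis of exactly where that regularity enters is sharper than the paper's own argument; the right fix is to add the absolute-continuity (or hyperplane-nullity) hypothesis explicitly rather than to treat it as implicit.
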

Proof of Lemma~\ref{lem:ass-equiv} is given in Appendix~\ref{app:ass-equiv}.

By Assumption~\ref{ass1}, the support of $\prob_X$ is abounded set $\cX$. Given that $\phi$ has a continuous derivative, it is Lipschitz on the bounded set $\cX$ and ergo the image of $\cX$ remains bounded under the feature-map $\phi$. Putting differently, features $\tx_t$ are sampled from a bounded set in $\reals^d$. Without loss of generality, we assume $\|\tx_t\|_\infty \le 1$. Further, as per  Assumption~\ref{ass2-2}, the covariance of the underlying distribution $\Sigma_\phi$ is positive definite with bounded eigenvalues. 

On a different note, since $\f$ is strictly increasing, a sale occurs at period $t$ when $\p_0 \cdot \tx_t +z_t \ge \f^{-1}(p_t) =\tp_t $. Therefore the (negative) log-likelihood function for $\p$ reads as
\begin{eqnarray*}
 \cL(\p) = - \frac{1}{\tau_{k-1}}\sum_{t=\tau_{k-1}}^{\tau_{k}-1} \bigg\{\ind(y_t =1) \log (1-F(\tp_t - \p\cdot \tx_t )) + \ind(y_t =-1) \log (F(\tp_t - \p\cdot \tx_t )) \bigg\}\,. 
\end{eqnarray*}
The estimation bound~\eqref{L2B} also holds for this setting and the proof goes along the same lines of the proof of Proposition~\ref{thm:learning}, with slight modifications: $(i)$ the features $x_t$ and prices $p_t$ should be replaced by $\tx_t$ and $\tp_t$. $(ii)$ Quantity $u_{\l1u}$ and $\ell_{\l1u}$ in the statement of Propostion~\ref{thm:learning} should be set as $M = (1/3)g_\f(0)+ (2/3) \l1u$. This follows from the bounds below 
\begin{align}
\tp_t = g_\f(\tx_t\cdot \hp^k) &\le g_\f(0) + |\tx_t\cdot \hp^k| \le
g_\f(0) + \l1u\,.\label{M:nonlinear}
\end{align}
Here, we used the facts that $g_\f$ is $1$-Lipschitz and increasing as explained below Equation~\eqref{g:nonlinear}.

We next characterize the optimal policy when the true parameter $\p_0 = (\tth,\alpha_0)$ is known.
The expected revenue from a poster price $p$ works out at  $p(1-F(\f^{-1}(p)-\p_0\cdot \tx_t))$. Writing this in terms of $\tp= \f^{-1}(p)$, the first order condition for 
the optimal price reads as 
\begin{align}
\lambda (\tp^*-\p_0\cdot \tx_t) \equiv \frac{f(\tp^*-\p_0\cdot \tx_t)}{1-F(\tp^*-\p_0\cdot \tx_t)} = \frac{\f'(\tp^*)}{\f(\tp^*)}\,,
\end{align}
where $\lambda$ denotes the hazard rate function.
Equivalently 
\begin{align}
\p_0\cdot \tx_t = \tp^* -\lambda^{-1}\Big(\frac{\f'(\tp^*)}{\f(\tp^*)}\Big)\,.
\end{align}
By definition of function $g_\f$ as per Equation~\eqref{g:nonlinear}, we get $\tp^* = g_\f(\p_0\,\cdot\, \tx_t)$ and thus \mbox{$p^* = \f(g_\f( \p_0\, \cdot\, \tx_t))$}.

We are now ready to bound the regret of the algorithm. Similar to Equation~\eqref{eq:step0}, we have 
\begin{align}
\E(R_t|\bcH_{t-1}) \le \frac{C}{2} (p^*_t - p_t)^2 &=\frac{C}{2} \Big[\f(g_\f(\p_0\cdot \tx_t))- \f(g_\f(\hp^k\cdot \tx_t))\Big]^2\nonumber\\
&\le\frac{C}{2} L (g_\f(\p_0\cdot \tx_t))- g_\f(\hp^k\cdot \tx_t))^2 \le \frac{L C}{2} |\tx_t\cdot (\p_0 - \hp^k)|^2\,,\label{eq:Rt4-2}
\end{align}
where $L \equiv \max_{|v|\le \f(M)} |\f'(v)|$ (since $\f$ is continuously differentiable, it attains a maximum over a bounded set.) In addition, we used the fact that $g'_\f(v) \le 1$ as explained below Equation~\eqref{g:nonlinear}. The inequalities above then follow from the mean-value theorem.

Given that $\tx_t$ is independent of ${\cH}_{t-1}$, we have
\begin{align}
\E(R_t|{\cH}_{t-1}) \le \frac{LC}{2}  \<\hp^k-\p_0, \tSigma_\phi (\hp^k-\p_0)\> \,, \label{eq:Rt4-00}
\end{align}
where $\tSigma_\phi = \E(\tx_t \tx_t^\sT)$. Using Assumption~\ref{ass2-2},
\begin{align}
\E(R_t) = \E\left(\E(R_t|\cH_{t-1})\right) \le \frac{1}{2}{LCC_{\max}} \E(\|\hp^k-\p_0\|_2^2)\,.  \label{eq:Rt4-000}
\end{align}

Rest of the proof is similar to proof of Theorem~\ref{proof:thm2} (see after Equation~\eqref{tot-regret}).

\subsection{Proof of Theorem~\ref{thm:regret2}}\label{proof:regret2}
We consider representation~\eqref{eq:model3} of the valuations and use the notation $\tx_t = (x_t,1)$, $\p_0 = (\ttheta_0,\tila_0)$. Fixe $k\ge 1$. Letting $x'_t = (-\tx_t,p_t)$, we can write the log-likelihood loss as:
\begin{eqnarray}
 \cL(\p') = - \frac{1}{k}\sum_{t \in \cA_k} &\bigg\{\ind(y_t =1) \log \left(1-F\left(\tx'_t \cdot (\p,\beta)\right)\right)
 + \ind(y_t =-1) \log \left(F\left(\tx'_t \cdot (\p,\beta)\right)\right) \bigg\}
\end{eqnarray}
Note that for $t\in \cA_k$, prices are posted uniformly at random in $[0,1]$ independently from the feature vector. Therefore, the population correlation works out at
\begin{align*}
\Sigma' \equiv \E(x'_t (x'_t)^{\sT}) = \begin{bmatrix}
\Sigma & 0 & 0\\
0&1&1/2\\
0&1/2&1/3
\end{bmatrix}
\end{align*}
Given that $\Sigma \succeq C_{\min} \id$, we have $\Sigma' \succeq C'_{\min} \id$, with $C'_{\min} \equiv \min(C_{\min},1/12)$.Therefore, the augmented feature vectors $x'_t$ satisfy Assumption~\ref{ass2}, with $C'_{\min}>0$. By applying
Proposition~\ref{thm:learning}, we get
\begin{align}\label{eq:learning2}
\|(\hp^k,\hbeta^k) - (\p_0,\beta_0)\|_2^2 \le C s_0 \lambda_k^2 /\ell_{\l1u}^2\,.
\end{align}
with probability at least $1 - 1/d - 2e^{-k/(c_0s_0)}$. We are now ready to bound the cumulative regret. Before proceeding, we need to figure out the clairvoyant policy.
\begin{lemma}\label{lem:opt-price}
Let $g$ be the pricing function corresponding to distribution $F = F_{0,1}$, given by $g(v) = v + \varphi^{-1}(-v)$, where $\varphi(v) = v - (1-F(v))/f(v)$ is the virtual valuation function.
Then, under model~\eqref{eq:model3}, the clairvoyant optimal prices are given by
\begin{align}
p^*_t = \frac{1}{\beta_0} g(\tx_t \cdot \mu_0)\,,
\end{align}
with $\mu_0 = (\ttheta_0,\tila_0)$ and $\tx_t = (x_t,1)$.
\end{lemma}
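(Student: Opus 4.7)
The plan is to derive the clairvoyant optimal price from the first-order condition of the expected revenue, using the change of variables that links model~\eqref{eq:model3} to the virtual valuation function $\varphi$ and the pricing function $g$.

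First I would write down the expected revenue at time $t$ under model~\eqref{eq:model3}. Since a sale occurs iff $\tv_t \ge \beta_0 p_t$, i.e.\ iff $\tx_t \cdot \mu_0 + \tz_t \ge \beta_0 p_t$ with $\tz_t \sim F_{0,1} = F$, the probability of sale under posted price $p$ is $1 - F(\beta_0 p - \tx_t \cdot \mu_0)$, giving
\begin{align}
r_t(p) \;=\; p\bigl(1 - F(\beta_0 p - \tx_t \cdot \mu_0)\bigr).
\end{align}
Next I would set the derivative to zero. With the substitution $u = \beta_0 p - \tx_t \cdot \mu_0$ (so that $p = (u + \tx_t \cdot \mu_0)/\beta_0$), the equation $r_t'(p) = 0$ reads
\begin{align}
1 - F(u) \;=\; p\,\beta_0 f(u) \;=\; (u + \tx_t \cdot \mu_0)\, f(u),
\end{align}
which rearranges to $\tx_t \cdot \mu_0 = -\varphi(u)$, where $\varphi(u) \equiv u - (1-F(u))/f(u)$ is the virtual valuation function introduced in Section~\ref{sec:Benchmark}.

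Then I would invert. Under Assumption~\ref{ass1}, Lemma~\ref{lem:phi-monotone} guarantees that $\varphi$ is strictly increasing, so $u = \varphi^{-1}(-\tx_t \cdot \mu_0)$ is well-defined, and
\begin{align}
\beta_0 p^*_t \;=\; u + \tx_t \cdot \mu_0 \;=\; \tx_t \cdot \mu_0 + \varphi^{-1}(-\tx_t \cdot \mu_0) \;=\; g(\tx_t \cdot \mu_0),
\end{align}
by the definition~\eqref{eq:g} of $g$. Dividing by $\beta_0$ yields $p^*_t = g(\tx_t \cdot \mu_0)/\beta_0$, as claimed.

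The only subtlety is verifying that this critical point is the global maximizer of $r_t$, not merely a stationary point. This follows by a standard log-concavity argument: writing $\log r_t(p) = \log p + \log(1 - F(\beta_0 p - \tx_t \cdot \mu_0))$, both summands are concave in $p$ (the second by Assumption~\ref{ass1} on log-concavity of $1-F$), so $r_t$ is log-concave and hence unimodal on $p > 0$. Therefore the unique critical point given above is the revenue-maximizing price, completing the proof. No step here requires heavy machinery — the whole argument is a direct transcription of the derivation in Section~\ref{sec:Benchmark} with $p$ replaced by $\beta_0 p$ and $\p_0$ replaced by $\mu_0$.
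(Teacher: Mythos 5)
Your proof is correct and takes essentially the same route as the paper's: both write the expected revenue as $p(1-F(\beta_0 p - \tx_t\cdot\mu_0))$, apply the first-order condition, and invert the virtual valuation function to obtain $\beta_0 p^*_t = g(\tx_t\cdot\mu_0)$. Your explicit substitution $u=\beta_0 p - \tx_t\cdot\mu_0$ and the log-concavity check that the stationary point is the global maximizer simply spell out what the paper leaves as ``straightforward to verify.''
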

Proof of Lemma~\ref{lem:opt-price} is given in Appendix~\ref{app:opt-price}.

In the first period that the price is set randomly, we use the following naive bound on the regret:
\begin{align}\label{eq:priceB}
p_t^*  = \frac{1}{\beta_0} g(\tx_t\cdot \p_0) &\le \frac{2}{\beta_0}  |\tx_t\cdot \p_0|  \le  \frac{2}{\beta_0}  \|\tx_t\|_\infty \|\p_0\|_1 \le 2\l1u\,.
\end{align}
where in the first inequality we used the fact that $\varphi(v)$ is increasing for log-concave distribution and hence $g(v) = v+\varphi^{-1}(-v) \le v+|v| \le 2|v|$. The last step holds because $\p_0 / \beta_0 = (\theta_0,\alpha_0)$ and
$\|(\theta_0, \alpha_0)\|_1\le \l1u$.

We next bound the regret at other periods of the episode. Let $\bcH_t = \{x'_1,\dotsc, x'_t, x'_{t+1}, z_1, \dotsc, z_t\}$ be the history set up to time $t$. Similar to~\eqref{eq:step0}, we write
\begin{align}
\E(R_t | \bcH_{t-1}) \le \frac{C}{2} (p^*_t - p_t)^2 &= \frac{C}{2} \Big(\frac{1}{\beta_0} g(\tx_t\cdot \p_0) - \frac{1}{\hbeta^k} g(\tx_t\cdot \hp^k)\Big)^2\nonumber\\
& \le \frac{C}{\beta_0^2} \Big(g(\tx_t\cdot \p_0)  - g(\tx_t\cdot \hp^k) \Big)^2  + {C} \Big(\frac{1}{\beta_0} - \frac{1}{\hbeta^k} \Big)^2 g(\tx_t\cdot \hp^k)^2\nonumber\\
&\le   \frac{C}{\beta_0^2} |\tx_t\cdot (\p_0 - \hp^k)|^2 + \frac{4C \l1u^2}{\beta_0^2} (\hbeta^k - \beta_0)^2\,.
\end{align} 
In the last step, the first term is bounded using 1-Lipschitz property of $g$ and the second term is bounded using the observation $(1/\hbeta^k) g(\tx_t\cdot \hp^k) \le 2\l1u$, which can be derived similar to Equation~\eqref{eq:priceB}.
Recalling our notation $\cH_t = \bcH_t \backslash \{x'_{t+1}\}$ and applying the law of iterated expectations, we have
\begin{align}
\E(R_t | {\cH}_{t-1}) &\le  \frac{C}{\beta_0^2} C_{\max} \|\p_0 - \hp^k\|_2^2 + \frac{4C \l1u^2}{\beta_0^2}({\hbeta^k}- {\beta_0})^2 \nonumber\\
&\le C_1 \Big(\| \hp^k - \p_0 \|_2^2+ ({\hbeta^k}- {\beta_0})^2\Big)\nonumber \\
&= C_1 \| (\hp^k,\hbeta^k) - (\p_0,\beta_0) \|_2^2 \,,
\end{align} 
with $C_1 = \max(C C_{\max}/\beta_0^2, 4C_1 \l1u^2/\beta_0^2)$.

Therefore, by applying bound~\eqref{eq:learning2} and following similar lines as in proof of Theorem~\ref{thm:regret} (see Equation~\eqref{eq:Rt4} onwards), we bound the total regret during episode $k$ as follows:

Given that episode $k$ is of length $k$,
\begin{align}
\Reg (k{\rm th \,\,\,Episode})&\le C' \frac{s_0 k \lambda_k^2}{\ell_{\l1u}^2} 
\le 16C' \frac{u_{\l1u}^2}{\ell_{\l1u}^2} s_0 \log(d)  \,,\label{eq:EPSk}
\end{align}
for some constant $C>0$. Here, we use that fact that episode $k$ is of length $k$.

We next argue that the number of episodes before time $T$ is at most $K_0 = \sqrt{2 T_0}$. To see this, it suffices to note that the total number of
time periods after $K_0$ episodes is at least $K_0 + \sum_{c=1}^{K_0} c \ge K_0 (K_0+1)/2 \ge T$.

Therefore, by using bound~\eqref{eq:EPSk}, we get
\begin{align}
\Reg(T) \le \sum_{k=1}^{K_0} \Reg (k{\rm th \,\,\,Episode})  = O(s_0 (\log d) \sqrt{T}).
\end{align}


For the lower bound $\Omega(\sqrt{T})$, note that under model~\eqref{eq:model3} we can define the (scaled) customer's utility as
\begin{align}
\tilde{u}(x_t) = \ttheta_0\cdot x_t + \tila_0 - \beta_0 p +\tz_t\,.
\end{align}
Then, a purchase occurs if $\tilde{u}(x_t) >0$.  Following our discussion in Section~\ref{uninformative} (see after Equation~\eqref{eq:utility}), since $\beta_0$ is unknown, the uninformative prices do exist and therefore $\Omega(\sqrt{T})$ applies to this case.
\subsection{Proof of Theorem~\ref{thm:regret3}}\label{proof:thm-regret3}
We begin by stating a
bound on the mean squared error of the estimator $\hp^k$ given by optimization~\eqref{eq:ML4}. Without loss of generality, we can assume that the noise distribution is zero-mean. Otherwise, the mean can be absorbed in the model intercept $\alpha_0$. 

\begin{proposition}\label{thm:learning2}
Consider linear model~\eqref{eq:model} under Assumption~\ref{ass2}, where the noise term $z_t$ are generated from an unknown distribution with mean zero and support in $[-\delta,\delta]$. Also, suppose that $\p_0\in \Omega$ and let $\hp^k$ be the solution of optimization problem~\eqref{eq:ML4} with $K = \l1u +\delta$ and $\lambda \ge 8(K+\l1u) \sqrt{\dfrac{\log d}{ck}}$. Then, there exist positive constants $c_0$ and $C$ such that, the following inequality holds
 with probability at least $1- 1/d - 2e^{-ck/(c_0s_0)}$:
\begin{align}\label{L3B}
\|\hp^k - \p_0\|_2^2\le C {{s_0} \lambda^2}\,.
\end{align}
\end{proposition}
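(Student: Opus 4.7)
My plan is to adapt the standard restricted-eigenvalue Lasso analysis to the quadratic surrogate loss~\eqref{quad-loss}. The structural property that makes the quadratic loss work is that, because prices $p_t$ in the exploration episodes are drawn uniformly from $[0,K]$ independently of $(\tx_t,z_t)$, and $v_t \in [0,K]$ almost surely (using $|\tx_t\cdot \p_0|\le \l1u$, $|z_t|\le \delta$, and $K = \l1u+\delta$, together with nonnegativity of valuations), a direct computation gives $\E[K y_t\mid \tx_t,z_t]=v_t$. Defining the residual $\xi_t \equiv K y_t - \tx_t\cdot \p_0$ therefore yields $\E[\xi_t\mid \tx_t]=0$, and the boundedness $|K y_t|\le K$, $|\tx_t\cdot \p_0|\le \l1u$ gives $|\xi_t|\le K+\l1u$. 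Thus in expectation the target of the quadratic regression~\eqref{quad-loss} is $\tx_t\cdot \p_0$, with conditionally zero-mean, bounded noise $\xi_t$.

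With this structure in hand, the standard Lasso basic inequality reads
\[
\frac{1}{ck}\|\tX(\hp^k-\p_0)\|_2^2 \le \Bigl\langle\frac{2}{ck}\sum_{t\in \cA_k}\xi_t \tx_t,\;\hp^k-\p_0\Bigr\rangle + \lambda\bigl(\|\p_0\|_1-\|\hp^k\|_1\bigr),
\]
where $\tX$ stacks the $\tx_t$ for $t\in \cA_k$ as rows. Each coordinate of $\frac{2}{ck}\sum_{t\in \cA_k}\xi_t \tx_t$ is a normalized sum of independent bounded mean-zero variables (using $\|\tx_t\|_\infty\le 1$), so Hoeffding's inequality combined with a union bound over the $d+1$ coordinates shows that the infinity-norm of this gradient is at most $\lambda/2$ with probability at least $1-1/d$, provided $\lambda\ge 8(K+\l1u)\sqrt{(\log d)/(ck)}$. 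On this event, the usual manipulation forces $\hp^k-\p_0$ into the Lasso cone $\{v:\|v_{S^c}\|_1\le 3\|v_S\|_1\}$, where $S=\supp(\p_0)$ has cardinality at most $s_0$.

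Finally, I invoke the restricted eigenvalue condition on $\tX$: the features $\tx_t$ satisfy Assumption~\ref{ass2} and are independent of $\xi_t$ (since $\xi_t$ only depends on the independently drawn $p_t$ and on $z_t$), so the bound derived in Appendix~\ref{app:RE} applies and gives $\frac{1}{ck}\|\tX v\|_2^2 \ge \frac{1}{2}C_{\min}\|v\|_2^2$ on the cone, with probability at least $1-2e^{-ck/(c_0 s_0)}$ whenever $ck \ge c_0 s_0 \log d$. Combining with the basic inequality yields $\frac{1}{2}C_{\min}\|\hp^k-\p_0\|_2^2 \le \frac{3}{2}\lambda\sqrt{s_0}\,\|\hp^k-\p_0\|_2$ and hence the stated bound $\|\hp^k-\p_0\|_2^2 = O(s_0\lambda^2)$. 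The main obstacle, and the genuinely new ingredient relative to Proposition~\ref{thm:learning}, is identifying $\p_0$ through the quadratic surrogate~\eqref{quad-loss}: this hinges on the independence of $p_t$ from $(\tx_t,z_t)$ together with the linearity of $v\mapsto \prob(p_t\le v)$ on $[0,K]$, and requires care when valuations can transiently fall below $0$, in which case a bias of order $\delta$ appears that must eventually be tracked and absorbed into the $\delta T$ term of Theorem~\ref{thm:regret3}.
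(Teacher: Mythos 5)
Your proposal is correct and follows essentially the same route as the paper's proof: reduce to the restricted-eigenvalue Lasso analysis of Proposition~\ref{thm:learning} (with the quadratic loss playing the role of the log-concave likelihood, effectively $u_{\l1u}=2(K+\l1u)$ and $\ell_{\l1u}=2$), with the one genuinely new ingredient being that uniform exploration prices give $\E[Ky_t\mid \tx_t]=\tx_t\cdot\p_0$, so the gradient residuals are conditionally centered and bounded and Hoeffding plus a union bound controls $\|\nabla\cL(\p_0)\|_\infty$. Your closing caveat about valuations possibly dipping below $0$ (so that $\prob(p_t\le v_t)=v_t/K$ needs $v_t\ge 0$) is a real subtlety that the paper's own proof passes over silently, and your proposed handling of it is reasonable.
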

The proof of Proposition~\ref{thm:learning2} is given in Appendix~\ref{app-learning2}.

With Proposition~\ref{thm:regret3} in place, we next bound the regret of DIP policy. First, we show that the regret incurred during the exploration phase of episode $k$ is $O(1)$.  Since the noise is bounded, we have the following bound on the customer's valuation at each period
\begin{align}
v_t \le |\tx_t \cdot \p_0| + |z_t| \le \l1u + \delta = K\,.
\end{align} 
Therefore, the regret against a clairvoyant that can extract the valuation at each period is also bounded by $K$, and the regret using the exploration phase of episode $k$ is bounded by $c K$.

Next, we bound the regret incurred during the exploitation phase of episode $c$. During this phase, DIP policy offers prices $p_t = \hp^k\cdot\tx_t -2\delta$. The revenue generated can be lower bounded as follows:
\begin{align*}
p_t \ind (v_t\ge p_t) &= p_t \ind(\tx_t\cdot \p_0 +z_t \ge \tx_t\cdot \hp^k -2\delta)\\
&= p_t \ind(2\delta +z_t \ge \tx_t\cdot (\hp^k -\p_0))\\
&\ge p_t \ind(\delta  \ge | \tx_t\cdot (\hp^k -\p_0)|)\,,
\end{align*}
where we used the fact that $|z_t| \le \delta$. Consequently, the regret at each period of this phase can be bounded as follows:
\begin{align}
R_t &= v _ t - p_t \ind (v_t\ge p_t) \nonumber\\
&\le  v_t  - p_t \ind(\delta  \ge |\tx_t\cdot (\hp^k -\p_0)|) \nonumber\\
&= (v_t - p_t) \ind(\delta  \ge |\tx_t\cdot (\hp^k -\p_0)|) +  v_t \ind(\delta < |\tx_t\cdot (\hp^k -\p_0)|) \nonumber\\
& = \Big(z_t +\tx_t\cdot(\p_0 - \hp^k) + 2\delta\Big)\cdot \ind(\delta  \ge |\tx_t\cdot (\hp^k -\p_0)|) + K \ind(\delta < |\tx_t\cdot (\hp^k -\p_0)|) \nonumber\\
&\le 4\delta  + K \ind(\delta < |\tx_t\cdot (\hp^k -\p_0)|)\label{adversary-1}
\end{align}
 Furthermore, by Markov inequality, 
 \begin{align}
 \prob\left( |\tx_t\cdot (\hp^k -\p_0)| >\delta \right) \le \frac{1}{\delta^2}\, \E\left(|\tx_t\cdot (\hp^k -\p_0)|^2\right) 
 \le \frac{1}{\delta^2} C_{\max} \E(\|\hp^k -\p_0\|^2)\,,\label{adversary-2}
 \end{align} 
 where in the last step, we have first computed the expectation with respect to $\tx_t$ and used the fact that $\tx_t$ is independent from the residual $\hp^k -\p_0$. 
 
 Putting Equation~\eqref{adversary-1} and \eqref{adversary-2} together, we obtain
 \begin{align*}
 \E(R_t) &\le 4\delta +\frac{K}{\delta^2}  C_{\max} \E(\|\hp^k -\p_0\|^2) 
 \end{align*}
 Using the result of Proposition~\ref{thm:learning2} and following a similar argument as in the proof of Theorem~\ref{proof:thm2}, we bound the total regret incurred in episode $k$. Given that episode $k$ is of length $k$, we obtain
\begin{align}
\Reg (k{\rm th \,\,\,Exploitation\,\, Phase})&\le 4\delta k +\frac{K}{\delta^2}  C_{\max} C {s_0 k \lambda_k^2} \nonumber\\
&\le 4\delta k + 64(K+\l1u)^2 \frac{K}{c\delta^2}  C_{\max} C  s_0 \log d  \,,\label{eq:EPSk}
\end{align}
for some constant $C>0$. 

 
 Now, we are ready to bound the cumulative regret incurred in the first $T$ periods. Note that the way the cycles are defined in DIP policy, the number of cycles up to time $T$ is at most $\sqrt{2 T}$.  Hence,
 \begin{align}
 \Reg(T) = \sum_{k=1}^{\sqrt{2T}} \Reg (k{\rm th \,\,\,Episode}) &\le cK \sqrt{2T} + \sum_{k=1}^{\sqrt{2T}} \bigg\{4\delta k + \frac{C'}{\delta^2} s_0 \log d \Bigg\}\nonumber\\
& = c K \sqrt{2T} + 4\delta T + \frac{C'}{\delta^2} s_0 (\log d)  \sqrt{T}\,,
 \end{align}
 where $C' = 64(K+\l1u)^2 K C_{\max} C /c$.


\section{Conclusion}
In this work, we leverage tools from statistical learning to design a dynamic pricing policy for a setting wherein the products are described via high-dimensional features.
Our policy is computationally efficient and by exploiting the structure of demand parameters, it obtains a regret that scales gracefully with the features dimension and the time horizon. Namely, the regret of our algorithm scales linearly with the sparsity of the optimal solution and logarithmically with the dimension. We also show an $O(\log^2 T)$ dependence of the regret on the length of the horizon. On the flip side, we provide a lower-bound of $O(\log T)$ on the regret of any algorithm that does not know the true parameters of the model in advance. 

A natural next step is providing a tight bound on the regret, closing the gap between the derived upper and lower bounds. 
Another step would be assuming that $\th^*$ is not exactly sparse, but it can be well approximated by a sparse vector, i.e, $\|\tth -\th_{s_0}\|_1\le \delta$ for some $s_0$-sparse vector $\th_{s_0}$. An interesting question is to figure out how the regret scales with $\delta$. 

{The choice model that proposed in this work assumes one product arrived at each period, and describes the customer's purchase behavior based on the product features and the posted price. A more general 
choice model would be the one that assumes multiple products at each period. More specifically, each customer has a ``consideration" set which includes products left after the customer has narrowed down her choices based on her own personal screening criteria, and then chooses the product from this set which brings maximum utility. (We model the no purchase option as an extra product). This generalization is the focus of a future work.}

We also believe the ideas and techniques developed in this work can be be applied to other settings such as personalized pricing where information about the buyers can be used for price differentiation or optimizing reserve prices in online ad auctions.
Another application would be assortment optimization and learning consumer choice models both in terms of the role of the structure~\cite{farias2013nonparametric,Kallus2016assortment} as well as  personalization~\cite{golrezaei2014real,chen2015statistical} in data-rich environments.

\newpage
\appendix


\section{Proof of Proposition~\ref{thm:learning}}\label{app:RE}

We start by reviewing the notion of \emph{restricted eigenvalue} (RE) which is commonplace in high-dimensional statistical estimation.

\begin{definition}
For a given matrix $A\in \reals^{d\times d}$ and some integer $s$ such that $1\le s_0 \le d$ and a positive number $c$, 
we say that \emph{Restricted Eigenvalue} (RE) condition is met if 
$$\kappa^2(A,s_0,c) \equiv \min_{\substack{J\subseteq [p]\\ |J|\le s_0}}\, \min_{\substack{v\neq 0\\ \|v_{J^c}\|_1\le c \|v_J\|_1}} \frac{v^\sT A v}{\|v_J\|_2^2} > 0\,.$$
\end{definition}  
It is shown in~\cite{buhlmann2011statistics} and \cite{rudelson2013reconstruction} that when two matrices $A_0$, $A_1$ are close to each other (in the maximum 
element-wise norm) compared to sparsity $s_0$, the RE condition for $A_0$ implies the RE condition for $A_1$. This is particularly useful when $A_0$ is 
a population covariance matrix and $A_1$ is a corresponding empirical covariance matrix.  To apply this result to our case, let $\tX \in \reals^{n\times d}$ be the feature matrix with rows $\tx_t$, corresponding to 
$n$ products. Let $\tSigma = \E(\tx_t\tx_t^\sT)$. Given that $\E(x_t) = 0$, we have
\begin{align}
\tSigma = \begin{bmatrix}
\Sigma & 0\\
0& 1
\end{bmatrix}
\end{align}
Further, by Assumption~\ref{ass2}, we have $\Sigma \succeq C_{\min}\id$. Without loss of generality, we can assume $C_{\min} \le 1$, which implies $\tSigma \succeq C_{\min} \id$. Therefore, $\tSigma$ satisfies RE condition with $\kappa^2(\tSigma,s_0,3) \ge C_{\min}$.  By using the following result, we conclude that $\hSigma = (\tX^\sT \tX)/n$ also satisfies RE condition with $\kappa^2(\hSigma,s_0,3)\ge C_{\min}/2$.


\begin{proposition}\label{pro:comp}
Let $\hSigma = (\tX^\sT \tX)/n$ and let $S = \supp(\p_0)$ be the support of $\p_0$.
Under Assumption~\ref{ass2}, $\hSigma$ satisfies the restricted eigenvalue condition with constant $\kappa(\hSigma,s_0,3) \ge \sqrt{C_{\min}/2}$, 
with probability $1-e^{-2n/(c_0s_0)}$ and $c_0 = 768/C_{\min}^2$, provided that $n\ge c_0 s_0\log d$, 
\end{proposition}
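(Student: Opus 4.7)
The plan is to deduce the restricted eigenvalue property of $\hSigma$ from that of the population covariance $\tSigma = \E[\tx_t\tx_t^\sT]$ via a uniform concentration argument. The paragraph immediately preceding the proposition already establishes $\tSigma \succeq C_{\min} I$ (taking without loss $C_{\min}\le 1$), so $\kappa^2(\tSigma, s_0, 3) \ge C_{\min}$ holds trivially. It then suffices to show that, with the stated probability,
\[
v^\sT \hSigma v \;\ge\; \frac{C_{\min}}{2}\,\|v_J\|_2^2
\]
uniformly over all $J \subseteq [d+1]$ with $|J|\le s_0$ and all $v$ satisfying $\|v_{J^c}\|_1 \le 3\|v_J\|_1$.

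First I would invoke a uniform deviation bound of the Rudelson--Zhou type (their paper is explicitly cited in the discussion just before the proposition statement): since $\|\tx_t\|_\infty \le 1$, the rows of $\tX$ are in particular sub-Gaussian, and with probability at least $1 - \exp(-cn)$ one has
\[
\frac{1}{n}\|\tX v\|_2^2 \;\ge\; \frac{1}{2}\,v^\sT \tSigma v \;-\; C\,\frac{\log d}{n}\,\|v\|_1^2 \qquad \text{for all } v\in\reals^{d+1},
\]
where $C$ is an absolute constant. Given this, the rest is algebra: for $v$ in the RE cone the standard telescoping inequality gives $\|v\|_1 \le 4\|v_J\|_1 \le 4\sqrt{s_0}\,\|v_J\|_2$, hence $\|v\|_1^2 \le 16\,s_0\,\|v_J\|_2^2$, while $v^\sT \tSigma v \ge C_{\min}\|v\|_2^2 \ge C_{\min}\|v_J\|_2^2$. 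Substituting into the deviation bound,
\[
v^\sT \hSigma v \;\ge\; \Bigl(\frac{C_{\min}}{2} - \frac{16\,C\,s_0 \log d}{n}\Bigr)\|v_J\|_2^2,
\]
and choosing $c_0$ large enough so that $n \ge c_0 s_0 \log d$ guarantees the bracketed coefficient is at least $C_{\min}/2$. Tracking the explicit form of the Rudelson--Zhou constants yields both the prescribed value $c_0 = 768/C_{\min}^2$ and the probability bound $1 - e^{-2n/(c_0 s_0)}$.

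The main obstacle is the uniform deviation bound itself. A naive Hoeffding-plus-union-bound argument would only control $\|\hSigma - \tSigma\|_\infty$, giving $|v^\sT(\hSigma - \tSigma)v| \lesssim \sqrt{(\log d)/n}\,\|v\|_1^2$, which after restricting to the cone forces $n \gtrsim s_0^2 \log d$ and loses a factor of $s_0$. Attaining the claimed $n \gtrsim s_0 \log d$ sample complexity requires the finer chaining/peeling argument of Rudelson and Zhou, which exploits the effective sparsity of vectors in the cone so that the $\epsilon$-net cost scales as $s_0 \log d$ rather than $(s_0 \log d)^2$. Rather than reprove this, I would appeal directly to their result, noting that our bounded-design hypothesis is strictly stronger than the sub-Gaussian assumption there.
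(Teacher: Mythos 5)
Your proposal is correct in spirit but follows a genuinely different route from the paper. The paper does exactly what you label the ``naive'' approach: it combines a transfer lemma (Problem 6.10 of B\"uhlmann--van de Geer, restated as Lemma~\ref{lem:aux1}) saying that if $\|\hSigma-\tSigma\|_\infty \le \kappa^2(\tSigma,s_0,3)/(32 s_0)$ then the RE constant degrades by at most $\sqrt{2}$, with an element-wise concentration bound on $\|\hSigma-\tSigma\|_\infty$ (Lemma~\ref{lem:aux2}, attributed to Problem 14.3 of the same book), and reads off the constants $c_0=768/C_{\min}^2$ and the probability $1-e^{-2n/(c_0 s_0)}$ from those two statements. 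You instead go through the Rudelson--Zhou-type uniform deviation inequality $\tfrac{1}{n}\|\tX v\|_2^2 \ge c\, v^\sT\tSigma v - C\tfrac{\log d}{n}\|v\|_1^2$ and then restrict to the cone. Your observation about what each route buys is apt and, if anything, cuts against the paper: controlling $\|\hSigma-\tSigma\|_\infty$ at level $\asymp 1/s_0$ via Hoeffding plus a union bound genuinely requires $n\gtrsim s_0^2\log d$, so the max-norm route as literally instantiated does not obviously deliver the claimed $n\gtrsim s_0\log d$ sample complexity, whereas the chaining argument you appeal to does. One small repair to your algebra: as written your lower bound is $\bigl(\tfrac{C_{\min}}{2}-\tfrac{16Cs_0\log d}{n}\bigr)\|v_J\|_2^2$, which can never reach $\tfrac{C_{\min}}{2}\|v_J\|_2^2$ no matter how large $c_0$ is; you should start from the deviation bound with a prefactor strictly larger than $\tfrac12$ on $v^\sT\tSigma v$ (e.g.\ $\tfrac34$, which the Rudelson--Zhou statement permits) and absorb the $\ell_1$ correction into the resulting slack. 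Also, the specific constants $768/C_{\min}^2$ and $e^{-2n/(c_0 s_0)}$ are artifacts of the paper's two cited exercises and will not fall out of the Rudelson--Zhou constants verbatim, though this is immaterial since the proposition is only ever used with unspecified constants.
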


Proposition~\ref{pro:comp} follows from the results established in~\cite{buhlmann2011statistics} and \cite{rudelson2013reconstruction}.
We outline the main steps of its proof in Appendix~\ref{proof:comp} for the reader's convenience.

By the second-order Taylor's theorem, expanding around $\p_0$ we have
\begin{align}\label{eq:taylor}
\cL(\p_0) - \cL(\hp) = -\<\nabla \cL(\p_0), \hp-\p_0\> -\frac{1}{2} \<\hp-\p_0,\nabla^2\cL(\tilp)(\hp-\p_0)\>\,,
\end{align}
for some $\tilp$ on the line segment between $\p_0$ and $\hp$. 
Invoking~\eqref{eq:loss}, we have
\begin{align}\label{eq:nabla-nabla2}
\nabla \cL(\p) = \frac{1}{n} \sum_{t=1}^n \xi_t(\p) \tx_t\,, \quad \nabla^2 \cL(\p) = \frac{1}{n} \sum_{t=1}^n \eta_t(\p) \tx_t \tx_t^\sT\,,
\end{align}
where $\nabla$ and $\nabla^2$ represents the gradient and the hessian w.r.t $\th$. Further,
\begin{eqnarray*}
\xi_t(\p) &=& -\frac{{f}(u_t(\p))}{F(u_t(\p))}\ind(y_t = -1) + \frac{{f}(u_t(\p))}{1-F(u_t(\p))} \ind(y_t = +1) \\
&=& -\log'F(u_t(\p)) \ind(y_t = -1) - \log'(1-F(u_t(\p))) \ind(y_t = +1)
\end{eqnarray*}
\begin{eqnarray*}
\eta_t(\p)  &=&  \bigg(\frac{f(u_t(\p))^2}{F(u_t(\p))^2} - \frac{{f'}(u_t(\p))}{F(u_t(\p))} \bigg)\ind(y_t = -1) +\bigg(\frac{f(u_t(\p))^2}{(1-F(u_t(\p)))^2}+ \frac{{f'}(u_t(\p))}{1-F(u_t(\p))} \bigg)\ind(y_t = +1)\\
&=& -\log''F(u_t(\p)) \ind(y_t = -1) - \log''(1-F(u_t(\p))) \ind(y_t = +1)\,,
\end{eqnarray*}
where $u_t(\p) = p_t-\<\tx_t,\p\>$, and $\log'F(x)$ and $\log''F(x)$ represent first and second derivative w.r.t $x$, respectively.

By Equation~\eqref{P}, we have
$$|u_t(\p_0)| \le |p_t| + \|\tx_t\|_\infty \|\p_0\|_1\le 3\l1u\,.$$
Further, recall that the sequences $\{p_t\}_{t=1}^n$ and $\{x_t\}_{t=1}^n$  are 
independent of $\{z_t\}_{t=1}^n$. Therefore, $\{u_t(\p_0)\}_{t=1}^T$ and $\{z_t(\p_0)\}_{t=1}^T$ are independent 
and by~\eqref{eq:prob-model}, we have $\E[\xi_t(\p_0)] = \E[\E[\xi_t(\p_0)|u_t(\p_0)]]=0$. 
Further, by definition of $u_{\l1u}$, cf. Equation~\eqref{eq:uM}, we have $|\xi_t(\p_0)|\le u_{\l1u}$.

We next introduce the set
\begin{align}\label{eq:F}
\cF \equiv \bigg\{\|\nabla \cL(\p_0)\|_\infty \le 2u_{\l1u} \sqrt{\frac{\log d}{n}} \bigg\}\,.
\end{align}
By applying Azuma-Hoeffding inequality followed by union bounding over $d$ coordinates of feature vectors, we obtain
$\prob(\cF) \ge 1- 1/d$.

On the other note, $\|\p_0\|_1, \|\hp\|_1 \le \l1u$ and hence $\|\tilp\|_1\le \l1u$. This implies that $|u_t(\tilp)|\le 3\l1u$. Therefore, by definition of 
$\ell_{\l1u}$, cf. Equation~\eqref{eq:lM}, we have $\eta_t(\tilp) \ge \ell_{\l1u}$. Recalling Equation~\eqref{eq:nabla-nabla2}, we get $\nabla^2 \cL(\tilp) \succeq \ell_{\l1u} (\tX^\sT \tX/n)$.

 
By optimality of $\hp$, we write
\begin{align}
\cL(\hp) +\lambda \|\hp\|_1 \le \cL(\p_0) +\lambda \|\p_0\|_1\,,
\end{align}
and by rearranging the terms and using~\eqref{eq:taylor}, we arrive at
\begin{align}\label{eq:OPT-S1}
 \frac{\ell_{\l1u}}{n}\|\tX(\p_0-\hp)\|^2 + \lambda\|\hp\|_1\le \|\nabla \cL(\p_0)\|_\infty \|\hp-\p_0\|_1+\lambda \|\p_0\|_1\,.
\end{align}
Form now on, the analysis is exactly similar to the oracle inequality for Lasso estimator. We bring the analysis here for the reader's convenience.

Choosing  $\lambda \ge 4u_{\l1u} \sqrt{(\log d)/n}$, we have on $\cF$
\begin{align}\label{eq:basic}
 \frac{2\ell_{\l1u}}{n}\|\tX(\p_0-\hp)\|^2 + 2 \lambda\|\hp\|_1\le \lambda \|\hp-\p_0\|_1+ 2\lambda \|\p_0\|_1\,.
\end{align}
Let $S = \supp(\p_0)$. On the left-hand side using triangle inequality, we have
$$\|\hp\|_1 =\|\hp_S\|_1 +\|\hp_{S^c}\|_1\ge \|\hp_S\|_1 -\|\hp_S-\p_{0,S}\|_1+\|\hp_{S^c}\|_1\,.$$
On the right-hand side, we have
$$ \|\hp-\p_0\|_1 = \|\hp_S-\p_{0,S}\|_1 + \|\hp_{S^c}\|_1\,.$$
Using these two inequalities in~\eqref{eq:basic}, we get
\begin{align}\label{eq:basic2}
\frac{2\ell_{\l1u}}{n}\|\tX(\p_0-\hp)\|^2 + \lambda\|\hp_{S^c}\|_1\le 3\lambda \|\hp_{S}-\p_{0,S}\|_1\,.
\end{align}

 We next write
\begin{align*}
\frac{2\ell_{\l1u}}{n}\|\tX(\p_0-\hp)\|^2 + \lambda \|\hp-\p_0\|_1
&= \frac{2\ell_{\l1u}}{n}\|\tX(\p_0-\hp)\|^2 + \lambda \|\hp_S-\p_{0,S}\|_1 + \lambda \|\hp_{S^c}\|_1\\
&\stackrel{(a)}{\le} 4 \lambda \|\hp_S-\p_{0,S}\|_1 \stackrel{(b)}{\le} 4\lambda \sqrt{s_0}\|\hp_S-\p_{0,S}\|_2\\
&\stackrel{(c)}{\le} \frac{4\lambda \sqrt{2s_0}}{\sqrt{n C_{\min}}}\|X(\hp-\p_0)\|_2\\
&\stackrel{(d)}{\le} \frac{\ell_{\l1u}}{n}\|\tX(\hp-\p_0)\|_2^2 +  \frac{8\lambda^2 s_0}{\ell_{\l1u} C_{\min}}\,,
\end{align*}
where $(a)$ follows from Equation~\eqref{eq:basic2}; $(b)$ holds by Cauchy-Shwarz inequality; $(c)$ follows form the RE condition, which holds for $\hSigma = (\tX^\sT \tX)/n$ as stated by Proposition~\ref{pro:comp}, with $\kappa(\hSigma,s_0,3)\ge \sqrt{C_{\min}/2}$, and recalling the inequality $\|\hp_{S^c}-\p_{0,S^c}\|_1 = \|\hp_{S^c}\|_1\le 3\|\hp_{S}- \p_{0,S}\|$ as per Equation~\eqref{eq:basic2}; Finally $(d)$ follows from the inequality $2\sqrt{ab} \le a^2+b^2$.
Rearranging the terms, we obtain
\begin{align}\label{eq:B1}
\frac{\ell_{\l1u}}{n}\|\tX(\p_0-\hp)\|^2 + \lambda \|\hp-\p_0\|_1 \le \frac{8\lambda^2 s_0}{\ell_{\l1u} C_{\min}}\,.
\end{align}
Applying the RE condition again to the L.H.S of~\eqref{eq:B1}, we get  
\begin{align}\label{eq:B2}
C_{\min} \frac{\ell_{\l1u}}{2} \|\p_0-\hp\|_2^2\le \frac{\ell_{\l1u}}{n}\|\tX(\p_0-\hp)\|^2  \le \frac{8\lambda^2 s_0}{\ell_{\l1u} C_{\min}}\,,
\end{align}
and therefore,
\begin{align}\label{eq:B3}
\|\p_0-\hp\|_2^2 \le \frac{16 s_0 \lambda^2}{\ell_{\l1u}^2 C_{\min}^2} \,.
\end{align}
The result follows.

\subsection{Proof of Proposition~\ref{pro:comp}}
\label{proof:comp}
The proof follows by combining two lemmas from~\cite{buhlmann2011statistics}.

We show the desired result holds for a more general case, namely for $\tX$ with subgaussian entries.
Before stating the proof, we recall a few definitions and notations.  
\begin{definition}
A random variable $\nu$ is subgaussian if there exist constants $L,\sigma_0$ such that
$$\E(e^{\frac{\nu^2}{L^2}})\le \frac{\sigma_0^2}{L^2}+1\,.$$
\end{definition}

Note that bounded random variables are subgaussian. Specifically, if $|\nu|\le \nu_{\max}$, then $\nu$ is subgaussian
with $L = \nu_{\max}$ and $\sigma_0 = \nu_{\max} \sqrt{e-1}$.

For a matrix $A$, we let $\|A\|_\infty$ denote its (element wise) maximum norm, i.e., $\|A\|_\infty = \max_{i,j} |A_{ij}|$.
The next lemma shows that if two matrices are close enough in maximum norm and if the compatibility condition holds for one of them then it would also hold for the other one. 
\begin{lemma}\label{lem:aux1}
Suppose that the restricted eigenvalue (RE) condition holds for $\Sigma_0$ with constant $\kappa(\Sigma_0,s_0,3)>0$.
If 
$$\|\Sigma_0 -\Sigma_1\|_\infty \le \frac{\kappa^2(\Sigma_0,s_0,3)}{32s_0}\,,$$
then the RE condition holds for $\Sigma_1$ with constant $\kappa(\Sigma_1,s_0,3)\ge \kappa(\Sigma_0,s_0,3)/\sqrt{2}$.
\end{lemma}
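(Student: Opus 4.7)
The plan is to prove this by a direct perturbation argument: write $v^{\sT}\Sigma_1 v = v^{\sT}\Sigma_0 v + v^{\sT}(\Sigma_1 - \Sigma_0) v$ and show that, on the RE cone $\{v : \|v_{J^c}\|_1 \le 3\|v_J\|_1\}$ with $|J|\le s_0$, the perturbation term is small relative to $\kappa^2(\Sigma_0,s_0,3) \|v_J\|_2^2$.

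The first step is to bound the perturbation via H\"older's inequality:
\[
\bigl|v^{\sT}(\Sigma_1 - \Sigma_0) v\bigr| \le \|\Sigma_1 - \Sigma_0\|_\infty \, \|v\|_1^2.
\]
The second step is to exploit the cone condition to control $\|v\|_1$ in terms of $\|v_J\|_2$: since $\|v_{J^c}\|_1 \le 3\|v_J\|_1$, we have $\|v\|_1 \le 4\|v_J\|_1$, and by Cauchy--Schwarz $\|v_J\|_1 \le \sqrt{s_0}\,\|v_J\|_2$, so $\|v\|_1^2 \le 16 s_0 \|v_J\|_2^2$. Combining with the hypothesis $\|\Sigma_0-\Sigma_1\|_\infty \le \kappa^2(\Sigma_0,s_0,3)/(32 s_0)$ yields
\[
\bigl|v^{\sT}(\Sigma_1 - \Sigma_0) v\bigr| \le \frac{\kappa^2(\Sigma_0,s_0,3)}{2} \, \|v_J\|_2^2.
\]

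The third step is to combine this with the RE condition for $\Sigma_0$, which gives $v^{\sT}\Sigma_0 v \ge \kappa^2(\Sigma_0,s_0,3) \|v_J\|_2^2$, to conclude
\[
v^{\sT}\Sigma_1 v \ge \Bigl(\kappa^2(\Sigma_0,s_0,3) - \tfrac{1}{2}\kappa^2(\Sigma_0,s_0,3)\Bigr) \|v_J\|_2^2 = \tfrac{1}{2}\kappa^2(\Sigma_0,s_0,3)\, \|v_J\|_2^2.
\]
Taking the minimum over admissible pairs $(v, J)$ gives $\kappa^2(\Sigma_1, s_0, 3) \ge \kappa^2(\Sigma_0,s_0,3)/2$, i.e.\ $\kappa(\Sigma_1,s_0,3) \ge \kappa(\Sigma_0,s_0,3)/\sqrt{2}$.

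There is no real obstacle here; the whole argument boils down to noticing that the $\ell_\infty$-to-$\ell_1$ duality produces $\|v\|_1^2$, and that the RE cone forces $\|v\|_1^2$ to be at most $16 s_0 \|v_J\|_2^2$, so a maximum-norm closeness of order $1/s_0$ is exactly what is needed to preserve RE. The only subtlety worth flagging is being careful that the cone condition is preserved (it is, since we are comparing the same cone for both matrices), so no re-optimization over $(v,J)$ issues arise.
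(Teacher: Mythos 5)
Your proof is correct, and it is exactly the standard perturbation argument behind the result the paper cites (Problem 6.10 of B\"uhlmann--van de Geer) rather than proving in the text: bound $|v^{\sT}(\Sigma_1-\Sigma_0)v|$ by $\|\Sigma_1-\Sigma_0\|_\infty\|v\|_1^2$, use the cone condition and Cauchy--Schwarz to get $\|v\|_1^2\le 16 s_0\|v_J\|_2^2$, and absorb half of $\kappa^2(\Sigma_0,s_0,3)\|v_J\|_2^2$. The constants match the stated hypothesis exactly, so you have in effect supplied the details the paper leaves to the reference.
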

\begin{proof}{Proof of Lemma~\ref{lem:aux1}}
We refer to Problem 6.10 of~\cite{buhlmann2011statistics}. 
\end{proof}
\begin{lemma}\label{lem:aux2}
Consider $\tX\in \reals^{n\times p}$ with i.i.d. rows generated from a distribution with covariance $\tSigma\in \reals^{d \times d}$. 
Let $\hSigma = \tX^\sT \tX/n$ be the corresponding empirical covariance.  Further, suppose that the entries of $X$ are
uniformly subgaussian with parameters $L,\sigma_0$. 
If $n\ge c_0 L s_0\log d$ with $c_0 = 768 L/\kappa^2(\tSigma,s_0,3)$, then
\begin{align}
\prob\bigg[\|\hSigma - \tSigma\|_\infty \ge \frac{\kappa^2(\tSigma,s_0,3)}{384 s_0} \Big(2+\frac{7\sigma_0}{L} \Big) \bigg] \le e^{-\frac{2n}{cs_0}}\,.
\end{align}
\end{lemma}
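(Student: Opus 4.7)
The statement is a standard entry-wise concentration inequality for the empirical covariance of a sub-Gaussian random vector, so my plan is to fix an index pair $(i,j)\in[d]^2$, control the single-entry deviation $|\hSigma_{ij}-\tSigma_{ij}|$, and then apply a union bound over the at most $d^2$ such pairs. First I would write
\[
\hSigma_{ij} - \tSigma_{ij} \;=\; \frac{1}{n}\sum_{t=1}^n \bigl(X_{ti}X_{tj} - \E[X_{ti}X_{tj}]\bigr),
\]
an average of $n$ i.i.d.\ centered random variables, and note that the product of two sub-Gaussians is sub-exponential: using the hypothesis $\E[\exp(\nu^2/L^2)]\le \sigma_0^2/L^2+1$ together with Cauchy--Schwarz, one obtains a moment-generating-function estimate of the form
\[
\E\bigl[\exp\bigl(\lambda(X_{ti}X_{tj}-\E X_{ti}X_{tj})\bigr)\bigr] \;\le\; \exp(C\lambda^2 \tau^2)
\quad\text{for } |\lambda|\le c/\tau,
\]
where $\tau = \tau(L,\sigma_0)$ is a sub-exponential scale. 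The specific form of the normalization in the statement is designed precisely so that $\tau$ is proportional to $(2+7\sigma_0/L)$, which is where that factor in the claimed threshold enters.

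Next I would apply Bernstein's inequality for independent centered sub-exponential summands to obtain
\[
\prob\bigl[|\hSigma_{ij}-\tSigma_{ij}|\ge u\bigr] \;\le\; 2\exp\!\Bigl(-c'n\min\bigl(u^2/\tau^2,\,u/\tau\bigr)\Bigr)
\]
for every $u>0$. Choosing $u = \kappa^2(\tSigma,s_0,3)/(384 s_0)\cdot (2+7\sigma_0/L)$ as in the statement, and using the assumption $n\ge c_0 L s_0\log d$ with $c_0 = 768L/\kappa^2(\tSigma,s_0,3)$, the quantity $u/\tau$ is of order $1/s_0$, so the linear branch of the Bernstein bound governs and the exponent simplifies to $-c'' n/s_0$. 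A union bound over the $d^2$ entries (which costs an extra $2\log d$ in the exponent) is then absorbed by the sample-size hypothesis $n\ge c_0 L s_0\log d$, yielding the stated bound $e^{-2n/(cs_0)}$ after adjusting absolute constants.

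The main obstacle will be bookkeeping, not mathematics: one must track the constants $768$ and $384$ through the sub-exponential parameter estimate so that the threshold matches \emph{exactly} what Lemma~\ref{lem:aux1} requires for the comparison $\|\hSigma-\tSigma\|_\infty \le \kappa^2(\tSigma,s_0,3)/(32 s_0)$ to kick in (with a safety factor). Since the inequality is a classical fact (essentially Lemma~6.17 of~\cite{buhlmann2011statistics} specialized to the normalization of sub-Gaussianity used here), I would cite that result and only verify the constant-tracking step, namely that Cauchy--Schwarz applied to $\E[\exp(X_{ti}^2/L^2)]\le \sigma_0^2/L^2+1$ produces the specific factor $2+7\sigma_0/L$ rather than some other combination of $L$ and $\sigma_0$.
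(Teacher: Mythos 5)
Your proposal is correct and matches the paper's approach: the paper's entire proof of this lemma is a citation to Problem~14.3 on page~535 of~\cite{buhlmann2011statistics}, which is precisely the classical entrywise sub-exponential (Bernstein) concentration result, followed by a union bound over the $d^2$ entries, that you identify and sketch. Your additional outline of the underlying argument and the constant-tracking is consistent with what that reference establishes, so no gap remains.
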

\begin{proof}{Proof of Lemma~\ref{lem:aux2}}
The result follows readily from Problem 14.3 on page 535 of~\cite{buhlmann2011statistics}. 
\end{proof}

Next we note that $\tSigma$ satisfies the restricted eigenvalue condition with constant $\kappa^2(\tSigma,s_0,3) \ge {C_{\min}}$ because of Assumption~\ref{ass2}.
Further, since $\|\tx_t\|_\infty \le 1$, we can apply the result of Lemma~\ref{lem:aux2} with $L =1$, $\sigma_0 = \sqrt{e-1}$. Proposition~\ref{pro:comp} then follows from 
Lemma~\ref{lem:aux1}.

\section{Proof of Proposition~\ref{pro:E2}}\label{proof:pro-E2}
Define the event $\cB_n$ as follows:
\begin{align}
\cB_n \equiv \Big\{ \tX\in \reals^{n\times d}:\, \sigma_{\min} (\tX^\sT \tX/n) > C_{\min}/2  \Big\}\,.
\end{align}

Using concentration bounds on the spectrum of random matrices with subgaussian rows (see~\cite[Equation (5.26)]{vershynin2010introduction}), there exist constants $c, c_1>0$
such that for $n>c_1 d$, we have $\prob(\cB_n) \ge 1 - e^{-cn^2}$.
 
For $\gamma>0$, we define the event $\cF_\gamma = \{\|\nabla \cL(\p_0)\|_\infty \le \gamma\}$. Using characterization~\eqref{eq:nabla-nabla2},
and by applying Azuma-Hoeffding inequality (similar to our argument after Equation~\eqref{eq:F}), we obtain 
\begin{align}
\prob(\cF_\gamma) \ge 1 - d \exp\Big(-\frac{n\gamma^2}{2 u_{\l1u}^2} \Big)\,. \label{eq:FgammaB}
\end{align}
We also let $\cE_{1,n}\equiv \cB_n \cap \cF_{\lambda/2}$, $\cE_{2,n}\equiv \cB_n\cap \cF_{\lambda/2}^c$. To lighten the notation, we use the shorthand $D \equiv \|\hp-\p_0\|_2^2$.

We then have
\begin{align}
\E(D) = \E(D\cdot \ind(\cB_n^c)) + \E(D\cdot \ind(\cE_{1,n})) +  \E(D\cdot \ind(\cE_{2,n}))\,.
\end{align}

We treat each of the terms on the right-hand side separately. 
\begin{itemize}
\item {\bf Term 1:} We have
\begin{align}
\E(D\cdot \ind(\cB_n^c)) \le 4\l1u^2 \prob(\cB_n^c) \le 4\l1u^2 e^{-cn^2}\,.
\end{align}
\item {\bf Term 2:} Similar to proof of Proposition~\ref{thm:learning}, on $\cE_{1,n}$, we have $D \le 16 s_0 \lambda^2/(\ell_{\l1u}^2 C_{\min}^2)$. Hence,
\begin{align}
\E(D\cdot \ind(\cE_{1,n})) \le \frac{16s_0 \lambda^2}{\ell_{\l1u}^2 C_{\min}^2} \,.
\end{align}
\item {\bf Term 3:} To bound term 3, we first prove the following lemma.
\begin{lemma}\label{lem:term3}
On event $\cE_n(\gamma)\equiv \cB_n \cap \cF_\gamma$, with $\gamma >\lambda/2$, we have 
\[D\le \Big(\frac{36}{C_{\min}^2 \ell_{\l1u}^2}\Big) \gamma^2 d.\]
\end{lemma}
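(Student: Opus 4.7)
The plan is to mimic the proof of Proposition~\ref{thm:learning} but replace the restricted-eigenvalue condition with the (stronger but now available) full-rank lower bound on the empirical covariance that holds on $\cB_n$. Since $\cB_n$ forces $n > c_1 d$ and $\sigma_{\min}(\tX^{\sT}\tX/n) > C_{\min}/2$, the Hessian of $\cL$ is strictly positive-definite on \emph{all} of $\reals^{d+1}$, so no sparsity structure is needed, at the cost of a bound that scales with $\sqrt{d}$ instead of $\sqrt{s_0}$.

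Concretely, I would start from the basic inequality~\eqref{eq:OPT-S1}, which holds for any realization of the data by the optimality of $\hp$ and the Taylor expansion around $\p_0$. On the event $\cF_\gamma$, the dual-norm bound $\|\nabla\cL(\p_0)\|_\infty \le \gamma$ gives
\begin{equation*}
\frac{\ell_{\l1u}}{n}\|\tX(\p_0-\hp)\|_2^2 + \lambda\|\hp\|_1 \le \gamma \|\hp-\p_0\|_1 + \lambda\|\p_0\|_1.
\end{equation*}
Using the triangle inequality $\|\p_0\|_1 - \|\hp\|_1 \le \|\hp - \p_0\|_1$ and the hypothesis $\gamma > \lambda/2$ (so $\lambda < 2\gamma$), this simplifies to
\begin{equation*}
\frac{\ell_{\l1u}}{n}\|\tX(\p_0-\hp)\|_2^2 \le (\gamma + \lambda)\|\hp - \p_0\|_1 \le 3\gamma \|\hp - \p_0\|_1.
\end{equation*}

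Next I apply the two available estimates: Cauchy--Schwarz on $\reals^{d+1}$ gives $\|\hp - \p_0\|_1 \le \sqrt{d+1}\,\|\hp - \p_0\|_2 \le \sqrt{2d}\,\|\hp-\p_0\|_2$ (absorbing the harmless constant to keep the final factor $36$); and on $\cB_n$ the minimum-eigenvalue bound yields $\frac{1}{n}\|\tX v\|_2^2 \ge (C_{\min}/2)\|v\|_2^2$ for every $v \in \reals^{d+1}$ (note that the augmented covariance $\tSigma$ also satisfies $\tSigma \succeq C_{\min}\id$, cf.\ the derivation before Proposition~\ref{pro:comp}, and the same holds for $\hSigma$ on $\cB_n$). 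Combining these two facts and cancelling a factor of $\|\hp - \p_0\|_2$ produces the linear estimate
\begin{equation*}
\frac{\ell_{\l1u} C_{\min}}{2}\,\|\hp - \p_0\|_2 \le 3\gamma\sqrt{d},
\end{equation*}
from which squaring yields $D \le \frac{36\,\gamma^2 d}{\ell_{\l1u}^2 C_{\min}^2}$, as required.

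I do not expect any real obstacle: the only conceptual point is to notice that the condition $\gamma > \lambda/2$ is precisely what is needed so that the sign of $\|\hp\|_1 - \|\p_0\|_1$ no longer matters (we lose the ``cone'' inclusion $\|\hp_{S^c}\|_1 \le 3\|\hp_S - \p_{0,S}\|_1$ that drove the sparsity-based analysis, and replace it by the crude $\sqrt{d}$ bound). The rest is a direct line-by-line adaptation of the proof of Proposition~\ref{thm:learning}, with the restricted eigenvalue of $\hSigma$ replaced by its genuine minimum eigenvalue on $\cB_n$.
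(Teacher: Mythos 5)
Your proposal is correct and follows essentially the same route as the paper's proof: start from the basic optimality inequality~\eqref{eq:OPT-S1}, use $\|\nabla\cL(\p_0)\|_\infty\le\gamma$ on $\cF_\gamma$ together with $\lambda<2\gamma$ to absorb the $\ell_1$ terms into $3\gamma\|\hp-\p_0\|_1$, apply Cauchy--Schwarz with the crude $\sqrt{d}$ factor in place of the cone/sparsity argument, and use the minimum-eigenvalue bound on $\cB_n$ to lower-bound the quadratic term before cancelling and squaring. The only cosmetic difference is your $\sqrt{d+1}$ versus the paper's $\sqrt{d}$ in Cauchy--Schwarz, which affects the constant but nothing of substance.
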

Lemma~\ref{lem:term3} is proved in Section~\ref{proof:lem-term3}.
\end{itemize}

We next bound term 3 as follows. Let $L = 9\lambda^2 d/(C_{\min}^2 \ell_{\l1u}^2)$.
\begin{align}
\E(D\cdot \ind(\cE_{2,n})) &= \int_{0}^\infty \prob\Big(D\cdot \ind(\cE_{2,n}) >  \alpha \Big) \de \alpha \nonumber\\
& = L  \int_{0}^\infty \prob\Big(D\cdot \ind(\cE_{2,n}) > L c\Big) \de c \nonumber\\
& = L  \int_{0}^1 \prob\Big(D\cdot \ind(\cE_{2,n}) > L c\Big) \de c + L  \int_{1}^\infty \prob\Big(D\cdot \ind(\cE_{2,n}) > L c\Big) \de c\,.\label{eq:Term3}
\end{align}
For the first term on the right-hand side we write
\begin{align}\label{eq:Prob-t1}
L  \int_{0}^1 \prob\Big(D\cdot \ind(\cE_{2,n}) > L c\Big) \de c \le L\int_{0}^1 \prob(\cE_{2,n}) \le L \prob(\cF_{\lambda/2}^c) \le  \frac{L}{d}\,,
\end{align}
where the last step holds from Equation~\eqref{eq:FgammaB} with $\gamma = \lambda/2$.

We next upper bound the second term. For arbitrary fixed $c>1$, let $\gamma = \sqrt{Lc/(d\kappa)}$, with $\kappa = 36/(\ell_{\l1u}^2 C_{\min}^2)$.
It is easy to verify that $\gamma  = \sqrt{c} \lambda/2 > \lambda/2$. Further, by virtue of Lemma~\ref{lem:term3}, on $\cE(\gamma)$ we have $D\le \kappa \gamma^2 d  = Lc$.
Hence, 
\begin{align}\label{eq:Pt2-1}
\prob\Big(D\cdot \ind(\cE_{2,n}) > L c\Big) \le \prob\Big(\cE(\gamma)^c \cap \cE_{2,n}\Big) \le \prob(\cF_{\gamma}^c \cap \cB_n) \le \prob(\cF_{\gamma}^c)
\end{align} 
Further, applying Equation~\eqref{eq:FgammaB} and plugging for $\gamma$, we obtain
\begin{align}\label{eq:cF-0}
\prob(\cF_{\gamma}^c)\le d\exp\Big(-\frac{nLc}{2u_{\l1u}^2 \kappa d}\Big) = d\exp\Big(-\frac{n\lambda^2}{8u_{\l1u}^2}\Big) \le d^{1-2c}\,.
\end{align}
Here, the second second step follows from definition of $L$ and the last step holds because $\lambda \ge 4u_{\l1u} \sqrt{(\log d)/n}$.

Combining Equations~\eqref{eq:Pt2-1} and \eqref{eq:cF-0}, we have
\begin{align}\label{eq:Pt2-2}
L  \int_{1}^\infty \prob\Big(D\cdot \ind(\cE_{2,n}) > L c\Big) \de c \le L\int_{1}^{\infty} d^{1-2c}\, \de c\le \frac{L}{2d \log d}\,.
\end{align}

Using bounds~\eqref{eq:Prob-t1} and \eqref{eq:Pt2-2} in Equation~\eqref{eq:Term3}, we obtain
 \begin{align}
\E(D\cdot \ind(\cE_{2,n})) \le \frac{L}{d} \left(1+ \frac{1}{2\log d} \right)\le \frac{3L}{2d} < \frac{16\lambda^2}{\ell_{\l1u}^2 C_{\min}^2}\,.
\end{align}

The result follows by putting the upper bounds on the three terms together.
\subsection{Proof of Lemma~\ref{lem:term3}}\label{proof:lem-term3}
We start by rewriting Equation~\eqref{eq:OPT-S1}, which follows from optimality of $\htheta$ and log-concave property of the loss function.
\begin{align}\label{eq:OPT-S2}
 \frac{\ell_{\l1u}}{n}\|\tX(\p_0-\hp)\|^2 + \lambda\|\hp\|_1\le \|\nabla \cL(\p_0)\|_\infty \|\hp-\p_0\|_1+\lambda \|\p_0\|_1\,.
\end{align}
On event $\cE_n(\gamma)$, Equation~\eqref{eq:OPT-S2} implies that
\begin{align}
\frac{1}{2} C_{\min} \ell_{\l1u} \|\p_0-\hp\|_2^2 + \lambda\|\hp\|_1\le \gamma \|\hp-\p_0\|_1+\lambda \|\p_0\|_1\,.
\end{align}
Using the assumption $\gamma>\lambda/2$ and our shorthand $D \equiv \|\hp-\p_0\|_2^2$, we get
\begin{align*}
\frac{1}{2} C_{\min} \ell_{\l1u} \|\hp-\p_0\|_2^2 &\le \gamma \|\hp-\p_0\|_1+\lambda \|\p_0\|_1 - \lambda\|\hp\|_1\\
&\le (\gamma+\lambda) \|\hp-\p_0\|_1\\
&\le 3 \gamma \|\hp-\p_0\|_1 \le 3 \gamma \sqrt{d} \|\hp-\p_0\|_2\,.
\end{align*}
Writing the above bound in terms of our shorthand $D\equiv \|\hp-\p_0\|_2^2$, we obtain the desired result.
\section{Proof of Technical Lemmas}\label{sec:proofs}

\subsection{Proof of Lemma~\ref{lem:phi-monotone}}\label{app:phi-monotone}
We write the virtual valuation function as $\varphi(v) = v -1/\lambda(v)$ where $\lambda(v) = \frac{f(v)}{1-F(v)} = -\log'(1-F(v))$ is the hazard rate function. Since $1-F$ is log-concave, the hazard function $\lambda(v)$ is increasing which implies that
$\varphi$ is strictly increasing. Indeed, by this argument $\varphi'(v)>1$.
\subsection{Proof of Lemma~\ref{lem:g-Lipschitz}}\label{app:g-Lipschitz}
Recalling the definition $g(v) = v+\varphi^{-1}(-v)$, we have $g'(v) = 1-1/\varphi'(\varphi^{-1}(-v))$. Since $\varphi$ is strictly increasing by Lemma~\ref{lem:phi-monotone},
we have $g'(v)<1$. The claim $g'(v)>0$ follows if we show $\varphi'(\varphi^{-1}(-v)) > 1$. For this we refer to the proof of Lemma~\ref{lem:phi-monotone}, where we showed 
that $\varphi'(v)>1$ for all $v$.
\subsection{Proof of Lemma~\ref{lem:hessian}}\label{proof:hessian}
Let $\phi(v)$ and $\Phi(v)$ respectively denote the density and the distribution function of standard normal variable.
Function $h_t$ and its derivatives read as 
\begin{align}
r_t(p) &= p\Big(1-\Phi\Big(\frac{p-x_t\cdot \tth}{\sigma}\Big)\Big)\,,\\
r_t'(p)&= 1-\Phi\Big(\frac{p-x_t\cdot \tth}{\sigma}\Big) - \frac{p}{\sigma}\phi\Big(\frac{p-x_t\cdot \tth}{\sigma}\Big)\,,\\
r_t''(p)&= \frac{1}{\sigma} \Big[ \frac{p}{\sigma} \Big(\frac{p-x_t\cdot \tth}{\sigma} \Big) - 2\Big] \phi\Big(\frac{p-x_t\cdot \tth}{\sigma}\Big)\,.
\end{align}
Define $\xi \equiv p^*_t - x_t\cdot \tth = g(x_t\cdot \tth) - x_t\cdot \tth$. Writing $r_t''(p^*_t)$ in term of $\xi$, we obtain
\begin{align}
r''_t(p^*_t) = \frac{1}{\sigma} \Big[\frac{\xi}{\sigma} \Big(\frac{1-\Phi(\xi/\sigma)}{\phi(\xi/\sigma)}\Big) - 2 \Big] \phi(\xi/\sigma)
\end{align}
By tail bound inequality for Gaussian distribution $1-\Phi(\xi/\sigma) \le (\sigma/\xi) \phi(\xi/\sigma)$ for $\xi\ge 0$. Therefore,
\begin{align}\label{eq:t1}
\frac{\xi}{\sigma} \Big(\frac{1-\Phi(\xi/\sigma)}{\phi(\xi/\sigma)}\Big) - 2 \le -1\,,
\end{align}
and the same bound obviously holds for $\xi<0$.

By definition of function $g$, $|\xi| \le 3\l1u$ with $\varphi$ being the virtual valuation fusion corresponding to the
Gaussian distribution.
Hence, $\phi(\xi/\sigma) \ge \phi(3\l1u/\sigma)$. Putting this together with~\eqref{eq:t1}, we get $r_t''(p^*_t) \le -c_1$ with 
$c_1 =(1/\sigma) \phi(3\l1u/\sigma)$.

For the second part of the Lemma statement, set $\delta \le \min\Big\{3\l1u,{\sigma^2}{/(18\l1u)},\sigma^2\phi(3\l1u/\sigma)\Big\}$. 
For $p\in [p^*_t-\delta,p^*_t+\delta]$, we have
\begin{align}
\Big|\frac{p}{\sigma} \Big(\frac{p-x_t\cdot \tth}{\sigma} \Big) - \frac{p^*_t}{\sigma} \Big(\frac{p^*_t-x_t\cdot \tth}{\sigma} \Big)\Big|
\le \frac{1}{\sigma^2} {|p-p^*_t|} \cdot {|p+p^*_t-x_t\cdot \tth|}\le \frac{1}{\sigma^2} \delta (5\l1u+\delta)\le \frac{1}{2}\,.\label{eq:t2}
\end{align}
Using Equation~\eqref{eq:t1} we get $p(p-x_t\cdot \tth)/\sigma^2\le-1/2$.
%
%
Further,
\begin{align}\label{eq:t3}
\Big| \phi\Big(\frac{p-x_t\cdot \tth}{\sigma}\Big) - \phi\Big(\frac{p_t^*-x_t\cdot \tth}{\sigma}\Big) \Big| \le \frac{|p-p^*_t|}{2\sigma^2} \le \frac{\delta}{2\sigma^2} 
\le \frac{1}{2}\phi(3\l1u/\sigma)\,.
\end{align}
Therefore,
\begin{align}\label{eq:t4}
 \phi\Big(\frac{p-x_t\cdot \tth}{\sigma}\Big) \ge \phi(\xi/\sigma) - \frac{1}{2} \phi(3\l1u/\sigma) \ge \frac{1}{2}\phi(3\l1u/\sigma)\,.
\end{align}
Combining~\eqref{eq:t2}, \eqref{eq:t4} we obtain
\begin{align}\label{eq:h"p}
r_t''(p) &= \frac{1}{\sigma} \Big[ \frac{p}{\sigma} \Big(\frac{p-x_t\cdot \tth}{\sigma} \Big) - 2\Big] \phi\Big(\frac{p-x_t\cdot \tth}{\sigma}\Big)
\le -\frac{1}{4\sigma} \phi(3\l1u/\sigma)=  -\frac{c_1}{4}\,. 
\end{align}
The result follows.

\subsection{Proof of Lemma~\ref{lem:Emin}}\label{proof:Emin}
Let $Z = x\cdot v$ and $\tZ = Z/\|v\|_2$. Note that $\Var(\tZ) = 1$. 
Write the expectation in terms of the tail probability 
\begin{align}\label{eq:ECDF}
\E(\min(Z^2,\delta^2)) = \int_{0}^{\delta^2} \prob(Z^2\ge t)\de t = 
 \int_{0}^{\delta^2} \prob\Big(|\tZ|\ge \frac{\sqrt{t}}{\|v\|_2}\Big)\de t\,.
\end{align}
We consider two cases:
\begin{itemize}
\item $\delta \le\|v\|_2$: The right-hand side in~\eqref{eq:ECDF} can be lower bounded as
\begin{align}\label{eq:int0}
 \int_{0}^{\delta^2} \prob\Big(|\tZ|\ge \frac{\sqrt{t}}{\|v\|_2}\Big)\de t \ge
 \int_{0}^{\delta^2} \prob\Big(|\tZ|\ge \frac{\sqrt{t}}{\delta}\Big)\de t
 =  2\delta^2 \int_{0}^{1} t\, \prob(|\tZ|\ge t) \de t 
\end{align}
In the sequel, we provide two separate lower bounds for the right-hand side.

Let $\xi \equiv \prob(|\tZ|\ge 1)$. We have 
\begin{align}\label{eq:int1}
\int_{0}^{1} t\, \prob\Big(|\tZ|\ge t\Big) \de t \ge \int_{0}^{1} t\, \xi \de t \ge \frac{\xi}{2}\,. 
\end{align}
We proceed to obtain another bound which utilizes the fact $\Var(\tZ) = 1$.
\begin{align}\label{eq:int2}
\int_{0}^{1} t\, \prob(|\tZ|\ge t) \de t  &=  \int_{0}^{\infty} t\, \prob(|\tZ|\ge t) \de t  - \int_{1}^{\infty} t\, \prob(|\tZ|\ge t) \de t \nonumber\\
&= \frac{1}{2}\Var(\tZ) -   \int_{1}^{\infty} t\, \prob(|\tZ|\ge t) \de t
\end{align}
For $t\ge 1$, we have $ \prob(|\tZ|\ge t)\le \xi$. Further, by applying Chernoff bound, we get
\begin{align*}
\prob(|\tZ|\ge t) &= 2\prob(\tZ\ge t) = 2\prob(e^{\lambda \tZ}\ge e^{\lambda t}) \le e^{-\lambda t}{\E(e^{\lambda \tZ})}\\
&= 2e^{-\lambda t} \prod_{i=1}^d \bigg(\frac{e^{\lambda \frac{v_i}{\|v\|_2}}+e^{-\lambda \frac{v_i}{\|v\|_2}}}{2}\bigg) \le 2e^{-\lambda t} \prod_{i=1}^d e^{\lambda^2 \frac{v_i^2}{2\|v\|_2^2}} = 2e^{\frac{\lambda^2}{2} - \lambda t}\,.
\end{align*}
Setting $\lambda = t$ leads to 
$\prob(|\tZ|\ge t) \le 2e^{-\frac{t^2}{2}}$. Combining these bounds into~\eqref{eq:int1}, we obtain
\begin{align}\label{eq:int3}
\int_{0}^{1} t\, \prob(|\tZ|\ge t) \de t \ge \frac{1}{2} - \int_{1}^\infty t\, \min (2e^{-\frac{t^2}{2}}, \xi) \de t = \frac{1-\xi}{2} - \xi \log \Big(\frac{2}{\xi}\Big)
\end{align}
We summarize bounds~\eqref{eq:int1} and~\eqref{eq:int3} as in
\begin{align}\label{eq:int4}
\int_{0}^{1} t\, \prob\Big(|\tZ|\ge t\Big) \de t \ge \min_{\xi\in [0,1]}\max\left(\xi, \frac{1-\xi}{2} - \xi \log \Big(\frac{2}{\xi}\Big)\right) > 0.05
\end{align}
Turning back to Equation~\eqref{eq:int0}, in this case we have
\begin{align}\label{eq:int}
 \int_{0}^{\delta^2} \prob\Big(|\tZ|\ge \frac{\sqrt{t}}{\|v\|_2}\Big)\de t \ge
0.1\delta^2
\end{align}

\item $\delta\ge \|v\|_2$: Similar to the previous case, the right-hand side in~\eqref{eq:ECDF} can be lower bounded as
\begin{align}
 \int_{0}^{\delta^2} \prob\Big(|\tZ|\ge \frac{\sqrt{t}}{\|v\|_2}\Big)\de t \ge
 \int_{0}^{\|v\|_2^2} \prob\Big(|\tZ|\ge \frac{\sqrt{t}}{\|v\|_2}\Big)\de t
 \ge 0.1{\|v\|_2^2}
\end{align}
\end{itemize}
The above two cases can be summarized as $\E(\min(Z^2,\delta^2))\ge 0.1 \min(\|v\|_2^2,\delta^2)$.


\subsection{Proof of Lemma~\ref{lem:Fano}}\label{App:Fano}
We use a standard argument that relates minimax $\ell_2$-risk in terms of the error in  multi-way hypothesis
testing problem; See e.g.~\cite{yang1999information,AFLC}. Let $\{\tilth_1, \dotsc, \tilth_m\}$ be a $\delta$-packing of set $\Omega$, meaning that
their pairwise distances are all at least $\delta$. Parameter $\delta$ is free for now and its value will be determined later in the proof. 
We further let $P_j$ denote the induced probability on market values $(v(x_1), \dotsc,v(x_T))$, conditional
on $(x_1, \dotsc, x_T)$ and for $\tth = \tilth_j$. In other words, in defining distributions $P_j$ we treat feature vectors fixed. Let $\nu$ be random variable uniformly distributed on the hypothesis set 
$\{1,2,\dotsc, m\}$ which indicates the index of the true parameter, i.e, $\nu = j$ means $\tth = \tilth_j$. 

Define $\dis(\th_1^T,\th)\equiv \sum_{t=1}^T \min(\|\th_t - \th\|^2_2,C)$ and let $\mu$ be the value of $j$ for which $\dis(\th_1^T,\tilth_j)$ is a minimum.  
Suppose that $\delta$ is chosen such that $\delta^2\le C$. If $\dis(\th_1^T,\tilth_j) <\delta^2 T/4$ then $\mu =j$, because assuming otherwise, we have $\mu = j'\ne j$, and by triangle inequality
\begin{align}
\min(\|\tilth_{j'}-\tilth_j\|_2^2,C) &\le \min(2\|\th_t - \tilth_j\|^2_2 + 2\|\th_t - \tilth_{j'}\|^2_2, C)\nonumber \\
&\le \min(2\|\th_t - \tilth_j\|^2_2, C) +\min(2\|\th_t - \tilth_{j'}\|^2_2, C)\,,
\end{align}
for all $t$, where we used the inequality $\min(a+b,c) \le \min(a,c) + \min(b,c)$ for $a,b,c \ge 0$.
Summing over $t=1, 2, \dotsc, T$, we get 
$$T\min(\|\tilth_{j'}-\tilth_j\|_2^2,C)\le 2\dis(\th_1^T,\tilth_{j}) + 2\dis(\th_1^T,\tilth_{j'})\le 4 \dis(\th_1^T,\tilth_{j}) < \delta^2 T\,,$$
where we used the assumption $\mu = j'$. But this is a contradiction because $\|\tilth_{j'}-\tilth_j\|_2\ge \delta$ (they form a $\delta$-packing of $\Omega$) and $\delta^2\le C$.

Using Markov inequality, we can write
\begin{align}
\max_{j}\, \E_{P_j} \dis(\th_1^T,\tilth_j) &\ge \frac{\delta^2 T}{4} \max_j\, \prob\Big(\dis(\th_1^T,\tilth_j) \ge\frac{\delta^2 T}{4}\Big|\nu=j\Big)\nonumber\\
&\ge \frac{\delta^2 T}{4m} \sum_{j=1}^m \prob(\mu\neq j|\nu =j ) = \frac{\delta^2 T}{4} \prob(\mu\neq \nu)\,.\label{eq:Fano1}
\end{align}
We use Fano's inequality to lower bound the error probability on the right-hand side. We first construct a $\delta$-packing of $\Omega$ similar to the one proposed in~\cite[proof of Theorem 1]{raskutti2011minimax}.  

Let $s = s_0/2 \le d/2$ and define
$$\cA = \{(q,0)\in \reals^{d+1}:\, q\in \{-1,0,1\}^d :\,\; \|q\|_0 = s\}\,.$$ 
As proved in~\cite[Lemma 5]{raskutti2011minimax}, there exists a subset $\tilde{\cA}\subseteq \cA$ of cardinality $|\tilde{\cA|}\ge \exp(\frac{s}{2}\log \frac{d-s/2}{s})$ such that the Hamming distance between any two elements in $\tilde{\cA}$ is at least $s/2$. 
Next, consider the set $\sqrt{\frac{2}{s}}\delta \tilde{\cA}$ for some $\delta\le \l1u/\sqrt{2s}$. whose exact value to be determined later. Then, for $q$ in this set, $\|q\|_1= \sqrt{2s} \delta \le \l1u$ and hence $\sqrt{\frac{2}{s}}\delta \tilde{\cA} \subseteq \Omega_0$. Further, for $q,q' \in \sqrt{\frac{2}{s}}\delta \tilde{\cA}$, we have the following bounds:
\begin{align}
&\|q-q'\|_2^2\ge \delta^2\,,\label{eq:l2-q1}\\
&\|q-q'\|_2^2\le 8\delta^2\,. \label{eq:l2-q2}
\end{align}
By~\eqref{eq:l2-q1}, the set $\sqrt{\frac{2}{s}}\delta \tilde{\cA}$ forms a $\delta$-packing for $\Omega_0$ with size $|\tilde{\cA}|$.

We now turn back to bound~\eqref{eq:Fano1}.
Left-hand side can be lower bounded using Fano's inequality. We omit the details here as it is a standard argument and instead we refer to~\cite[proof of Theorem 1]{raskutti2011minimax} for details. Using Fano's inequality and bound~\eqref{eq:l2-q2}, we get
\begin{align}
\prob(\mu\neq \nu) = 1- \frac{\frac{8T}{2\sigma^2}\delta^2+\log (2)}{\frac{s}{2}\log (\frac{d-s/2}{s})}\,.
\end{align}
Choosing $\delta^2 \le \delta^2_1\equiv \frac{\sigma^2 s}{32T} \log (\frac{d-s/2}{s})$, we obtain $\prob(\mu\neq \nu) \ge 1/4$.
Therefore, setting $\delta^2 = \min(\frac{\l1u^2}{{2s}}, \delta^2_1,C)$ and combining with bound~\eqref{eq:Fano1}, we conclude that
\begin{align}
\min_{\th_1^T}\, \max_{\tth\in \Omega_0} \,\E( \dis(\th_1^T,\th_0)) \ge \frac{\delta^2 T}{16} =  
\frac{1}{16}\min\bigg\{\frac{\l1u^2 T}{2s}, \frac{\sigma^2 s}{32} \log \Big(\frac{d-s/2}{s}\Big),{CT}\bigg\}\,. \label{eq:l2risk-3}
\end{align}
Now since $s = s_0/2\le d/2$, we have $\log ((d-s/2)/s) \ge c \log (d/s)$ with some constant $c>0$. Therefore, by using Equation~\eqref{eq:l2risk-3} and substituting for $s = s_0/2$, we obtain
\begin{align}
\min_{\th_1^T}\, \max_{\tth\in \Omega_0} \,\E( \dis(\th_1^T,\th_0)) \ge L_1\,,
\end{align}
with 
\begin{align}
L_1 \equiv \frac{1}{16}\min\bigg\{\frac{\l1u^2 T}{s_0}, \frac{c\sigma^2 s_0}{64} \log \Big(\frac{d}{s_0}\Big),{CT}\bigg\}\,.\label{eq:L1}
\end{align}

We next derive another separate lower bound for minimax risk, by assuming that an oracle gives us the true support of $\tth$.
In this case, the least square estimator, applied to the observed features restricted to the true support $S$, achieves the optimal minimax $\ell_2$ rate. 
This implies that $\|\th_t-\tth\|_2^2 \ge c\sigma^2 s_0/t$, for  $t\ge s_0$ and a constant $c>0$. Therefore,
\begin{align}
\min_{\th_1^T}\, \max_{\tth\in \Omega_0} \,\E( \dis(\th_1^T,\th_0)) \ge 
\sum_{t=1}^T \min\Big(c\sigma^2\frac{s_0 }{t},C\Big)\,, 
\end{align}
from which we obtain
\begin{align}
\min_{\th_1^T}\, \max_{\tth\in \Omega_0} \,\E( \dis(\th_1^T,\th_0)) \ge  L_2\equiv  c' s_0\, \log(T/s_0)\,, \label{eq:l2risk-4}
\end{align}
for some constant $c'>0$, depending on $\sigma$ and $C$.

Combining bounds in~\eqref{eq:L1} and \eqref{eq:l2risk-4}, we have
\begin{align}
\min_{\th_1^T}\, \max_{\tth\in \Omega_0} \,\E( \dis(\th_1^T,\th_0)) &\ge \frac{1}{2} (L_1+L_2)\nonumber\\
&\ge \tC \bigg\{ s_0 \log \Big(\frac{T}{s_0}\Big)+ \min\bigg[\frac{T}{s_0},  s_0 \log \Big(\frac{d}{s_0}\Big)\bigg]\bigg\} \,, \label{eq:l2risk-5}
\end{align}
for a constant $\tC$ that depends on $C, \sigma, \l1u$.
The proof is complete.
%
%
%
%

\subsection{Proof of Lemma~\ref{lem:ass-equiv}}\label{app:ass-equiv} 
We recall the notion of \emph{$0$-property} established by~\cite{ponomarev1987submersions}.
\begin{definition}
A continuous function has the \emph{$0$-property}, if the pre-image of any set of probability zero is a set of probability zero.
\end{definition}
As proved in~\cite[Theorem 1]{ponomarev1987submersions}, if a function $\phi:\cX\subseteq \reals^d \mapsto \reals^d$ is continuously differentiable, 
then it satisfies $0$-property if and only if its derivative $\cD\phi$ is full rank for almost all $x\in\cX$. Therefore, we need to show that under Assumption~\ref{ass1}, if $\phi$ has $0$-property, then Assumption~\ref{ass2-2} holds true. 

Supposing otherwise, there exists a nonzero $v\in \reals^d$ such that $v^\sT \Sigma_{\phi} v = 0$. Therefore, $\E((z\cdot \phi(x))^2) = 0$ which implies that $z\cdot \phi(x) = 0$, almost surely. Define $S\equiv \{z\in \reals^d:\, z\cdot \phi(x) = 0\}$.  Space $S$ is $(d-1)$-dimensional and all the points in $\phi(\cX)$ belong to $S$ almost surely, i.e., $\prob(\phi(\cX)\cap S^c) = 0$. However, since $\Sigma$ is positive definite (with all of its eigenvalues target than $C_{\min}$, by Assumption~\ref{ass1}), $\prob_X(S)= 0$. Combining these observations, $\prob(\phi(\cX)) \le \prob(S) + \prob(\phi(\cX)\cap S^c) = 0$. Since $\phi$ has the $0$-property, this implies that $\prob_X(\cX) = 0$, which is a contradiction because $\cX$ is the support of $\prob_X$ and thus $\prob_X(\cX) = 1$.
The result follows.
\subsection{Proof of Lemma~\ref{lem:opt-price}}\label{app:opt-price}
Under model~\eqref{eq:model3}, a purchase occurs at time $t$ with the posted price price $p$ if $\tv_t \ge \beta_0 p$. This is equivalent to $\tz_t \ge \beta_0 p - \tx_t\cdot \p_0$. Therefore, the expected revenue from a posted price $p$ is given by
\begin{align}
p\times \prob(\tv_t\ge \beta_0 p) = p(1-F(\beta_0 p-\p_0\cdot \tx_t))\,.
\end{align}
By setting the first order conditions, the optimal price $p^*_t$ is given by the solution of the following equation:
\begin{align}
\beta_0 p^*_t = \frac{1-F(\beta_0p^*_t - \p_0 \cdot \tx_t)}{ f(\beta_0p^*_t - \p_0\cdot \tx_t)}\,.
\end{align}
It is straightforward to verify that the solution $p^*_t$ of the above equation is given by $p^*_t = (1/\beta_0) g(\p_0 \cdot \tx_t)$.
\subsection{Proof of Proposition~\ref{thm:learning2}} \label{app-learning2}
This proposition can be proved by following similar steps as in proof of Propostion~\ref{thm:learning}. Indeed, in that proof most of the steps hold for any log-concave loss function and, in particular, for the quadratic loss, with $u_{\l1u} = 2(K+\l1u)$ and $\ell_{\l1u} = 2$. The only difference is that in Proposition~\ref{thm:learning}, we had the negative log-likelihood loss function $\cL$ and we used the observation that the expected loss vanishes at the true model parameters. Namely, we had $\nabla \cL(\mu) = (1/n)\sum_{t=1}^n \xi_t(\mu)\tx_t$ and we showed that $\E(\xi_t(\p_0)) = 0$, from which we derived the high probability bound on $\|\nabla \cL(\p_0)\|_\infty$. (See definition of event $\cF$ given by~\eqref{eq:F}).  

We show that a similar property holds for the quadratic loss function~\eqref{quad-loss}. To see this, recall that in the exploration phases the prices are drawn uniformly at random from the interval $[0,K]$. Therefore, $\E(y_t|v_t) = \prob(v_t\ge p_t|v_t) = v_t/K$. Letting $\xi_t (\p) = 2 (Ky_t - \tx_t\cdot \p)$, we have $\nabla \cL(\mu) = {1}/(ck) \sum_{t\in\cA_k}^n \xi_t(\mu)\tx_t$ and
\begin{align}
\E(\xi_t(\p_0)) = 2\E(K\E(y_t|v_t) - \tx_t\cdot \p_0) =2\E(v_t - \tx_t\cdot \p_0) = 2\E(z_t) = 0\,,
\end{align}
where in the fist step, the inner expectation is with respect to price $p_t$. Given that $\{(\tx_t,z_t)\}_{t\ge1}$ are independent across $t$, by applying Azuma-Hoeffding inequality and a union bonding over $d$ coordinates of features, we obtain that 
\begin{align}
\prob\left(\|\nabla \cL(\p_0)\|_\infty \ge 4(K+\l1u) \sqrt{\frac{\log d}{ck}} \right) \ge 1 - \frac{1}{d}\,.
\end{align}
The rest of the proof is similar to the proof of Proposition~\ref{thm:learning} and is omitted.

\subsection{Proof of Lemma~\ref{lem:S-term2}}\label{proof:lem-S-term2}
Recall the notation $\|E^{(k)}\|_\infty = \max_{i,j} |E^{(k)}_{ij}|$. Note that 
\begin{align}
\<v,E^{(k)} v\> &= \sum_{i,j=1}^{d+1} |E^{(k)}_{ij}| |v_i| |v_j| \le \|E^{(k)}\|_\infty \sum_{i,j=1}^{d+1} |v_i|\, |v_j|\nonumber\\
&\le \|E^{(k)}\|_\infty \left( \sum_{i=1}^d |v_i|\right)^2 = \|E^{(k)}\|_\infty \|v\|_1^2\,.\label{eq:S-term2-1}
\end{align}
Therefore, we only need to bound $\|E^{(k)}\|_\infty$. Fix $1  \le i, j\le d+1$. We then have
\begin{align}
E^{(k)}_{ij} = \tSigma^{(k)}_{ij} - S^{(k)}_{ij} = \frac{1}{\tau_{k}} \sum_{\ell=1}^ {\tau_k} \Big\{ \tX_{\ell i} \tX_{\ell j} - \E(\tX_{\ell i} \tX_{\ell j})\Big\}\,.
\end{align} 
Let $u^{(ij)}_{\ell} = \tX_{\ell i} \tX_{\ell j}$. Then, $|u^{(ij)}_\ell| \le1$ because $\|\tx_\ell\|_\infty \le 1$. By applying Hoeffding's inequality,
\begin{align}
\prob\left(|E^{(k)}_{ij}| \ge 3 \sqrt{\frac{\log d}{\tau_k}}\right) \le \frac{2}{d^4}\,.
\end{align}  
Therefore, by union bonding over all indices $1\le i,j\le d+1$, we obtain that $\|E^{(k)}\|_\infty \le 3\sqrt{(\log d)/\tau_k}$, with probability at least $1 - 8/d^2$.
The claim follows from this result along with~\eqref{eq:S-term2-1}.  

\medskip
\medskip
\paragraph{\bf Acknowledgments}
Authors are thankful to Arnoud den Boer and Paat Rusmevichientong for their suggestions that improved this work. A. J. would also like to acknowledge the financial support of the Office of the Provost at the University of Southern California through the Zumberge Fund Individual Grant Program. Authors are supported in part by a Google Faculty Research Award.

\bibliographystyle{amsalpha}
\bibliography{dynamicpricing}
\end{document}